\newtheorem{algo}{Algorithm}
\newtheorem{theorem}{{Theorem}}
\newtheorem{proposition}{{Proposition}}
\newtheorem{corollary}{{Corollary}}
\newtheorem{lemma}{{Lemma}}
\newtheorem{remark}{{Remark}}
\newtheorem{definition}{{Definition}}
\newtheorem{example}{{Example}}
\newcommand{\bmb}[1]{\bar{\bm{#1}}}
\newcommand{\bmh}[1]{\hat{\bm{#1}}}
\newcommand{\rv}[1]{\mathbf{#1}}
\newcommand{\rvu}[1]{\MakeUppercase{\bm #1}}
\newcommand{\rvb}[1]{\bar{\rv{#1}}}
\newcommand{\bb}[1]{\mathbb{#1}}
\renewcommand{\cal}[1]{\mathcal{#1}}
\newcommand{\bfit}[1]{\textbf{\textit{#1}}}
\newcommand{\Tr}{\operatorname{Tr}}
\newcommand{\R}{\mathbb{R}}
\newcommand{\E}{\mathbb{E}}
\newcommand{\D}{\mathbb{D}}
\renewcommand{\P}{\mathbb{P}}
\newcommand{\Ph}{\hat{\mathbb{P}}}
\newcommand{\Pnh}{\hat{\mathbb{P}}_n}
\newcommand{\Pnb}{\bar{\mathbb{P}}_n}
\newcommand{\Po}{\mathbb{P}_0}
\newcommand{\Pon}{\mathbb{P}^n_0}
\newcommand{\Pb}{\bar{\mathbb{P}}}
\renewcommand{\d}{\mathrm{d}}
\newcommand{\stp}{\hfill $\square$}
\newcommand{\ascvg}{\overset{a.s.}{\longrightarrow}}
\newcommand{\pcvg}{\overset{p}{\longrightarrow}}
\newcommand{\dcvg}{\overset{d}{\longrightarrow}}
\newcommand{\mub}{\bar \mu}
\newcommand{\grad}{\nabla}
\newcommand{\captext}[1]{\texorpdfstring{#1}{}}
\definecolor{hl-bg-color}{RGB}{255,255,215}
\definecolor{new-magenta}{RGB}{255,0,255}
\newcommand{\tabincell}[2]{\begin{tabular}{@{}#1@{}}#2\end{tabular}}
\newcommand{\quotemark}[1]{``#1''}
\DeclareMathOperator*{\argmin}{argmin}
\newcommand{\defeq}{\coloneqq}
\begin{document}
\newpage
\title{Learning Against Distributional Uncertainty: On the Trade-off Between Robustness and Specificity}

\author{Shixiong Wang,~
        Haowei Wang,
        Xinke Li,
        and Jean Honorio
\thanks{S. Wang is with the Institute of Data Science, National University of Singapore, Singapore 117602 (E-mail: s.wang@u.nus.edu). 

H. Wang is with Rice-Rick Digitalization, Singapore 179098 (E-mail: haowei\_wang@ricerick.com).

X. Li is with the Department of Data Science, City University of Hong Kong, Kowloon, Hong Kong. (E-mail:  xinkeli@cityu.edu.hk). 

J. Honorio is with School of Computing and Information Systems, The University of Melbourne, and
ARC Training Centre in Optimisation Technologies, Integrated Methodologies, and Applications (OPTIMA) (Email: jean.honorio@unimelb.edu.au).
}

\thanks{This research/project is supported by the National Research Foundation Singapore and DSO National
Laboratories under the AI Singapore Programme (AISG Award No: AISG2RP-2020-018).}

\thanks{\textit{Corresponding Author: Haowei Wang.}}
}

\maketitle

\begin{abstract}
Trustworthy machine learning aims at combating distributional uncertainties in training data distributions compared to population distributions. 
Typical treatment frameworks include the Bayesian approach, (min-max) distributionally robust optimization (DRO), and regularization.
However, three issues have to be raised: 1) the prior distribution in the Bayesian method and the regularizer in the regularization method are difficult to specify; 2) the DRO method tends to be overly conservative; 3) all the three methods are biased estimators of the true optimal cost.
This paper studies a new framework that unifies the three approaches and addresses the three challenges above. 
The asymptotic properties (e.g., consistencies and asymptotic normalities), non-asymptotic properties (e.g., generalization bounds and unbiasedness), and solution methods of the proposed model are studied. The new model reveals the trade-off between the robustness to the unseen data and the specificity to the training data. Experiments on various real-world tasks validate the superiority of the proposed learning framework. 
\end{abstract}

\begin{keywords}
Generalization Error, Distributional Robustness, Bayesian Nonparametrics, Regularization.
\end{keywords}

\section{Introduction} \label{sec:introdction}
Supervised statistical machine learning can be modeled by the following optimization problem \cite{kuhn2019wasserstein,shafieezadeh2019regularization}:
\begin{equation}\label{eq:true-opt}
	\min_{\bm x \in \cal X}~\E_{\rv \xi \sim \Po} h(\bm x, \rv \xi), 
\end{equation}
in which $\bm x \in \cal X \subseteq \R^l$ is the decision vector and $\rv \xi \in \Xi \subseteq \R^k$ is the random parameter whose underlying distribution is $\Po$; the cost function is denoted by $h: \cal X \times \Xi \to \R$ (particularly $\R_{+}$). Specifically, hypotheses are parameterized by $\bm x$ and $\rv \xi \defeq (\rv \xi_\text{in}, \rv \xi_\text{out})$ denotes a data pair where $\rv \xi_\text{in}$ and $\rv \xi_\text{out}$ denote the feature and expected response, respectively. 

In the practice of machine learning, the true population distribution $\Po$ is unknown, and the empirical distribution $\Pnh := \frac{1}{n} \sum^n_{i=1} \delta_{\rv \xi_i}$, where $\delta_{\rv \xi_i}$ is the Dirac distribution concentrated at the point $\rv \xi_i$, constructed by $n$ independent and identically distributed (i.i.d.) samples $\{\xi_i\}_{i\in[n]}$ is the most common estimate of $\Po$. As a result, we can use the data-driven \textbf{nominal model} \cite{kuhn2019wasserstein}
\begin{equation}\label{eq:erm-method}
	\min_{\bm x \in \cal X} \E_{\rv \xi \sim \Pnh}h(\bm x, \rv \xi)
\end{equation}
as an approximation to \textbf{true model} \eqref{eq:true-opt} to find the optimal decision. In the literature, \eqref{eq:erm-method} is known as an empirical risk minimization (ERM) model or a sample-average approximation (SAA) model. However, there exists a distributional mismatch (i.e., \textbf{distributional uncertainty}) between $\Pnh$ and $\Po$ due to scarce data and the approximation error of \eqref{eq:erm-method} to \eqref{eq:true-opt} vanishes only as $n \to \infty$. Neglecting such distributional uncertainty in $\Pnh$ may cause significant performance degradation: $\E_{\Po}h(\bmh x_n, \rv \xi)$ may be significantly larger than $\min_{\bm x} \E_{\Po}h(\bm x, \rv \xi)$ due to overfitting, where $\bmh x_n$ solves \eqref{eq:erm-method}. Mitigating the adverse impact resulting from the distributional uncertainty in $\Pnh$ and controlling the generalization error $\E_{\Po} h(\bm x^\star, \rv \xi) - \E_{\Pnh} h(\bm x^\star, \rv \xi)$ by selecting a promising decision $\bm x^\star$, in $\Pon$-probability or in $\Pon$-expectation, lie in the core of trustworthy machine learning, where $\Pon$ is the joint distribution of $n$ i.i.d. training samples.

\subsection{Literature Review}
Bayesian methods \cite{ferguson1973bayesian,ghosal2017fundamentals} are the first choice to deal with the distributional mismatch in $\Pnh$. Suppose $\cal C$ is a family of admissible distributions on the measurable space $(\Xi, \cal B_{\Xi})$ where $\cal B_{\Xi}$ denotes the Borel $\sigma$-algebra on $\Xi$; in the literature, $\cal C$ is also called an \textbf{ambiguity set}. For instance, in consideration of the nominal problem \eqref{eq:erm-method}, $\cal C$ can be defined as a closed distributional ball with center $\Pnh$ and radius $\epsilon_n$, that is, $\cal C := B_{\epsilon_n}(\Pnh)$. Bayesian approaches attempt to design a probability measure $\bb Q$ on $(\cal C, \cal B_{\cal C})$, where $\cal B_{\cal C}$ denotes the Borel $\sigma$-algebra on $\cal C$ \cite{gaudard1989sigma}, and the following Bayesian counterpart for the nominal problem \eqref{eq:erm-method} is solved:
\begin{equation}\label{eq:bayesian-method}
\min_{\bm x \in \cal X} \E_{\P \sim \bb Q} \E_{\rv \xi \sim \P} h(\bm x, \rv \xi).
\end{equation}
In this case, the true population distribution $\Po$ is expected to be included in $\cal C$ and an ideal $\bb Q$ should be the one that lets the distributions in $\cal C$ concentrate at $\Po$. Namely, $\Po$ is the element most likely to be sampled from $\cal C$ according to $\bb Q$. Under some mild technical conditions, we can find a point $\P' \in \cal C$ satisfying $\E_{\bb Q} \E_{\P} h(\bm x, \rv \xi) = \E_{\P'} h(\bm x, \rv \xi)$ for all $\bm x$ (see Lemma \ref{thm:bayeisan-mean-dist}). Hence, essentially, Bayesian methods tell us how to locate the \quotemark{best} candidate in $\cal C$. If $\P'$ is closer to $\Po$ than $\Pnh$ to $\Po$, the Bayesian method \eqref{eq:bayesian-method} would have a smaller approximation error for \eqref{eq:true-opt} than the nominal method \eqref{eq:erm-method} would have; examples and justifications can be accessed in, e.g., \cite{wu2018bayesian,anderson2020can}. Note that either (resp. both) $\bb Q$ or (resp. and) $\P$ can be parametric distributions.

Regularization approaches are another promising choice to hedge against the distributional uncertainty in $\Pnh$ \cite[Sec.~A1.3]{vapnik1998statistical}. To be specific, a regularization term $f(\bm x)$ is employed and the regularized counterpart
\begin{equation}\label{eq:regularization-method}
\min_{\bm x \in \cal X} \E_{\rv \xi \sim \Pnh} h(\bm x, \rv \xi) + \lambda f(\bm x)
\end{equation}
for the nominal empirical risk minimization problem \eqref{eq:erm-method} is studied, in which $\lambda \ge 0$ is a balancing coefficient. For example, the regularizer $f(\bm x)$ can be a proper norm $\|\bm x\|$ on $\cal X$ \cite[Chap.~7]{goodfellow2016deep}, and $\lambda$ may depend on the sample size $n$. Regularization methods are believed to be able to work against \quotemark{overfitting} and reduce generalization errors in a great number of learning problems; one may reminisce about the \quotemark{bias-variance trade-off} in the machine learning literature \cite[Sec.~2.9]{hastie2009elements}. The rationale of the regularization methods can also be quantitatively justified from many other perspectives such as the measure concentration inequalities \cite{zhang2021concentration}, the stability properties of the learning algorithms \cite[Sec.~5.2]{bousquet2002stability}, and the PAC-Bayesian learning \cite[Sec.~2]{germain2016pac}, to name a few. The key point is that the regularized SAA cost $\E_{\Pnh} h(\bm x, \rv \xi) + \lambda f(\bm x)$ can be an upper bound of the unknown true cost $\E_{\Po} h(\bm x, \rv \xi)$ for all $\bm x$, and therefore, by minimizing the regularized SAA cost, the unknown true cost can also be controlled. However, the SAA cost $\E_{\Pnh} h(\bm x, \rv \xi)$ solely cannot serve as an upper bound of the true cost. 

The (min-max) distributionally robust optimization (DRO) counterpart
\begin{equation}\label{eq:dro-method}
\min_{\bm x \in \cal X} \max_{\P \in \cal C} \E_{\rv \xi \sim \P} h(\bm x, \rv \xi)
\end{equation}
for the nominal model \eqref{eq:erm-method} is another potential approach to handle the distributional uncertainty in $\Pnh$ \cite{rahimian2022frameworks,kuhn2024distributionally,levy2020large}. If the distributional family $\cal C$ contains the true distribution $\Po$, then the inequality $\E_{\Po} h(\bm x, \rv \xi) \le \max_{\P \in \cal C} \E_{\P} h(\bm x, \rv \xi)$ holds for all $\bm x$, and therefore, by minimizing the robust cost $\max_{\P \in \cal C} \E_{\P} h(\bm x, \rv \xi)$, the unknown true cost can also be controlled; for more interpretations and justifications of the DRO method, see  \cite{kuhn2019wasserstein,kuhn2024distributionally}.
According to, e.g., \cite{yue2021linear,esfahani2018data}, we can find a point $\P'$ in $\cal C$ such that $\max_{\P \in \cal C} \E_{\P} h(\bm x, \rv \xi) = \E_{\P'} h(\bm x, \rv \xi)$, for all $\bm x$, if some mild technical conditions on the function $h$ can be satisfied. Therefore, as an alternative to the Bayesian approach \eqref{eq:bayesian-method}, the DRO approach \eqref{eq:dro-method} chooses the \quotemark{best} candidate $\P'$ in $\cal C$ from another perspective. 
However, in the practice of DRO methods, elegantly specifying the size parameter $\epsilon_n$ of the employed ambiguity set $\cal C := B_{\epsilon_n}(\Pnh)$ is not easy because the radius can be neither too large nor too small. A small radius cannot guarantee $\Po$ to be included in $\cal C$. Consequently, the worst-case cost $\max_{\P \in \cal C} \E_{\P} h(\bm x, \rv \xi)$ cannot provide an upper bound for the unknown true cost. Conversely, if the radius is too large, the DRO methods would become overly conservative and the upper bound of the true cost specified by $\max_{\P \in \cal C} \E_{\P} h(\bm x, \rv \xi)$ may be extremely loose. 
In the DRO literature, typical design methods for $\epsilon_n$ and their drawbacks are as follows.
\begin{enumerate}
	\item The measure concentration bounds in, e.g., \cite{fournier2015rate} and \cite{kuhn2019wasserstein}, are just theoretical results, far away from practical utilization, because the involved constants depend on the true underlying distributions, which are unknown. In addition, measure concentration bounds are not tight. Third, measure concentration bounds are dependent on the dimension of $\rv \xi$, and therefore, they may face the curse of dimensionality \cite{gao2022finite}.
	\item Practical methods such as cross-validation \cite[p. 156]{esfahani2018data} and bootstrap \cite[p. 158]{esfahani2018data} are reliable if and only if the data size $n$ is sufficiently large. When $n$ is small, they may not work well \cite{chernick2011bootstrap,varoquaux2018cross}.
	\item Statistical inference methods presented in \cite{blanchet2019robust,blanchet2021statistical} also require $n$ to be large because the optimality of the presented methods is established in the asymptotic sense (i.e., when $n \to \infty$).
\end{enumerate}
According to, e.g., \cite[Thm. 10]{kuhn2019wasserstein}, \cite{shafieezadeh2019regularization}, under some technical conditions, the DRO approach \eqref{eq:dro-method} amounts to a regularized empirical risk minimization method \eqref{eq:regularization-method}, which also advocates why the DRO approach \eqref{eq:dro-method} is able to combat overfitting and provide excellent generalization performance.

\subsection{Research Gaps and Motivations}\label{subsec:challenges}
It is practically uneasy to specify prior distribution $\bb Q$ in Bayesian method \eqref{eq:bayesian-method}, regularizer $f(\bm x)$ in regularization method \eqref{eq:regularization-method}, and radius $\epsilon_n$ of distributional ball $B_{\epsilon_n}(\Pnh)$ in DRO method \eqref{eq:dro-method}. The three quantities cannot be arbitrarily specified, otherwise, the performances of the three associated methods cannot be guaranteed. For example, as explained before, $\epsilon_n$ can be neither too large nor too small. Therefore, the first motivation of this work is to design a new framework that frees us from the elaborate selection of prior distribution $\bb Q$, regularizer $f(\bm x)$, and radius $\epsilon_n$.

In addition, the DRO approach, SAA approach, and regularized SAA approach are biased estimators of the true optimal objective value \eqref{eq:true-opt} when $n$ is finite; the biases only vanish asymptotically (i.e., as $n \to \infty$). Hence, the second motivation of this work is to design a new model that is able to be unbiased for finite $n$, which brings the asymptotic statistical property to finite-sample learning.

\subsection{Contributions}
The contributions of this paper can be summarized as follows.
\begin{enumerate}
	\item A new framework that can combat the distributional uncertainty in $\Pnh$ is designed; see Section \ref{sec:new-model}, and Models \eqref{eq:BDR-opt} and \eqref{eq:bdr-method}. The framework generalizes Bayesian method \eqref{eq:bayesian-method}, regularization method \eqref{eq:regularization-method}, and DRO method \eqref{eq:dro-method} and suggests the instructions in designing $\bb Q$ and $f(\bm x)$; see Remark \ref{rem:BDR-interpret}. In addition, the framework reveals the trade-off between the robustness to the unseen data (i.e., the adverse distributional uncertainty in $\Pnh$) and the specificity to the training data (i.e., the exploitable empirical information in $\Pnh$); see Remark \ref{rem:trade-off}. Moreover, the framework can diminish the conservatism, and therefore improve the performance, of the DRO method; see Theorem \ref{thm:gen-err}, Remark \ref{rem:BDR-better-DRO}, and Examples \ref{ex:gen-err} and \ref{ex:gen-err-lin-reg}. Statistical properties of the new learning model such as consistencies, asymptotic normalities, generalization bounds, and unbiasedness are established; see Theorems \ref{thm:asym-properties}, \ref{thm:gen-err}, and \ref{thm:unbiasedness}.
	
	\item The proposed new model is specifically studied under the $\phi$-divergence and Wasserstein distributional balls, and respective solution methods are derived; see Section \ref{subsec:solution-method}. In particular, the solutions disclose two important insights from the perspective of data augmentation (see Examples \ref{rem:weight-modification} and \ref{rem:data-augmentation}), which intuitively explain the flexibility of the proposed learning model.
\end{enumerate}

\section{Notations and Preliminaries}\label{sec:prelimilary}
Notations used in this paper are summarized in Appendix \ref{append:notations}. Necessary DRO theories are reviewed in Appendices \ref{subsec:distributional-balls} and \ref{subsec:wasserstein-reformulation}. Statistical concepts including Glivenko--Cantelli class, Donsker class, and Brownian bridge are presented in Appendix \ref{append:statistical-concepts}. 
In this section, we focus on a reformulation of the Bayesian model \eqref{eq:bayesian-method}. We start with the concept of mean distribution.
\begin{definition}[Mean Distribution]\label{def:bayeisan-mean-dist}
	A distribution $\Pb$ satisfying $\Pb(E) = \int_{\R} \P(E) \bb Q(\d \P(E)),~\forall E \in \cal B_{\Xi}$ is a mean distribution of $\bb P$ under $\bb Q$. \stp
\end{definition}
Namely, the mean distribution is a mixture of distributions in $\cal C$ with weights determined by $\bb Q$. 
To be specific, for an event $E$ in $\cal B_{\Xi}$, $\P(E)$ is a random variable taking values on $\R_{+}$ and its distribution is specified by $\bb Q$. This definition can transform  Bayesian model \eqref{eq:bayesian-method}. 
\begin{lemma}[\cite{wang2022robustness-gen-error}]\label{thm:bayeisan-mean-dist}
	If $\Pb$ is the mean {distribution} of $\bb P$ under $\bb Q$ and $\E_{\bb Q}\E_{\P} |h(\bm x, \rv \xi)| < \infty$, then $\E_{\bb Q}\E_{\P}h(\bm x, \rv \xi) = \E_{\Pb} h(\bm x, \rv \xi)$ for every $\bm x$. \stp
\end{lemma}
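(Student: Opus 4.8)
The plan is to prove the identity by the textbook build-up procedure of measure theory — indicators, then simple functions, then nonnegative measurable functions, then general integrable functions — using the defining property of the mean distribution as the base case and monotone convergence to pass to the limit. Throughout, fix $\bm x$ and abbreviate $g(\rv \xi) \defeq h(\bm x, \rv \xi)$, a $\cal B_\Xi$-measurable function. For the base case I would take $g = \mathbbm{1}_E$ with $E \in \cal B_\Xi$; then the claim is nothing but the definition of $\Pb$ as the mean of $\bb Q$, namely $\E_{\Pb}\mathbbm{1}_E = \Pb(E) = \int_{\cal C}\P(E)\,\bb Q(\d \P) = \E_{\bb Q}\E_{\P}\mathbbm{1}_E$. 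Linearity of the three integrals then extends the identity to every nonnegative simple function $g = \sum_k c_k \mathbbm{1}_{E_k}$.

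Next, for an arbitrary nonnegative measurable $g$, I would choose simple functions $g_j \uparrow g$ and invoke the monotone convergence theorem three times: on the inner integral $\E_{\P}$ for $\bb Q$-almost every $\P$ (so that $\E_{\P}g_j \uparrow \E_{\P}g$), on the outer integral $\E_{\bb Q}$ (noting $\P \mapsto \E_{\P}g_j$ is measurable and nondecreasing in $j$, so $\E_{\bb Q}\E_{\P}g_j \uparrow \E_{\bb Q}\E_{\P}g$), and on $\E_{\Pb}$ (so that $\E_{\Pb}g_j \uparrow \E_{\Pb}g$). Combining these with the simple-function identity yields $\E_{\Pb}g = \lim_j \E_{\Pb}g_j = \lim_j \E_{\bb Q}\E_{\P}g_j = \E_{\bb Q}\E_{\P}g$ for all nonnegative $g$. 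Specializing to $g = |h(\bm x, \cdot)|$ shows $\E_{\Pb}|h(\bm x, \rv \xi)| = \E_{\bb Q}\E_{\P}|h(\bm x, \rv \xi)| < \infty$ by hypothesis, so every integral in sight is finite. Finally I would split $g = g^{+} - g^{-}$; both parts have finite integrals under $\Pb$ and under the iterated expectation $\E_{\bb Q}\E_{\P}$ by the previous step, so subtracting the two nonnegative identities gives $\E_{\Pb}g = \E_{\Pb}g^{+} - \E_{\Pb}g^{-} = \E_{\bb Q}\E_{\P}g^{+} - \E_{\bb Q}\E_{\P}g^{-} = \E_{\bb Q}\E_{\P}g$, which is the claim.

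The only point that is not completely routine is the measurability bookkeeping required to make the outer expectation $\E_{\bb Q}$ well defined: one must verify that $\P \mapsto \E_{\P}g$ is $\cal B_{\cal C}$-measurable at each stage. This holds for indicators because $\cal B_{\cal C}$ is, by construction, the $\sigma$-algebra generated by the evaluation maps $\P \mapsto \P(E)$ \citep{gaudard1989sigma}; it is preserved by finite linear combinations (simple functions) and by pointwise monotone limits (nonnegative $g$), so measurability propagates automatically through the build-up. I would also remark in passing — though this is part of the standing definition of the mean distribution rather than something to re-derive — that the set function $E \mapsto \Pb(E)$ is genuinely a probability measure on $(\Xi, \cal B_\Xi)$, its countable additivity being yet another instance of monotone convergence applied to the $\bb Q$-integrals of the $\P$-measures of a disjoint union. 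With these caveats dispatched, the argument above is the whole proof.
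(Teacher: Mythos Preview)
Your argument is correct: the standard measure-theoretic build-up (indicators $\to$ simple $\to$ nonnegative via monotone convergence $\to$ general via the positive/negative split) is exactly the right way to establish the identity, and you have correctly identified the hypothesis $\E_{\bb Q}\E_{\P}|h(\bm x,\rv\xi)|<\infty$ as what makes the final subtraction legitimate. The measurability remark about $\P\mapsto\E_{\P}g$ is the only genuinely nontrivial bookkeeping, and you have handled it appropriately.

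As for comparison with the paper: there is nothing to compare. The paper does not supply a proof of this lemma at all; it is stated with attribution to \citet{wang2022robustness-gen-error} and closed with a $\square$. Your write-up therefore goes beyond what the present paper offers, and would serve as a self-contained proof were one needed.
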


In terms of model \eqref{eq:bayesian-method}, the most popular choice for a non-parametric prior distribution $\bb Q$ of $\P$, in Bayesian nonparametrics, is the Dirichlet-process prior. Furthermore,  when the $n$-sample empirical distribution $\Pnh$ is considered, the posterior non-parametric distribution of $\P$ is still a Dirichlet process whose mean distribution is
$\frac{\alpha}{\alpha + n} \Ph + \frac{n}{\alpha + n} \Pnh$, 
where $\Ph$ is \textit{a priori} knowledge of $\Po$ and $\alpha \ge 0$ is employed to quantify the trust level towards $\Ph$ \cite{ferguson1973bayesian}, \cite[Chap.~3]{ghosal2017fundamentals}. Specifically, if we trust the prior $\Ph$ more than the empirical distribution $\Pnh$,  $\alpha$ should be large. 
When the Dirichlet-process prior is utilized, as a result of Lemma \ref{thm:bayeisan-mean-dist}, the Bayesian model \eqref{eq:bayesian-method} becomes
\begin{equation}\label{eq:dro-saa-obj}
	\min_{\bm x} \frac{\alpha}{\alpha + n} \E_{\Ph} h(\bm x, \rv \xi) + \frac{n}{\alpha + n} \E_{\Pnh} h(\bm x, \rv \xi).
\end{equation}
It can be generalized into
\begin{equation}\label{eq:dro-saa-obj-general}
	\min_{\bm x} \beta_n \E_{\Ph} h(\bm x, \rv \xi) + (1-\beta_n) \E_{\Pnh} h(\bm x, \rv \xi)
\end{equation}
where the weight $\beta_n \in [0,1]$ depends on sample size $n$; 
$\beta_n$ can be an arbitrary sequence satisfying $\beta_n \to 0$ as $n \to \infty$.
Model \eqref{eq:dro-saa-obj-general} serves as a foundation for the new machine learning framework that we propose subsequently.

\section{New Framework: Bayesian Distributionally Robust Learning}\label{sec:new-model}
In real-world operation, it is often difficult to specify an exact (non-parametric Bayesian) prior $\bb Q$ for a Bayesian model \eqref{eq:bayesian-method}. This motivates us to study the second-order min-max (or worst-case) Bayesian distributionally robust optimization counterpart for the nominal model \eqref{eq:erm-method}
\begin{equation}\label{eq:bayesian-method-second-order-min-max}
	\min_{\bm x} \max_{\bb Q} \E_{\P \sim \bb Q} \E_{\rv \xi \sim \P} h(\bm x, \rv \xi),
\end{equation}
which is a robustified version of the Bayesian model. In particular, model \eqref{eq:bayesian-method-second-order-min-max} is a combination of a Frequentist and a Bayesian method: The random measure $\P$ follows the second-order probability measure $\bb Q$, and therefore, in terms of $\P$, \eqref{eq:bayesian-method-second-order-min-max} is a Bayesian method; the admissible values of $\bb Q$ are only assumed to lie in an ambiguity set (which is not explicitly specified here), and therefore, in terms of $\bb Q$, \eqref{eq:bayesian-method-second-order-min-max} is a Frequentist method.

Inspired by \eqref{eq:bayesian-method-second-order-min-max},
we shall study the worst-case version of \eqref{eq:dro-saa-obj-general}:
\begin{equation}\label{eq:BDR-opt}
	\min_{\bm x \in \cal X} \beta_n \max_{\P \in B_{\epsilon}(\Ph)} \E_{\P} h(\bm x, \rv \xi) + (1-\beta_n) \E_{\Pnh} h(\bm x, \rv \xi).
\end{equation}
Note that the uncertainty in $\bb Q$ is reflected by the uncertainty in the prior estimate $\Ph$ because $\Pnh$ is completely determined given samples $\{\rv \xi_i\}_{i \in [n]}$. 

\begin{remark}[Interpretation of Model \eqref{eq:BDR-opt}]\label{rem:BDR-interpret}
	Model \eqref{eq:BDR-opt} is a Bayesian non-parametric model in terms of the data distribution $\P$ and also a Frequentist distributionally robust optimization model in terms of the distribution $\bb Q$ of the data distribution; cf. \eqref{eq:bayesian-method-second-order-min-max}. Since \eqref{eq:BDR-opt} is equivalent to
	$
	\min_{\bm x \in \cal X} \E_{\Pnh} h(\bm x, \rv \xi) + \frac{\beta_n}{1-\beta_n} \max_{\P \in B_{\epsilon}(\Ph)} \E_{\P} h(\bm x, \rv \xi),
	$
	by letting 
	$
	\lambda_n := \frac{\beta_n}{1-\beta_n}
	$
	and 
	$$
	f(\bm x) := \max_{\P \in B_{\epsilon}(\Ph)} \E_{\P} h(\bm x, \rv \xi),
	$$
    \eqref{eq:BDR-opt} can be rewritten as 
	$
	\min_{\bm x \in \cal X} \E_{\Pnh} h(\bm x, \rv \xi) + \lambda_n f(\bm x),
	$
	which is a regularized SAA model \eqref{eq:regularization-method}. Also, when $\beta_n := 1$, \eqref{eq:BDR-opt} reduces to a  DRO model \eqref{eq:dro-method}; when $\beta_n := 0$, \eqref{eq:BDR-opt} reduces to a  SAA model \eqref{eq:erm-method}. Hence, the new model \eqref{eq:BDR-opt} is a generalized model that unifies the SAA model \eqref{eq:erm-method}, the Bayesian model \eqref{eq:bayesian-method}, the regularized SAA model \eqref{eq:regularization-method}, and the DRO model \eqref{eq:dro-method}. The benefit is that \eqref{eq:BDR-opt} suggests how to design $\bb Q$ in the Bayesian method \eqref{eq:bayesian-method} and $f(\bm x)$ in the regularization method \eqref{eq:regularization-method}.
	\stp
\end{remark}

In practice, it is uneasy to specify $\Ph$. Alternatively, if the distributional ambiguity set is constructed around $\Pnh$ rather than $\Ph$, the model \eqref{eq:BDR-opt} becomes completely data-driven:
\begin{equation}\label{eq:bdr-method}
	\min_{\bm x \in \cal X} \beta_n \max_{\P \in B_{\epsilon_n}(\Pnh)} \E_{\P} h(\bm x, \rv \xi) + (1-\beta_n) \E_{\Pnh} h(\bm x, \rv \xi).
\end{equation}
This is a change-of-center trick for the employed distributional ambiguity set: Non-rigorously speaking, we are assuming $\Ph$ is contained in $B_{\epsilon_{n,1}}(\Pnh)$ and $\Pnh$ is contained in $B_{\epsilon_{n,2}}(\Ph)$ for some radii $\epsilon_{n,1}, \epsilon_{n,2} \ge 0$. We call \eqref{eq:bdr-method} a \textbf{Bayesian distributionally robust} (BDR) optimization.

\begin{remark}[Robustness-Specificity Trade-off]\label{rem:trade-off}
	Since the objective of \eqref{eq:bdr-method} balances the worst-case cost specified by DRO and the nominal cost specified by SAA, the new model \eqref{eq:bdr-method} reveals the trade-off between the robustness to the distributional uncertainty (i.e., unseen data) and the specificity to the empirical information (i.e., training data). \stp
\end{remark}

In the following, we use a linear regression example with Gaussian data distribution to intuitively explain the BDR learning framework. Consider the data generating distribution $\rv \xi \defeq [\rv \xi_{\text{in}}; \rv \xi_{\text{out}}] \sim N(\bm 0, \bm \Sigma_0)$ and the linear regression model $\rv \xi_{\text{out}} = \bm x^\top \rv \xi_{\text{in}} + e$ where $e \in \R$ denotes the regression residual. The true optimization problem $\min_{\bm x} [\bm x^\top, -1] \E_{\Po} \rv \xi \rv \xi^\top [\bm x; -1]$ admits 
\[
 \min_{\bm x \in \cal X} [\bm x^\top, -1]  \cdot  \bm \Sigma_0  \cdot  [\bm x; -1].
 \tag*{(True)}
\]
Denoting $\bmh \Sigma_n$ as the sample-estimate of $\bm \Sigma_0$, the SAA counterpart $\min_{\bm x} [\bm x^\top, -1] \E_{\Pnh} \rv \xi \rv \xi^\top [\bm x; -1]$ is 
\[
 \min_{\bm x \in \cal X} [\bm x^\top, -1]  \cdot  \bmh \Sigma_n  \cdot  [\bm x; -1].
 \tag*{(SAA)}
\]
The DRO counterpart $\min_{\bm x} \max_{\P} [\bm x^\top, -1] \E_{\P} \rv \xi \rv \xi^\top [\bm x; -1]$ under the order-$2$ Wasserstein ball $W_2(\P, \Pnh) \le \epsilon_n$ is
\[
\begin{array}{cl}
\min_{\bm x} \max_{\bm \Sigma} & [\bm x^\top, -1] \bm \Sigma [\bm x; -1] \\
\text{s.t.} & \Tr [\bm \Sigma + \bmh \Sigma_n - 2({\bm \Sigma}^{1/2} \bmh \Sigma_n {\bm \Sigma}^{1/2})^{1/2}] \le \epsilon_n^2,
\end{array}
\]
for which the von Neumann's minimax theorem holds. If $\bm \Sigma^*_n$ solves the above display ($\bm \Sigma^*_n$ may depend on $\bm x$), the DRO problem becomes
\[
 \min_{\bm x \in \cal X} [\bm x^\top, -1]  \cdot  \bm \Sigma^*_n  \cdot  [\bm x; -1].
\tag*{(DRO)}
\]
As a result, the BDR counterpart is
\[
\min_{\bm x \in \cal X} [\bm x^\top, -1] \cdot [\beta_n \bm \Sigma^*_n + (1 - \beta_n) \bmh \Sigma_n] \cdot [\bm x; -1].
\tag*{(BDR)}
\]

\section{Statistical Properties of  BDR Model \captext{\eqref{eq:bdr-method}}}\label{subsec:properties-BDR}
This subsection studies the asymptotic and non-asymptotic statistical properties of the new BDR model \eqref{eq:bdr-method} under any appropriate distributional ball $B_{\epsilon_n}(\Pnh)$, for example, the $\phi$-divergence ball or the Wasserstein ball, whose mathematical definitions can be found in Appendix \ref{subsec:distributional-balls}. 
Statistical concepts such as Glivenko--Cantelli class, Donsker class, and Brownian bridge can be found in Appendix \ref{append:statistical-concepts}; see also \cite[Chap.~19]{vdv1998asymptotic}. 
The key notations in this subsection are given in Table \ref{table:notations}.
\begin{table}[!htbp]
\small
\centering
\caption{Notation list. (``Opt. Sln.'' stands for Optimal Solution.)}
\begin{tabular}{c|c|p{0.20\textwidth}}
\hline
Notation & Definition & Mathematical Form \\
\hline
\(v(\bm{x})\) & True Cost & \(\E_{\Po} h(\bm{x}, \rv{\xi})\) \\
\(v_n(\bm{x})\) & SAA Cost & \(\E_{\Pnh} h(\bm{x}, \rv{\xi})\) \\
\(v_{r,n}(\bm{x})\) & DRO Cost & \(\max_{\P \in B_{\epsilon_n}(\Pnh)} \E_{\P} h(\bm{x}, \rv{\xi})\) \\
\(v_{b,n}(\bm{x})\) & BDR Cost & \(\beta_n \max_{\P \in B_{\epsilon_n}(\Pnh)} \E_{\P} h(\bm{x}, \rv{\xi})\) \\ & & ~~~~~~~~\(+ (1-\beta_n) \E_{\Pnh} h(\bm{x}, \rv{\xi})\) \\
\(\cal{X}_0\) & True Opt. Sln. Set & \(\argmin_{\bm{x} \in \cal{X}} v(\bm{x})\) \\
\(\hat{\cal{X}}_n\) & SAA Opt. Sln. Set & \(\argmin_{\bm{x} \in \cal{X}} v_n(\bm{x})\) \\
\(\hat{\cal{X}}_{r,n}\) & DRO Opt. Sln. Set & \(\argmin_{\bm{x} \in \cal{X}} v_{r,n}(\bm{x})\) \\
\(\hat{\cal{X}}_{b,n}\) & BDR Opt. Sln. Set & \(\argmin_{\bm{x} \in \cal{X}} v_{b,n}(\bm{x})\) \\ 
 $\bm{x}_0 $ & True Opt. Sln. & $\bm{x}_0 \in \cal{X}_0 $ \\
  $ \bmh{x}_n$ & SAA Opt. Sln.& $ \bmh{x}_n \in \hat{\cal{X}}_n$ \\
  $ \bmh{x}_{r,n} $ &DRO Opt. Sln.& $\bmh{x}_{r,n} \in \hat{\cal{X}}_{r,n}$ \\ 
  $ \bmh{x}_{b,n} $ & BDR Opt. Sln.& $  \bmh{x}_{b,n} \in \hat{\cal{X}}_{b,n}$ \\
\hline
\end{tabular}
\label{table:notations}
\end{table}

\subsection{Asymptotic Properties of \captext{\eqref{eq:bdr-method}}}
We consider the $\bm x$-parametric function class 
\begin{equation}\label{eq:class-H}
	\cal H \defeq \{h(\bm x, \cdot): \Xi \to \R | \bm x \in \cal X\}
\end{equation}
indexed by $\cal X$. The asymptotic properties of Bayesian distributionally robust model \eqref{eq:bdr-method} are given below, which illustrate the learning effectiveness when the sample size becomes infinitely large, as the generalization error approaches zero.

\begin{theorem}[Asymptotic Properties of \eqref{eq:bdr-method}]\label{thm:asym-properties}
	Consider the nominal problem \eqref{eq:erm-method} and its Bayesian distributionally robust counterpart \eqref{eq:bdr-method}.
	If the following conditions hold
	\begin{enumerate}[C1)]
		\item The DRO objective $v_{r,n}(\bm x)$ is bounded in $\Pon$-probability and attainable for $\bm x \in \cal X' \subseteq \cal X$;
		\item The weight coefficient $\beta_n \in [0, 1]$ for every $n$ and $\sqrt{n} \beta_n \to 0$ as $n \to \infty$;
		\item The function class $\cal H$ in \eqref{eq:class-H} is $\Po$-Glivenko--Cantelli;
		\item At least one of the following properties holds for the function $v(\bm x) = \E_{\Po}h(\bm x, \rv \xi)$:
		\begin{enumerate}[C4a)]
			\item $v(\bm x)$ is continuous on $\cal X'$;
			\item $v(\bm x)$ has the unique global minimizer $\bm x_0$ on $\cal X'$;
		\end{enumerate}
		\item The function class $\cal H$ in \eqref{eq:class-H} is $\Po$-Donsker;
		\item $\E_{\Po}[h(\bmh x_n, \rv \xi) - h(\bm x_0, \rv \xi)]^2 \pcvg 0$ as $\bmh x_n \pcvg \bm x_0$,\footnote{The notations $\pcvg$ and $\dcvg$ mean the convergence in probability and distribution, respectively.}
	\end{enumerate}
	then the following statements are true.
	\begin{enumerate}[S1)]
		\item \textup{(Point-Wise Consistency of Objective Function.)}
		\label{thm:consistency-func}
		For every $\bm x \in \cal X'$, we have $v_{b,n}(\bm x) \pcvg v(\bm x)$ as $n \to \infty$.
		
		\item \textup{(Consistency of Optimal Value.)}\label{thm:consistency-v}
		For every $\bmh x_{b,n} \in \hat{\cal X}_{b,n} \subseteq \cal X'$ and every $\bm x_0 \in \cal X_0 \subseteq \cal X'$, we have $v_{b,n}(\bmh x_{b,n}) \pcvg v(\bm x_0)$ as $n \to \infty$. In other words, $\min_{\bm x} v_{b,n}(\bm x) \pcvg \min_{\bm x} v(\bm x)$ as $n \to \infty$.
		
		\item \textup{(Consistency of Optimal Solution.)}
		\label{thm:consistency-s}
		The limit point of any solution sequence $\{\bmh x_{b,n}\}$ of \eqref{eq:bdr-method} is a solution of the true problem \eqref{eq:true-opt} in $\Pon$-probability: $\Po^n\{\hat{\cal X}_{b,n} \subseteq \cal X_0\}  \to 1$ as $n \to \infty$.
		
		\item \textup{(Point-Wise Asymptotic Normality of Objective Function.)}
		\label{thm:asy-normality-func}
		For every $\bm x \subseteq \cal X'$, we have $\sqrt{n}[v_{b,n}(\bm x) - v(\bm x)] \dcvg N(0, V_{v,\bm x})$ as $n \to \infty$, where
		$
		V_{v,\bm x} \defeq \D_{\Po} h(\bm x, \rv \xi)
		$ denotes the variance of $h(\bm x, \rv \xi)$ under $\Po$.
		
		\item \textup{(Asymptotic Normality of Optimal Value.)}
		\label{thm:asy-normality-v}
		For every $\bmh x_{b,n} \in \hat{\cal X}_{b,n} \subseteq \cal X'$ and every $\bm x_0 \in \cal X_0 \subseteq \cal X'$, if $\bmh x_{b,n} \pcvg \bm x_0$, we have $\sqrt{n}[v_{b,n}(\bmh x_{b,n}) - v(\bm x_0)] \dcvg N(0, V_v)$ as $n \to \infty$, where
		$
		V_v \defeq \D_{\Po} h(\bm x_0, \rv \xi).
		$
	\end{enumerate}
\end{theorem}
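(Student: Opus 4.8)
The backbone of the argument is the elementary identity
$v_{b,n}(\bm x) = v_n(\bm x) + \beta_n\bigl(v_{r,n}(\bm x) - v_n(\bm x)\bigr)$, together with the inequalities $0 \le v_n(\bm x) \le v_{r,n}(\bm x)$ for all $\bm x$ (the first because $h \ge 0$, the second because $\Pnh \in B_{\epsilon_n}(\Pnh)$) and the optimality relation $v_n(\bmh x_n) = \min_{\bm x} v_n(\bm x)$. Conditions C1 and C2 are used only to show that the ``DRO correction'' $\beta_n(v_{r,n} - v_n)$ is asymptotically negligible --- of order $o_p(1)$ for the consistency statements S1--S3 and of order $o_p(1/\sqrt n)$ after $\sqrt n$-scaling for the normality statements S4--S5 --- so that the BDR estimator inherits the classical sample-average-approximation asymptotics that C3--C6 provide (these SAA facts are standard in empirical-process theory and stochastic programming). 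I would proceed statement by statement.

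\emph{S1 and S4.} Fix $\bm x \in \cal X'$. The $\Po$-Glivenko--Cantelli property (C3), specialized to the singleton $\{h(\bm x,\cdot)\}$, gives $v_n(\bm x) \pcvg v(\bm x)$; the $\Po$-Donsker property (C5) gives $h(\bm x,\cdot)\in L^2(\Po)$ and the central limit theorem $\sqrt n\bigl(v_n(\bm x) - v(\bm x)\bigr) \dcvg N(0, \D_{\Po}h(\bm x,\rv\xi))$. Since $v_{r,n}(\bm x)$ is bounded in $\Pon$-probability (C1) and $v_n(\bm x)$ converges, the correction $v_{r,n}(\bm x) - v_n(\bm x)$ is $O_p(1)$; multiplying by $\beta_n\to 0$, respectively $\sqrt n\beta_n\to 0$ (both from C2), and applying Slutsky's theorem to the identity above yields S1 and S4.

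\emph{S2 and S3.} For S2 I would squeeze: on one side $v_{b,n}(\bmh x_{b,n}) \le v_{b,n}(\bm x_0) = v_n(\bm x_0) + \beta_n(v_{r,n}(\bm x_0) - v_n(\bm x_0)) \pcvg v(\bm x_0)$ by S1; on the other side, using $v_{b,n} \ge v_n$ and optimality of $\bmh x_n$, $v_{b,n}(\bmh x_{b,n}) \ge v_n(\bmh x_{b,n}) \ge \min_{\bm x} v_n(\bm x)$, while $\bigl|\min_{\bm x}v_n(\bm x) - \min_{\bm x}v(\bm x)\bigr| \le \sup_{\bm x\in\cal X'}|v_n(\bm x) - v(\bm x)| \pcvg 0$ by C3. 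This pins $v_{b,n}(\bmh x_{b,n}) \pcvg v(\bm x_0)$, which is S2. For S3, from $v_{b,n} \ge (1-\beta_n)v_n$ one obtains $v(\bmh x_{b,n}) \le (1-\beta_n)^{-1}v_{b,n}(\bmh x_{b,n}) + \sup_{\bm x\in\cal X'}|v_n - v| \pcvg v(\bm x_0)$ by S2 and C3, and combined with $v(\bmh x_{b,n}) \ge \min_{\bm x} v(\bm x)$ this gives $v(\bmh x_{b,n}) \pcvg \min_{\bm x}v(\bm x)$. The standard Wald-type argument then closes it: under C4a or C4b one has $\inf\{v(\bm x): \bm x\in\cal X',\ \mathrm{dist}(\bm x,\cal X_0)\ge\delta\} > \min_{\bm x} v(\bm x)$ for every $\delta>0$, which forces $\bmh x_{b,n}$ into every neighborhood of $\cal X_0$ with probability tending to one, i.e.\ $\Po^n\{\hat{\cal X}_{b,n}\subseteq\cal X_0\}\to 1$.

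\emph{S5, and the main obstacle.} Again squeeze $v_{b,n}(\bmh x_{b,n})$ between $v_n(\bmh x_{b,n})$ and $v_{b,n}(\bm x_0)$. The upper end equals $v_n(\bm x_0) + \beta_n(v_{r,n}(\bm x_0) - v_n(\bm x_0)) = v(\bm x_0) + (\Pnh-\Po)h(\bm x_0,\cdot) + o_p(1/\sqrt n)$ by C1--C2. For the lower end, write $v_n(\bmh x_{b,n}) = v(\bmh x_{b,n}) + (\Pnh-\Po)h(\bmh x_{b,n},\cdot)$ and use $v(\bmh x_{b,n})\ge v(\bm x_0)=\min_{\bm x}v(\bm x)$ together with the stochastic-equicontinuity identity $(\Pnh-\Po)\bigl(h(\bmh x_{b,n},\cdot) - h(\bm x_0,\cdot)\bigr) = o_p(1/\sqrt n)$ --- valid by C5 ($\Po$-Donsker, hence asymptotic equicontinuity) and C6 (mean-square continuity at $\bm x_0$, applicable since $\bmh x_{b,n}\pcvg\bm x_0$ by hypothesis) --- to get $v_n(\bmh x_{b,n}) \ge v(\bm x_0) + (\Pnh-\Po)h(\bm x_0,\cdot) + o_p(1/\sqrt n)$. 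Both ends thus equal $v(\bm x_0) + (\Pnh-\Po)h(\bm x_0,\cdot) + o_p(1/\sqrt n)$, so $\sqrt n\bigl(v_{b,n}(\bmh x_{b,n}) - v(\bm x_0)\bigr) = \sqrt n(\Pnh-\Po)h(\bm x_0,\cdot) + o_p(1) \dcvg N(0, \D_{\Po}h(\bm x_0,\rv\xi)) = N(0,V_v)$, which is S5. I expect the equicontinuity identity to be the crux: it is the only place C5 and C6 are genuinely needed, and it must be combined with the one-sided optimality inequalities so that the (unknown) convergence rate of $\bmh x_{b,n}$ never appears. A minor point is the reading of C1: the squeezes above are arranged so that $v_{r,n}$ is evaluated only at the fixed $\bm x_0$ and never at the random point $\bmh x_{b,n}$, so the pointwise version of C1 suffices; a uniform version of C1 would instead permit the alternative route of showing $\sup_{\bm x\in\cal X'}|v_{b,n}(\bm x)-v_n(\bm x)|=\beta_n\sup_{\bm x}(v_{r,n}(\bm x)-v_n(\bm x))=o_p(1)$ and transferring every SAA conclusion verbatim.
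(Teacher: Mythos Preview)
Your proposal is correct and follows the same overall architecture as the paper: write $v_{b,n}=v_n+\beta_n(v_{r,n}-v_n)$, use C1 and C2 to kill the DRO correction (at order $o_p(1)$ for S1--S3 and $o_p(n^{-1/2})$ for S4--S5), and then transfer the standard SAA asymptotics supplied by C3--C6. For S5 you use precisely the same sandwich as the paper, with the stochastic-equicontinuity step (from C5 and C6) on the lower side.

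The one genuine difference is how you handle $v_{r,n}$. The paper's proof of S2 and S3 proceeds via the \emph{uniform} estimate $\sup_{\bm x\in\cal X'}|v_{b,n}(\bm x)-v(\bm x)|\pcvg 0$, which requires $\sup_{\bm x\in\cal X'}|v_{r,n}(\bm x)-v(\bm x)|$ to be $O_p(1)$; similarly, in its lower bound for S5 the paper evaluates $v_{r,n}$ at the random point $\bmh x_{b,n}$ and needs $\sqrt n\,\beta_n\bigl(v_{r,n}(\bmh x_{b,n})-v_n(\bmh x_{b,n})\bigr)=o_p(1)$. Your route avoids all of this by using the pointwise inequalities $v_{b,n}\ge v_n$ and $v_{b,n}\ge(1-\beta_n)v_n$ (both consequences of $h\ge 0$ and $\Pnh\in B_{\epsilon_n}(\Pnh)$) so that $v_{r,n}$ only ever appears at the fixed point $\bm x_0$. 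Your closing remark about the pointwise-versus-uniform reading of C1 is exactly the distinction between the two arguments; the paper tacitly adopts the uniform reading, while your squeezes make the pointwise reading suffice. Both are valid, and yours is the slightly more parsimonious variant.
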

\begin{proof}
	See Appendix D-A in the supplementary materials.
\end{proof}

\begin{remark}[Practicability of Conditions]
The conditions C1)-C6) stipulated in Theorem \ref{thm:asym-properties} are not restrictive, as they can be easily fulfilled in practice; concrete examples can be found in Appendix \ref{append:condition-examples-in-theorem-asym}.
\stp
\end{remark}

Note that in conducting minimization over $\bm x$, it is sufficient to only consider the subset $\cal X'$ where objective functions are finite-valued. Note also that when the DRO objective $v_{r, n}(\bm x)$ is finite at $\bm x$, the SAA objective $v_{n}(\bm x)$ and the true objective $v(\bm x)$ will be finite as well because $\Pnh$ and $\Po$ are included in $B_{\epsilon_n}(\Pnh)$ for sufficiently large $\epsilon_n$. The asymptotic normality of the optimal solution of the BDR model \eqref{eq:bdr-method}, which requires stronger and therefore more restrictive technical conditions, is deferred to Appendix D-B in the supplementary materials.

\subsection{Non-Asymptotic Properties of \captext{\eqref{eq:bdr-method}}}
First, we discuss the one-sided generalization bound, which is a crucial non-asymptotic property in machine learning. 

DRO learning has better generalization ability than traditional ERM learning because by reducing DRO cost $v_{r,n}(\bm x)$, true cost $v(\bm x)$ can also be diminished; however, ERM cost $v_n(\bm x)$ cannot upper bound $v(\bm x)$. Nevertheless, DRO learning is usually criticized for its conservatism. Specifically, to guarantee that the true distribution $\Po$ is included in the distributional ball, the radius $\epsilon_n$ of the ball should be sufficiently large (cf. Appendix \ref{subsec:wasserstein-distance}), which leads to that for every $\bm x$, the upper bound $v_{r,n}(\bm x)$ may be extremely loose. In what follows, we show that BDR model \eqref{eq:bdr-method} can be less conservative than the DRO model when the same distributional ball (with the same radius $\epsilon_n$) is shared.

\begin{theorem}[Generalization Bound of \eqref{eq:bdr-method}]\label{thm:gen-err}
	For every $\eta \in (0,1]$ and every $\beta_n \in [\beta^*_n, 1]$, if $\Po^n [\Po \in {B}_{\epsilon_n}(\Pnh)] \geq 1-\eta$, then the true cost $v(\bm x)$ is upper bounded, with $\Pon$-probability at least $1-\eta$, by the BDR cost $v_{b,n}(\bm x)$:
	\begin{equation}\label{eq:BDR-gen-bound}
	v(\bm x) \le \beta_n v_{r,n}(\bm x) + (1 - \beta_n) v_n(\bm x),~~~\forall \bm x \in \cal X,
    \end{equation}
    where the smallest (i.e., best) value $\beta^*_n$ of $\beta_n$ satisfying the above display is
    \begin{equation}\label{eq:BDR-gen-bound-beta}
        \beta^*_n \defeq \max\left\{ \max_{\bm x \in \cal X} ~ \frac{v(\bm x) - v_n(\bm x)}{v_{r,n}(\bm x) - v_n(\bm x)},~0\right\}
    \end{equation}
    which takes values on $[0, 1]$ and we assume that $0/0 = 0$; in addition, $\beta^*_n < 1$ if one of the following conditions holds:
    \begin{enumerate}[\hspace{1em}C1)]
        \item $v_{r, n}(\bm x) > v(\bm x)$ for every $\bm x \in \cal X$;
        \item $v_n(\bm x) = v(\bm x)$ for all $\bm x \in \cal X$ such that $v_{r, n}(\bm x) = v(\bm x)$.
    \end{enumerate}
\end{theorem}
\begin{proof}
    See Appendix D-C in the supplementary materials.
\end{proof}

\begin{remark}\label{rem:BDR-better-DRO}
In Theorem \ref{thm:gen-err}, the best value $\beta^*_n$ depends on the unknown true distribution $\Po$ [via the true cost function $v(\bm x)$], which cannot be obtained in practice. This is reminiscent of the practical limitation of the DRO theory where the best radius $\epsilon^*_n$ also depends on the unknown true distribution $\Po$; see Appendix \ref{subsec:wasserstein-distance}, especially \eqref{eq:wasserstein-concentration-epsilon}. Hence, both DRO and BDR require empirical parameter tuning in real-world operation. However, Theorem \ref{thm:gen-err} suggests that whenever DRO is empirically perfectly tuned, it is possible to further improve performance by tuning the BDR parameter $\beta_n$; recall that DRO and BDR share the same distributional ball (with the same $\epsilon_n$). \stp
\end{remark}

Theorem \ref{thm:gen-err} justifies the rationale of the BDR learning \eqref{eq:bdr-method} from the perspective of generalization theory. The BDR generalization bound $v_{b,n}(\bm x)$ in \eqref{eq:BDR-gen-bound} tightens the DRO generalization bound $v_{r,n}(\bm x)$ for every distributional ball $B_{\epsilon_n}(\Pnh)$ such that $\Po \in B_{\epsilon_n}(\Pnh)$ because $v_{r,n}(\bm x) \ge v_{n}(\bm x)$. To clarify further, suppose that $\epsilon^*_n$ is the smallest value of $\epsilon_n$ such that $\Po \in B_{\epsilon^*_n}(\Pnh)$. According to the DRO theory, the DRO cost $v_{r,n}(\bm x)$ with $\epsilon_n = \epsilon^*_n$ is the tightest upper bound for the true cost $v(\bm x)$. However, Theorem \ref{thm:gen-err} indicates that this DRO bound $v_{r,n}(\bm x)$ can be further refined to the BDR bound $v_{b,n}(\bm x)$ even when $\epsilon_n = \epsilon^*_n$. The refinement is non-trivial (i.e., $\beta^*_n < 1$) if one of the conditions in Theorem \ref{thm:gen-err} holds, which is the case, e.g., when $\Xi$ is a subspace of $\R^k$. To be specific, see \cite[Thm. 6.3]{esfahani2018data} and \cite{shafieezadeh2019regularization} for $\bar v_{r,n}(\bm x) > v(\bm x)$ when $\Xi \ne \R^k$, where $\bar v_{r,n}(\bm x)$ is a computational surrogate (i.e., finite-dimensional reformulation) of $v_{r,n}(\bm x)$. To avoid the conservatism of the DRO method, \cite{long2023robust} introduces an alternative modeling framework known as robust satisfying. However, \cite{long2023robust} is not rooted in DRO, and therefore, most existing DRO-based machine-learning methods cannot be directly upgraded.

Another concrete example for Theorem \ref{thm:gen-err} is as follows.

\begin{example}\label{ex:gen-err}
	According to \cite[Thm. 6.3]{esfahani2018data}, if the cost function $h$ is convex in $\rv \xi$ on $\Xi = \R^k$, the support set $\Xi$ of $\rv \xi$ is a closed and convex set, the order $p$ of the Wasserstein distance is set to $p := 1$, and the employed metric $d$ in the Wasserstein distance is specified by a proper norm $\|\cdot\|$ on $\Xi$, then the distributionally robust optimization objective exactly equals to a regularized SAA objective, point-wisely for every $\bm x \in \R^l$: i.e., for every $\bm x \in \R^l$, we have
	$$
        v_{r,n}(\bm x) := \max_{\P: W_p(\P, \Pnh) \le \epsilon_n} \E_{\P} h(\bm x, \rv \xi) 
			= v_n(\bm x) + \epsilon_n \cdot f(\bm x)
    $$
	where
	$
	f(\bm x) \defeq \max_{\bm \theta \in \Xi}\{\|\bm \theta\|_*: h^*(\bm x, \bm \theta) < \infty\}
	$
	is a regularization term,\footnote{Similar results are reported in, e.g., \cite{blanchet2019robust,gao2020wasserstein,gao2022finite,shafieezadeh2019regularization}, where $f(\bm x)$ may be of different forms.} $\|\cdot\|_*$ denotes the dual norm of $\|\cdot\|$, and $h^*(\bm x, \bm \theta)$ denotes the Fenchel convex conjugate of $h(\bm x, \rv \xi)$ point-wisely for every given $\bm x \in \R^l$. 
    As a result, the generalization bound specified by the DRO model is 
	$$
    v(\bm x) \le v_{r,n}(\bm x) = v_n(\bm x) + \epsilon_n f(\bm x),~~~\forall \bm x \in \R^l.
    $$
    However, Theorem \ref{thm:gen-err} supports that the bound above can be tightened to
	$$
    v(\bm x) \le v_{b,n}(\bm x) \le v_n(\bm x) + \beta_n \epsilon_n f(\bm x),~ \forall \bm x \in \R^l,~ \exists \beta_n \in [0, 1].
    $$
    The best value of $\beta_n$ is 
    $
        \beta^*_n \defeq \max_{\bm x \in \cal X} \frac{v(\bm x) - v_n(\bm x)}{\epsilon_n f(\bm x)} \le 1.
    $
    The inequality is strict if 1) the radius $\epsilon_n$ is large; or 2) $\cal X$ is a specified subspace of $\R^l$ on which $v(\bm x) < v_n(\bm x) + \epsilon_n f(\bm x)$. 
    One may interpret $\beta_n \epsilon_n$ as the radius of a new distributional ball that may not include $\Po$ in the DRO sense. However, the true cost can still be upper-bounded, indicating that the conventional DRO bound is not sufficiently tight on the focused region $\cal X$, although it may be tight on the whole space $\R^l$.
	\stp
\end{example}

A specific instance of Example \ref{ex:gen-err} is given below.
\begin{example}[$1$-norm Linear Regression]\label{ex:gen-err-lin-reg}
    Let the data vector be $\rv \xi \defeq [\rv \xi_{\text{in}}; \rv \xi_{\text{out}}]$ and the true data generating model be $\rv \xi_{\text{out}} = \bm x^\top_0 \rv \xi_{\text{in}} + e$, where $\rv \xi_{\text{in}} \sim N(\bm 0, \bm E_{k-1})$ denotes the feature vector, $\bm E_{k-1}$ denotes the $(k-1)$-dimensional identity matrix, the standard Gaussian variable $e \in \R$ denotes the regression residual (uncorrelated with $\rv \xi_{\text{in}}$), and $\rv \xi_{\text{out}} \in \R$ denotes the response. 
    Consider the $1$-norm linear regression problem. Supposing that $\rv \xi \sim \Po$, we have 
    $$
    v(\bm x) = \E_{\Po} |\rv \xi_{\text{out}} - \bm x^\top \rv \xi_{\text{in}}|,
    $$
    $$
    v_n(\bm x) = \E_{\Pnh} |\rv \xi_{\text{out}} - \bm x^\top \rv \xi_{\text{in}}|,
    $$
    and according to Example \ref{ex:gen-err} and \cite[Eq.~(4.5)]{chen2020distributionally},
    $$
    \begin{array}{cl}
    v_{r,n}(\bm x) &= \max_{\P \in B_{\epsilon_n}(\Pnh)} \E_{\P} |\rv \xi_{\text{out}} - \bm x^\top \rv \xi_{\text{in}}| \\
    &= \E_{\Pnh} |\rv \xi_{\text{out}} - \bm x^\top \rv \xi_{\text{in}}| + \epsilon_n  \|(-\bm x, 1)\|_*. \\
    \end{array}
    $$
    As a demonstration, we particular $\|\cdot\|_*$ into the vector $2$-norm. Therefore, the best value $\beta^*_n$ is 
    \[
        \beta^*_n = \max_{\bm x} \frac{\E_{\Po} |\rv \xi_{\text{out}} - \bm x^\top \rv \xi_{\text{in}}| - \E_{\Pnh} |\rv \xi_{\text{out}} - \bm x^\top \rv \xi_{\text{in}}|}{\epsilon_n \|(-\bm x, 1)\|_2}.
    \]
    To visualize, we examine a one-dimensional case. We set the true parameter be $x_0 = 1$, the sample size $n = 1$, and the radius $\epsilon_n = 1$. Under one realization of $\Pnh$, the true, SAA, DRO, and BDR costs are shown in Fig. \ref{fig:BDR-example-beta}, where we assume that $x$ takes grid values on $[-4, 6]$ with step size of $0.01$. As we can see, if the feasible region of the decision variable $x$ is required to be $[-1.7, 1.7]$, the BDR bound in Fig. \ref{fig:BDR-example-beta-a} is no longer tight but the BDR bound in Fig. \ref{fig:BDR-example-beta-b} becomes tight. \stp
    
    \begin{figure}[!htbp]
    	\centering
    	\subfigure[$\beta = 0.50956$ (best)]{
    		\includegraphics[height=3.cm]{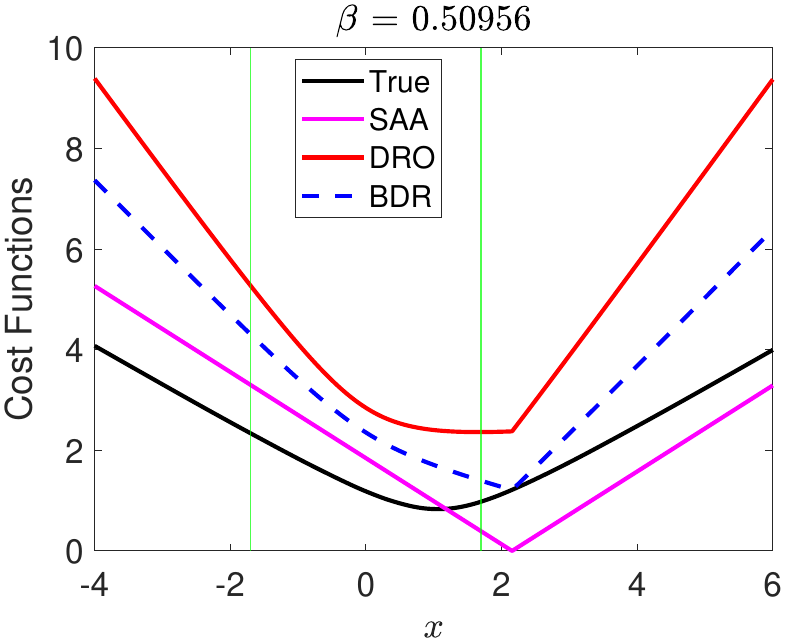}
            \label{fig:BDR-example-beta-a}
    	}

    	\subfigure[$\beta = 0.3$]{
    		\includegraphics[height=3.cm]{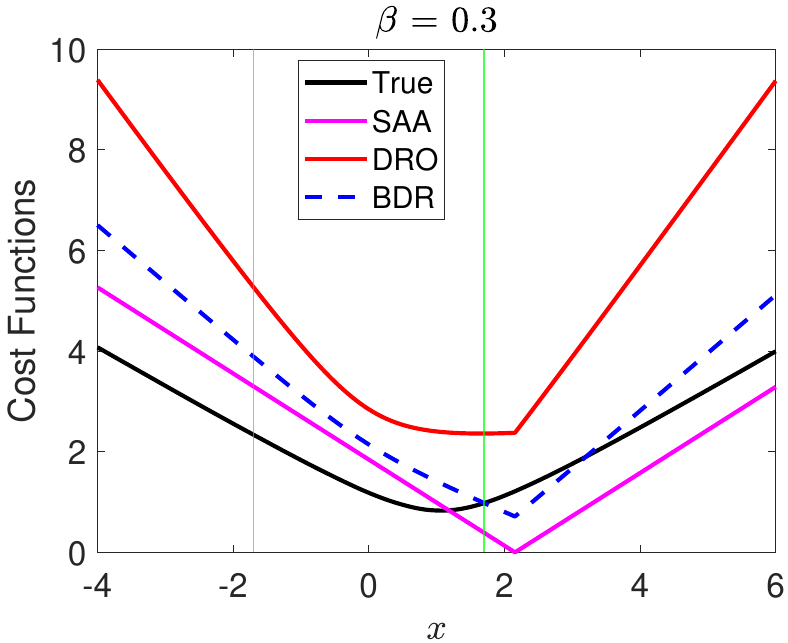}
        \label{fig:BDR-example-beta-b}
    	}
    	\subfigure[$\beta = 0.7$]{
    		\includegraphics[height=3.cm]{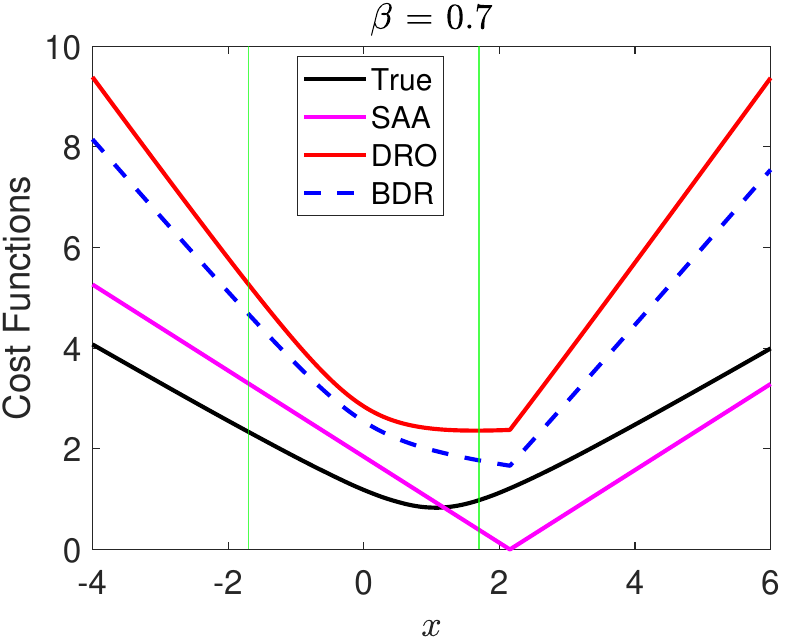}
    	}
    	\caption{Cost functions; the SAA cost cannot upper bound the true cost. (a): when $\beta = 0.50956$, the BDR cost function provides a tight upper bound for the true cost function; (b): when $\beta < 0.50956$, the BDR cost function cannot upper bound the true cost function; (c): when $\beta > 0.50956$, the BDR cost function provides a loose upper bound for the true cost function. If the feasible region of the decision variable $x$ is required to be $[-1.7, 1.7]$ rather than $\R$, the BDR bound in (a) is no longer tight but that in (b) becomes tight. (Source Codes: \url{https://github.com/Spratm-Asleaf/Robustness-Specificity}.)}
    	\label{fig:BDR-example-beta}
    \end{figure}
\end{example}

As a result of Theorem \ref{thm:gen-err}, focusing on the Bayesian distributionally robust solution $\bmh x_{b, n}$, the true cost $v(\bmh x_{b,n})$ of the BDR model is upper bounded, with $\Pon$-probability at least $1-\eta$, as 
$
    v(\bmh x_{b,n}) \le \beta_n v_{r,n}(\bmh x_{b,n}) + (1 - \beta_n) v_n(\bmh x_{b,n}).
$

Next, we discuss the unbiasedness of the BDR model.
The DRO model is always an upward (i.e., positively) biased estimator of the true optimal cost for all radius $\epsilon_n \ge 0$ such that $\Po \in B_{\epsilon_n}(\Pnh)$, while the SAA model is always a downward (i.e., negatively) biased estimator.\footnote{For technical details, see the proof of Theorem \ref{thm:unbiasedness}.} However, 
the BDR model can be unbiased with a proper $\beta_n$.

\begin{theorem}[Unbiasedness]\label{thm:unbiasedness}
	For every $n$, there exists $\beta_n \in [0,1]$ such that the BDR-estimated cost $v_{b,n}(\bmh x_{b,n})$ is an unbiased estimate of the true optimal cost $v(\bm x_0)$. 
\end{theorem}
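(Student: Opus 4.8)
The plan is to exploit the fact that the BDR objective value at its optimum is, by construction, a convex combination of the DRO optimal value and the SAA optimal value (up to the coupling through the shared minimizer), and then to run an intermediate-value argument over $\beta_n$. First I would define, for a fixed sample size $n$ and a fixed realization of the data, the scalar function $g(\beta) \defeq \E_{\Pon}\big[ \min_{\bm x \in \cal X} \big( \beta\, v_{r,n}(\bm x) + (1-\beta)\, v_n(\bm x) \big) \big]$ for $\beta \in [0,1]$, i.e.\ the expected BDR-estimated optimal cost as a function of the weight. At $\beta = 0$ this is $\E_{\Pon}[\min_{\bm x} v_n(\bm x)] = \E_{\Pon}[v_n(\bmh x_n)]$, the expected SAA optimal cost, which is well known to \emph{underestimate} the true optimal cost: by Jensen's inequality (or the standard SAA bias argument), $\E_{\Pon}[\min_{\bm x} v_n(\bm x)] \le \min_{\bm x} \E_{\Pon}[v_n(\bm x)] = \min_{\bm x} v(\bm x) = v(\bm x_0)$. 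At $\beta = 1$ this is $\E_{\Pon}[\min_{\bm x} v_{r,n}(\bm x)] = \E_{\Pon}[v_{r,n}(\bmh x_{r,n})]$, the expected DRO optimal cost; here I would invoke the fact that, for any reasonable distributional ball with radius $\epsilon_n$ chosen large enough that $\Po \in B_{\epsilon_n}(\Pnh)$ with the relevant probability, the DRO value dominates the true cost pointwise, $v_{r,n}(\bm x) \ge \E_{\Po} h(\bm x, \rv\xi) = v(\bm x)$, so that $\min_{\bm x} v_{r,n}(\bm x) \ge \min_{\bm x} v(\bm x) = v(\bm x_0)$, hence $g(1) \ge v(\bm x_0)$.

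Next I would argue that $g$ is continuous in $\beta$ on $[0,1]$. For each fixed data realization, $\beta \mapsto \min_{\bm x} (\beta v_{r,n}(\bm x) + (1-\beta) v_n(\bm x))$ is a pointwise infimum of affine functions of $\beta$, hence concave and in particular continuous on $[0,1]$; it is also bounded (uniformly over $\beta$) by $\max(v_n(\bm x'), v_{r,n}(\bm x'))$ for any fixed feasible $\bm x'$, and by condition C1) the DRO objective is bounded in $\Pon$-probability, which together with a suitable integrability/domination hypothesis lets me pass continuity through the expectation $\E_{\Pon}$ by dominated convergence. Therefore $g: [0,1] \to \R$ is continuous with $g(0) \le v(\bm x_0) \le g(1)$. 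By the intermediate value theorem there exists $\beta_n \in [0,1]$ with $g(\beta_n) = v(\bm x_0)$, i.e.\ $\E_{\Pon}[v_{b,n}(\bmh x_{b,n})] = v(\bm x_0)$, which is precisely the claimed unbiasedness for that sample size $n$. Doing this for every $n$ produces the asserted sequence $\{\beta_n\}$.

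The main obstacle — and the place where I expect to need the most care — is pinning down exactly which conditions on $h$, on $\cal X$, and on the radius $\epsilon_n$ make all three of the following simultaneously true: (i) $g(0) \le v(\bm x_0)$ (the SAA side needs only Jensen, so this is essentially free, but requires $v_n(\bm x_0)$ to be integrable so $\min_{\bm x} v_n \le v_n(\bm x_0)$ can be taken in expectation); (ii) $g(1) \ge v(\bm x_0)$, which genuinely requires the ball $B_{\epsilon_n}(\Pnh)$ to contain $\Po$ — either deterministically, which is impossible for, e.g., a Wasserstein ball of fixed radius unless $\Po$ has bounded support, or in an averaged sense, in which case the inequality $\E_{\Pon}[\min_{\bm x} v_{r,n}(\bm x)] \ge v(\bm x_0)$ must be argued through a more delicate conditioning on the event $\{\Po \in B_{\epsilon_n}(\Pnh)\}$ plus a lower bound on $v_{r,n}$ on the complementary event; and (iii) the domination needed for the continuity/DCT step, which ties back to C1). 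A clean way to sidestep the most painful part of (ii) is to let $\epsilon_n$ itself be a free design parameter alongside $\beta_n$: first choose $\epsilon_n$ large enough that $g(1) \ge v(\bm x_0)$ (possible because $v_{r,n}$ is nondecreasing in $\epsilon_n$ and $\to \infty$, or at least exceeds $v(\bm x_0)$, as the ball grows), and then solve for $\beta_n$ by the intermediate value argument. I would state the theorem's hypotheses so as to guarantee this monotone-divergence behavior of $v_{r,n}$ in $\epsilon_n$, since that is the one structural fact the whole construction rests on.
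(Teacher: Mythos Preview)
Your proposal is correct and follows essentially the same route as the paper: show the SAA optimum is a downward-biased estimator of $v(\bm x_0)$, the DRO optimum is an upward-biased estimator (under the standing assumption $\Po \in B_{\epsilon_n}(\Pnh)$), and then run an intermediate-value argument on the map $\beta \mapsto \E_{\Pon}\big[\min_{\bm x}\big(\beta\, v_{r,n}(\bm x)+(1-\beta)\, v_n(\bm x)\big)\big]$. Your version is in fact slightly cleaner---you argue continuity via concavity (infimum of affine functions) and apply the IVT directly on $[0,1]$, whereas the paper takes a detour through an auxiliary $\overline{\beta}_n$ and asserts monotonicity and continuity without detailed justification; your explicit worry about whether $\Po \in B_{\epsilon_n}(\Pnh)$ holds almost surely (rather than merely with high probability) is also well placed, as the paper simply assumes it.
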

\begin{proof}[Proof (sketch)]
	We first show that the DRO model is an upward (positively) biased model and the SAA model is a downward (negatively) biased model. Then, the BDR model is proved to be unbiased. For details, see Appendix D-D in the supplementary materials.
\end{proof}

The BDR model's unbiasedness indicates that achieving asymptotic statistical property is possible in finite-sample learning; note that this result is theoretically impossible for DRO and SAA models. However, a $\beta_n$ satisfying Theorem \ref{thm:gen-err} [i.e., \eqref{eq:BDR-gen-bound}] does not necessarily satisfy Theorem \ref{thm:unbiasedness} for unbiasedness, and vice versa. The finite-sample unbiasedness shows the statistical superiority of BDR over SAA and DRO.

\section{Solution Method of BDR Model \captext{\eqref{eq:bdr-method}}}\label{subsec:solution-method}
To solve BDR model \eqref{eq:bdr-method}, the key is to reformulate the DRO sub-problem $\max_{\P \in B_{\epsilon_n}(\Pnh)} \E_{\P} h(\bm x, \rv \xi)$ under a specified distributional ball $B_{\epsilon_n}(\Pnh)$. This paper examines the $\phi$-divergence and Wasserstein distributional balls; see Appendix \ref{subsec:distributional-balls}.

\subsection{\captext{$\phi$}-Divergence} 
We start with the $\phi$-divergence ball whose mathematical definition is available in Appendix \ref{subsec:phi-divergence}; this case is practical if the underlying true data-generating distribution $\Po$ is discrete.

\begin{theorem}\label{thm:DRO-reformulation-phi}
	Consider the $\phi$-divergence distributional ball $B_{\epsilon_n, \phi} (\Pnh)$ induced by the $\phi$-divergence. The DRO sub-problem $\max_{\P \in B_{\epsilon_n, \phi} (\Pnh)} \E_{\P} h(\bm x, \rv \xi)$ can be reformulated to
	\begin{equation}\label{eq:dro-model-discrete-phi-special}
			\displaystyle \max_{ \bm \mu \in \R^n} \textstyle \sum^n_{i = 1} \mu_i \cdot h(\bm x, \rv \xi_i),~~~~~			\text{s.t.}~~~~~F_{\phi} (\bm \mu \| \bm \mub) \le \epsilon_n,
	\end{equation}
where $F_{\phi} (\bm \mu \| \bm \mub)$ defines the $\phi$-divergence of the discrete distribution $\bm \mu \defeq [\mu_1, \mu_2, \ldots, \mu_n]$ from the nominal distribution $\bmb \mu \defeq [1/n, 1/n, \ldots, 1/n]$; note that $\bm \mu, \bmb \mu \in \R^n$.
\end{theorem}
\begin{proof}
From Appendix \ref{subsec:phi-divergence}, we know that distributions $\P$ in $B_{\epsilon_n, \phi} (\Pnh)$ have the same support as $\Pnh$. Hence, distributions in $B_{\epsilon_n, \phi} (\Pnh)$ can be characterized as $\P = \sum^n_{i=1} \mu_i \delta_{\rv \xi_i}$, which completes the proof.
\end{proof}

A concrete example of the constraint in \eqref{eq:dro-model-discrete-phi-special} can be obtained using the Kullback--Leibler (KL) divergence: that is, 
{
\[
F_{\phi} (\bm \mu \| \bm \mub) \defeq \sum^n_{i = 1} \mu_i \cdot \log(\mu_i / \mub_i) = \sum^n_{i = 1} \mu_i \cdot \log(n \mu_i) \le \epsilon_n. 
\]}As a result, the solution of BDR method \eqref{eq:bdr-method} is given in the corollary below.

\begin{corollary}[Solution of BDR Method \eqref{eq:bdr-method} Under $\phi$-Divergence Ball]\label{cor:solution-BDR-phi}
The BDR model \eqref{eq:bdr-method} under the $\phi$-divergence ball can be reformulated into
	\begin{equation}\label{eq:solution-BDR-phi}
    {
		\begin{array}{cll}
			\displaystyle \min_{\bm x \in \cal X}  & \displaystyle \beta_n \max_{\bm \mu \in \R^n} \sum^n_{i = 1} \mu_i \cdot h(\bm x, \rv \xi_i) + (1-\beta_n) \displaystyle \sum^n_{i = 1} \frac{1}{n} \cdot h(\bm x, \rv \xi_i) \\
			\text{s.t.} & F_{\phi} (\bm \mu \| \bm \mub) \le \epsilon_n,
		\end{array} 
   }
	\end{equation}
which is a finite-dimensional optimization. \stp
\end{corollary}

\subsection{Wasserstein Distance} 
We then study the Wasserstein distributional ball whose mathematical definition is available in Appendix \ref{subsec:wasserstein-distance}.

\begin{theorem}\label{thm:DRO-reformulation-wasserstein}
	Consider the Wasserstein distributional ball $B_{\epsilon_n, p} (\Pnh)$ induced by the order-$p$ Wasserstein distance. Suppose one of the following conditions holds: 1) For every $\bm x$, $h(\bm x, \rv \xi)$ is continuous in $\rv \xi$ on $\Xi$; 2) For every $\bm x$, $h(\bm x, \rv \xi)$ is concave in $\rv \xi$ on $\Xi$. Then, the DRO sub-problem $\max_{\P \in B_{\epsilon_n, p} (\Pnh)} \E_{\P} h(\bm x, \rv \xi)$ can be reformulated to
	\begin{equation}\label{eq:dro-model-discrete-wasserstein-special}
			\displaystyle \max_{\{\rv \zeta_j\}_{j \in [n]}} \textstyle \frac{1}{n} \sum^n_{j = 1} h(\bm x, \rv \zeta_j),~~~~\text{s.t.}~~~  \textstyle \frac{1}{n} \sum^n_{j = 1} d^p(\xi_j , \zeta_j) \le \epsilon^p_n,
	\end{equation}
 where $d$ is a distance on $\Xi$.
\end{theorem}
\begin{proof}
	See Appendix E in the supplementary materials. 
\end{proof}

A concrete example of the constraint in \eqref{eq:dro-model-discrete-wasserstein-special} can be obtained using the $2$-norm on $\Xi$ and $p \defeq 1$, that is, 
\[
\textstyle \frac{1}{n} \sum^n_{j = 1} \|\xi_j - \zeta_j\|_2 \le \epsilon_n.
\]
As a result, the solution of the BDR method \eqref{eq:bdr-method} is given in the corollary below.

\begin{corollary}[Solution of BDR Method \eqref{eq:bdr-method} Under Wasserstein Ball]\label{cor:solution-BDR-wasserstein}
The BDR model \eqref{eq:bdr-method} under the Wasserstein ball can be reformulated into
	\begin{equation}\label{eq:solution-BDR-wasserstein}
 {
		\begin{array}{cll}
			\displaystyle \min_{\bm x \in \cal X} & \displaystyle \beta_n \displaystyle \max_{\{\rv \zeta_j\}_{j \in [n]}} \displaystyle \frac{1}{n} \sum^n_{j = 1} h(\bm x, \rv \zeta_j) + (1-\beta_n) \displaystyle \sum^n_{i = 1} \frac{1}{n} \cdot h(\bm x, \rv \xi_i) \\
			\text{s.t.} & \displaystyle \frac{1}{n} \sum^n_{j = 1} d^p(\xi_j , \zeta_j) \le \epsilon^p_n,
		\end{array} 
  }
	\end{equation}
which is a finite-dimensional optimization. \stp
\end{corollary}

\subsection{Numerical Solution} 
The algorithm below, adapted from stochastic gradient descent (SGD)~\cite{kiefer1952stochastic}, provides a numerically iterative method to solve \eqref{eq:solution-BDR-phi} and \eqref{eq:solution-BDR-wasserstein} for gradient-based learning (e.g., neural networks).
\begin{algo}[BDR-GD to Solve \eqref{eq:solution-BDR-phi} and \eqref{eq:solution-BDR-wasserstein}]\label{algo:grad-descent}
With probability $\beta_n$ we use the gradient of the DRO term $\max_{\bm \mu} \sum^n_{i = 1} \mu_i \cdot h(\bm x, \rv \xi_i)$ or $\max_{\{\rv \zeta_j\}_{j \in [n]}} \frac{1}{n} \sum^n_{j = 1} h(\bm x, \rv \zeta_j)$, and with probability $1 - \beta_n$ we use the gradient of the SAA term $\frac{1}{n} \sum^n_{i = 1} h(\bm x, \rv \xi_i)$. 
For example, in the $t$-th iteration step, $\xi_t$ is sampled from $\Pnh$ and $p_t$ is sampled from the uniform distribution $\bb U_{(0,1]}$. Then the stochastic gradient, with respect to $\bm x$, 
\[
\bm g_{\bm x, t}=\begin{cases}
\nabla_{\bm x} h(\bm x, \xi_t),  &  \beta_n \leq p_t,\\
\nabla_{\bm x} \max_{\zeta_t} h(\bm x, \zeta_t) \text{ } \text{s.t.} \text{ }  d^p(\xi_t, \zeta_t)<\epsilon^p, &  \beta_n > p_t,
\end{cases}
\]
is calculated to update the hypothesis parameter $\bm x$.
\stp
\end{algo}

\subsection{Hyper-Parameter Tuning}
As demonstrated by the statistical properties in Theorem \ref{thm:gen-err}, the generalization performance of BDR learning is significantly influenced by the value of the hyper-parameter $\beta_n$. However, as highlighted in Remark \ref{rem:BDR-better-DRO}, the optimal value $\beta^*_n$ for $\beta_n$ cannot be theoretically determined due to its dependence on the unknown true distribution $\Po$. 
Therefore, in practice, $\beta_n$ can be empirically tuned using, e.g., grid search, cross-validation, and bootstrapping. This is a common practice of hyperparameter searching in, e.g., regularized SAA learning \eqref{eq:regularization-method} and DRO learning \eqref{eq:dro-method}. Experiments in Section \ref{sec:experiment} show that it is computationally lightweight to find some $\beta_n$ such that BDR can outperform both DRO and SAA.

\section{Practical Insights from BDR Learning}
Suppose that $\bm \mu^*$ solves \eqref{eq:solution-BDR-phi} and $\{\rv \zeta^*_j\}_{j \in [n]}$ solves \eqref{eq:solution-BDR-wasserstein}. Corollaries \ref{cor:solution-BDR-phi} and \ref{cor:solution-BDR-wasserstein} motivate two important insights in Examples \ref{rem:weight-modification} and \ref{rem:data-augmentation}, respectively.

 \begin{example}[Sample Weight Modification]\label{rem:weight-modification}
In ERM learning \eqref{eq:erm-method}, we work on equal-weighted $n$ samples $\{\xi_i\}_{i \in [n]}$, while in DRO learning $\min_{\bm x \in \cal X} \max_{\P \in B_{\epsilon_n, \phi}(\Pnh)} \E_{\P} h(\bm x, \rv \xi)$ with the $\phi$-divergence ball, the weights of samples $\{\xi_i\}_{i \in [n]}$ are modified into $\bm \mu^*$. However, in BDR learning \eqref{eq:solution-BDR-phi}, the weight of $\xi_i$ is given by
$\beta_n \mu^*_i + (1-\beta_n)/n$. 
\stp
\end{example}

An application of Example \ref{rem:weight-modification} is \quotemark{hard sample mining} \cite{shrivastava2016training}, where $\beta_n$ balances worst-case weight $\mu^*_i$ and homogeneous weight $1/n$ for sample $\xi_i$.

\begin{example}[Data Augmentation]\label{rem:data-augmentation} 
ERM learning~\eqref{eq:erm-method} works on equal-weighted $n$ nominal samples $\{\xi_i\}_{i \in [n]}$, while DRO learning $\min_{\bm x \in \cal X} \max_{\P \in B_{\epsilon_n, p}(\Pnh)} \E_{\P} h(\bm x, \rv \xi)$ with the Wasserstein ball constructs equal-weighted $n$ adversarial samples $\{\rv \zeta^*_j\}_{j \in [n]}$. In contrast, BDR learning \eqref{eq:solution-BDR-wasserstein} leverages $2n$ samples $\{\zeta^*_j\}_{j \in [n]} \cup \{\xi_i\}_{i \in [n]}$ with weight $\beta_n/n$ for adversarial samples $\{\zeta^*_j\}_{j \in [n]}$ and weight $(1-\beta_n)/n$ for nominal samples $\{\xi_i\}_{i \in [n]}$: it enables data augmentation by combining DRO-generated adversarial samples and the nominal samples in SAA.
\stp
\end{example}

In robust deep learning,  DRO-based adversarial training is widely used but infamous for its poor performance due to conservatism~\cite{raghunathan2019adversarial}.
BDR learning, however, can mitigate this issue by incorporating SAA learning, which is shown by experiments in Subsection \ref{subsec:expt-deep}.

\section{Applications and Experiments}\label{sec:experiment}
We show the practical benefits of the BDR learning framework through experimental results on real-world tasks such as 2D image and 3D point cloud classifications. Support vector machines and deep neural networks are specifically leveraged. 
All the source codes are available online at GitHub: \url{https://github.com/Spratm-Asleaf/Robustness-Specificity}.

\subsection{Linear Model: BDR Support Vector Machine}\label{sec:expt-svm}

We consider the binary classification problem on MNIST dataset \cite{lecun1998mnist} to distinguish similar handwritten digits 4 and 9. We adopt the support vector machine (SVM) as the classification algorithm and solve the problem under the frameworks of BDR, DRO, and SAA, respectively. Denote the $i$-th image's pixel vector as $\rvu I_i \in \R^{784}$ and its label as $Y_i \in \{-1,1\}$, i.e., $\xi_i \defeq (\rvu I_i, Y_i)$. We choose the order-1 Wasserstein distance to define a distributional ball under the metric \cite{shafieezadeh2019regularization}
\begin{equation}\label{eq:wasserstein-distance-metric}
d\left(\rv \xi_i, \rv \xi_j \right) \defeq \left\|\rvu I_i - \rvu I_j \right\|_{\infty} + \kappa \cdot \mathbbm{1}_{\left\{Y_i \neq Y_j\right\}}, 
\end{equation}
where $\left\| \cdot \right\|_{\infty}$ denotes the $\infty$-norm and  $\kappa$ quantiﬁes the cost of reversing a label. Hinge loss is used in SVM, i.e. 
\[
h(\bm x, \xi) = h(\bm x, (\rvu I, Y)) \defeq  \max\{1- Y \cdot \langle \bm x, \rvu I \rangle,~0\}.
\]
It can be derived from \eqref{eq:solution-BDR-wasserstein} and \cite[Cor.~15]{shafieezadeh2019regularization} that the BDR formulation is a linear program; see Appendix \ref{append:bdr-form-svm} for technical details. We conduct 100 independent trials, in each of which, 80\% of the images are randomly selected to train the model and the remaining 20\% images are used for testing. For BDR, we choose $\beta$ from $\{0.3, 0.5, 0.7\}$. For BDR and DRO, radius $\epsilon$ is chosen from $\{a \times 10^{-b} ~|~ a = 1,\cdots,9, b = 4,3,2\}$ and $\kappa$ is chosen from $\{0.1, 0.25, 0.5, 0.75\}$. The results are shown in Fig. \ref{fig:fig1}. 
\begin{figure}[!htbp]
	\centering
	\subfigure[Accuracy against $\epsilon$ \& $\kappa$ ]{
		\includegraphics[height=3.2cm]{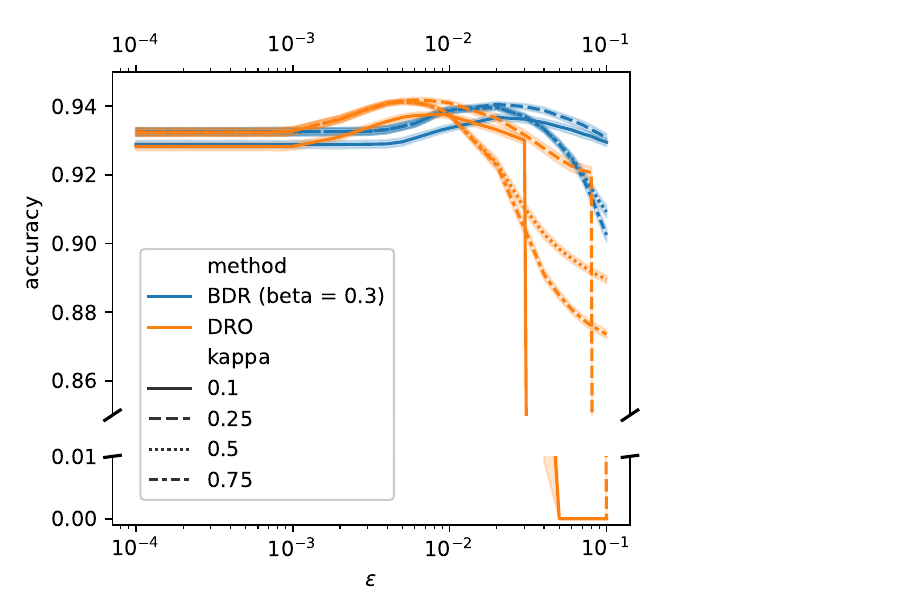}
		\label{fig:fig1b}
	}\hspace{-1.0em}
	\subfigure[Box plot of accuracy]{
		\includegraphics[height=3.2cm]{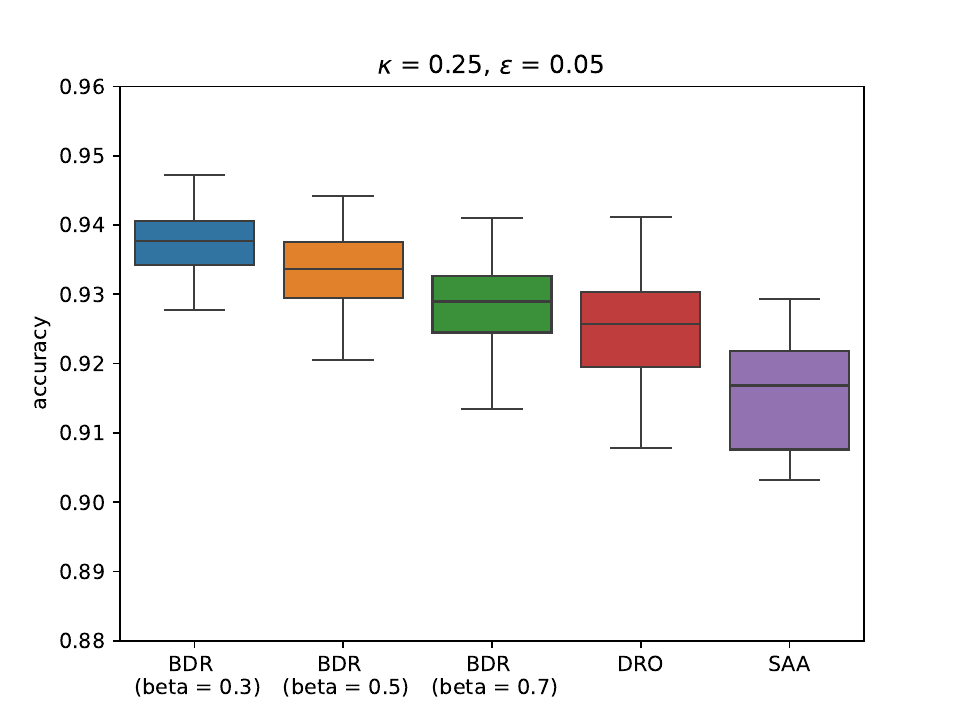}
		\label{fig:fig1c}
   \hspace{-1.5em}
	}
	\caption{Average test accuracy for 4 vs 9 over 100 trials. Averaged CPU times (seconds): BDR = 68, DRO = 66, and SAA = 7.}
	\label{fig:fig1}
\end{figure}

It can be seen in Fig. \ref{fig:fig1b} that the performances of BDR and DRO are significantly affected by $\epsilon$ and $\kappa$. The test accuracy first increases when $\epsilon$ increases but drops afterwards; the peak occurs in the range of $\epsilon \in [0.005, 0.05]$. This phenomenon agrees with our claim that the radii of the ambiguity sets can neither be too large nor too small: If the ambiguity sets are too small, robust methods cannot provide sufficient robustness; however, if the ambiguity sets are too large, robust methods are too conservative. 
Among different $\kappa$, $\kappa = 0.25$ works best for both BDR and DRO. 
Fig. \ref{fig:fig1c} shows an accuracy comparison among BDR (with different $\beta$), DRO, and SAA, under $\kappa = 0.25$ and $\epsilon = 0.05$ as selected above, where BDR with $\beta = 0.3$ has higher accuracy compared to that with $\beta = 0.5$ and $\beta = 0.7$.  Fig. \ref{fig:fig1b} also supports our claim that BDR is less conservative than DRO---To be specific, DRO is sensitive to the choice of $\epsilon$ because a slight change of $\epsilon$ can lead to a large change in accuracy (especially around $\epsilon = 0.07$); in contrast, BDR is more robust to the choice of $\epsilon$. 

For more experimental results of BDR SVM on MNIST and UCI data sets, as well as running times, see Appendix \ref{append:experiments-svm}.

\subsection{Deep Learning Model: BDR Learning}\label{subsec:expt-deep}
We present an implementation of deep BDR learning (DBDRL) and demonstrate the potential of our BDR model in enhancing the performance of deep models on various tasks.

\textbf{Tasks}: We apply the proposed  BDR model to 2D image classification tasks using MNIST~\cite{lecun1998mnist}, CIFAR-10, and CIFAR-100~\cite{krizhevsky2009learning} datasets, as well as 3D point cloud classification utilizing ModelNet40~\cite{wu20153d} dataset. To evaluate the generalization capacity of our method, we perform experiments under a low-shot data setting; that is, the model is learned on a subset of the training dataset. This setup means that a learning model yielding higher testing performance on a small training dataset has a better generalization capability. 

\textbf{Implementation}: We consider the objective of DBDRL as presented in~\eqref{eq:solution-BDR-wasserstein}. Specifically, we employ the convex cross-entropy loss~\cite{wang2020comprehensive} as the function $h$ for our learning.
Additionally, we implement the BDR-GD in Algorithm~\ref{algo:grad-descent} for DBDRL. 
The DRO term in BDR-GD is actualized through Adversarial Training (AT) techniques, with the employment of a specific Projected Gradient Descent (PGD) method~\cite{madry2018towards} to perform the maximization and construct adversarial samples. For PGD implementation, we use the order-2 distance for the constraints; using notations in \eqref{eq:wasserstein-distance-metric}, an example is given by
\[
d\left(\rv \xi_i, \rv \xi_j \right) = \left\|\rvu I_i - \rvu I_j \right\|_{2}.
\]
We follow the official implementation to train our models in both 2D and 3D tasks except for the low-shot data setting and BDR-GD utilization. Further details, such as the parameters of training and PGD, are put in Appendix \ref{append:experiments-resnet}. 

\textbf{Results of MINIST}: We implement the WideResNet-28 (WRN)~\cite{zagoruyko2016wide} for our 2D experiments. We first demonstrate the capability of DBDRL with varying $\beta$ values on the MNIST dataset. As depicted in Fig.~\ref{fig:error-rate-mnist}, the best $\beta^*$, which is an estimation of $\beta^*_n$ in Theorem~\ref{thm:gen-err}, diminishes as the volume of training data escalates, corroborating the property of \eqref{eq:dro-saa-obj-general}. Moreover, we observe that the best BDR models consistently outperform both their DRO and SAA counterparts; the advantage of BDR is especially obvious with smaller training data set. This is consistent with the theoretical analyses in Section~\ref{subsec:properties-BDR}.
\begin{figure}[!htbp]
\centering
\includegraphics[width=0.49\textwidth]{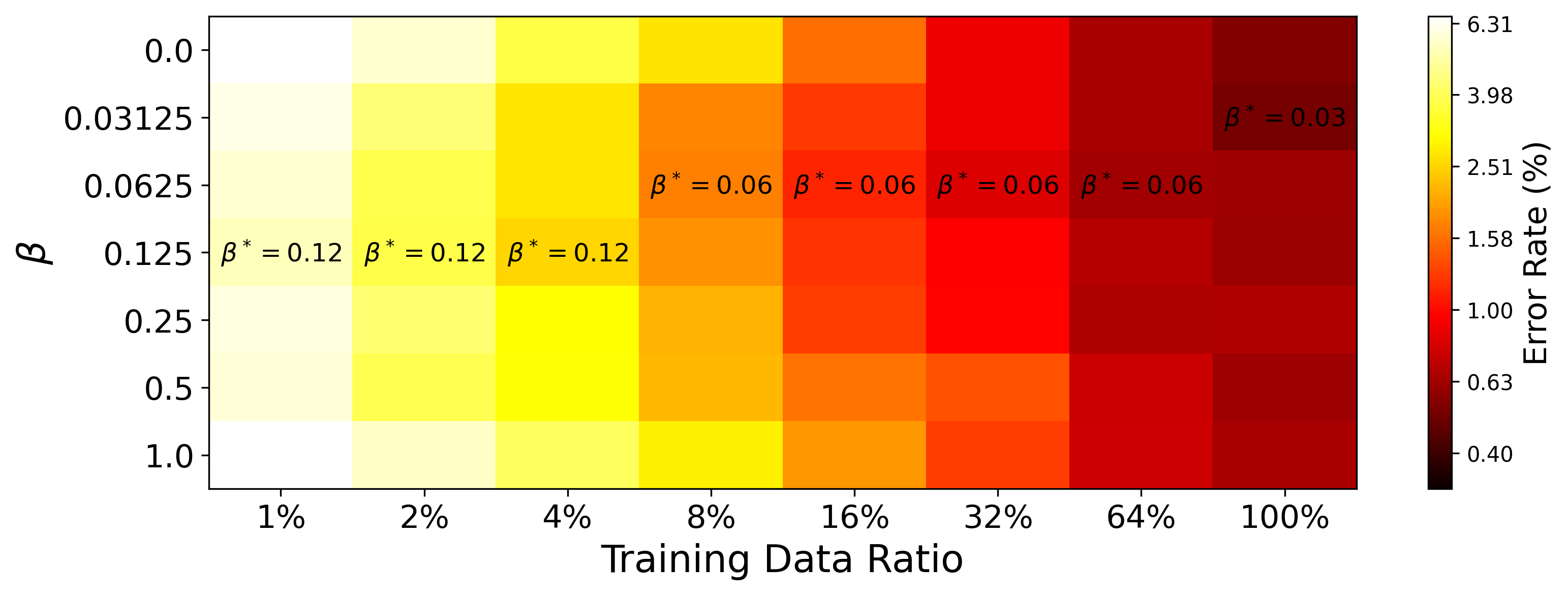}
\caption{Error rate of models trained by partial training sets on MNIST test set. Various $\beta$ values are used during training: $\beta=0$ for SAA learning, $\beta=1$ for DRO learning, and $\beta^*$ indicating the best value among various $\beta$ for BDR learning.
 }
 \label{fig:error-rate-mnist}
\end{figure}

\textbf{Parameter Tuning}: To obtain $\beta^*$ during training, we adopt a validation-based search strategy: we leverage a subset of the training dataset (20\% in our setting) as a validation set to search for a decent $\beta$. We highlight that the search cost is not significantly high, as it is found that a low precision of estimation can still enhance performance in practice. In later experiments, we restrict our search of $\beta^*$ to a smaller set, \textit{i.e.}, $\{0.5, 0.1, 0.05, 0.01\}$, and employ early stopping techniques to expedite the search process.

\textbf{Main Results of 2D and 3D Classification}: With the same tuning strategy for $\beta_n$, we showcase the superiority of our methods on CIFAR datasets in Table~\ref{tab:acc-cifar}. The used model is  WRN-28 which is the same as MNIST experiments. We also employ DBDRL in 3D point cloud classification by implementing two models, PointNet~\cite{qi2017pointnet} and DGCNN~\cite{wang2019dynamic}. We utilize the above search method of $\beta^*$ and demonstrate the consistent best performances of our BDR methods in Table~\ref{tab:acc-modelnet40}. Notably, DBDRL can improve the model performance from both the SAA learning and DRO learning across all tasks. Additionally, we note that $\beta^*$ estimation may not be accurate, as our search is limited to only a small set $\{0.01, 0.05, 0.1, 0.5\}$. However, high estimation accuracy of $\beta^*$ is not critical in practice because, as depicted in Fig. \ref{fig:error-beta}, a wide range of $\beta$ values can make BDR outperform the DRO and SAA. Overall, it is computationally lightweight to search for decent $\beta$s that enable BDR to outperform DRO and SAA. To illustrate this, we provide a detailed complexity analysis in  Appendix \ref{append:complexity} to show that the above search process can be done with trivial effort while achieving better performance.

\begin{table}[!htbp]
\centering \small
\caption{Accuracy (\%) of image classification on CIFAR-10 \& CIFAR-100 under low-shot data (10\% or 50\% training data) setting. }
\begin{tabular}{c|cc|cc}
\toprule
\multirow{2}{*}{Method} & \multicolumn{2}{c|}{CIFAR-10} & \multicolumn{2}{c}{CIFAR-100} \\  
                               & 10\%      & 50\%     & 10\%        & 50\%        \\ \midrule\midrule
DRO                            & 64.9       & 86.3       & 26.2     & 61.3     \\
SAA                            & 63.5               & 87.0                 & 24.1             & 61.6             \\ \midrule
BDR                            & \textbf{66.5}                & \textbf{87.3}               & \textbf{26.9}               &  \textbf{63.4}            \\
$(\beta^*)$  & (0.05)                & (0.05)                & (0.1)               & (0.05)             \\

\bottomrule
\end{tabular}
\label{tab:acc-cifar}
\end{table}

\begin{table}[!htbp]
\centering
\caption{Accuracy (\%) of point cloud classification on ModelNet40 by different learning methods. Different training data ratios are utilized. The estimated $\beta^*$ for each BDR learning is also given. }
\begin{tabular}{lccccc}
\toprule
\multirow{2}{*}{Model}    & \multirow{2}{*}{Data ratio} & \multicolumn{3}{c}{Method}     & \multirow{2}{*}{$\beta^*$} \\ \cline{3-5}
                          &                             & DRO   & SAA   & BDR            &                       \\ \midrule\midrule
\multirow{3}{*}{PointNet} & 5\%                         & 72.9 & 72.3 & \textbf{72.9} & 0.5                   \\
                          & 10\%                        & 79.6 & 79.4 & \textbf{80.6} & 0.1                   \\
                          & 100\%                       & 88.7 & 89.1  & \textbf{89.8} & 0.05                  \\ \midrule
\multirow{3}{*}{DGCNN}    & 5\%                         & 78.4 & 77.1 & \textbf{79.86} & 0.1                   \\
                          & 10\%                        & 85.1 & 84.3 & \textbf{85.8} & 0.1                   \\
                          & 100\%                       & 91.9 & 92.1 & \textbf{92.8} & 0.01                  \\ \bottomrule
\end{tabular}
\label{tab:acc-modelnet40}
\end{table}

\begin{figure}[!htbp]
\centering
\includegraphics[width=0.4\textwidth]{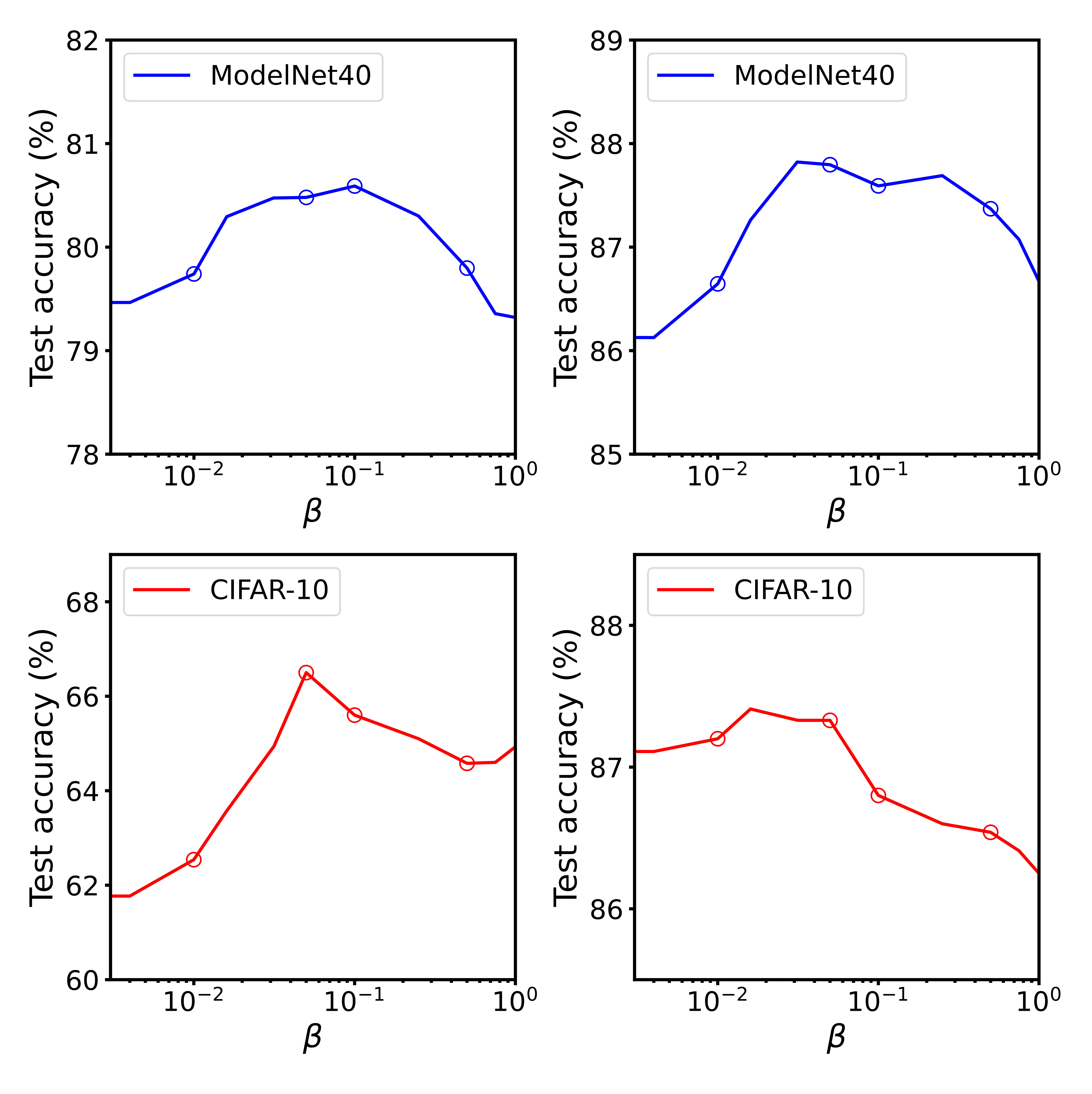}
\caption{Test set accuracy \textit{v.s.} $\beta$ across various tasks. Upper Panel: PointNet on ModelNet40 with 10\% (left) and  50\% (right) training data. Lower Panel: WRN-18 on CIFAR-10 with 10\% (left) and  50\% (right) training data. The marker ``$\circ$'' stands for searching set of $\beta$: i.e., $\{0.01, 0.05, 0.1, 0.5\}$. (NB: $\beta=0$ for SAA learning, $\beta=1$ for DRO learning.)}
\label{fig:error-beta}
\end{figure}

\section{Conclusions}\label{sec:conclusion}
This paper proposes the Bayesian distributionally robust learning framework \eqref{eq:BDR-opt} or \eqref{eq:bdr-method} that generalizes the Bayesian method, distributionally robust optimization method, and regularization method; see Remark \ref{rem:BDR-interpret}. 
The new framework reveals that there exists a trade-off between the robustness to the distributional uncertainty and the specificity to the empirical information; see Remark \ref{rem:trade-off}. The new framework also suggests the design methods of the prior distribution $\bb Q$ in the Bayesian method \eqref{eq:bayesian-method} and the regularizer $f(\bm x)$ in the regularization method \eqref{eq:regularization-method} (see Remark \ref{rem:BDR-interpret}), and shows that BDR learning can be less conservative than DRO learning (see Theorem \ref{thm:gen-err}, Remark \ref{rem:BDR-better-DRO}, Examples \ref{ex:gen-err} and \ref{ex:gen-err-lin-reg}, and Figs. \ref{fig:fig1}, \ref{fig:error-rate-mnist}, and \ref{fig:error-beta}).
The asymptotic (i.e., consistencies and asymptotic normalities in Theorem \ref{thm:asym-properties}) and non-asymptotic (i.e., generalization bounds in Theorem \ref{thm:gen-err} and unbiasedness in Theorem \ref{thm:unbiasedness}) properties, and the solution method (i.e., Corollaries \ref{cor:solution-BDR-phi} and \ref{cor:solution-BDR-wasserstein}) of the new framework are studied. In addition, the BDR learning framework reveals important insights from the perspective of data augmentation; see Examples \ref{rem:weight-modification} and \ref{rem:data-augmentation}. Experiments on diverse real-world datasets demonstrate the practical usefulness of the proposed BDR model. 

The future research direction is to study alternatives for the Dirichlet-process priors for the second-order probability measure $\bb Q$ in the Bayesian model \eqref{eq:bayesian-method}, which possibly motivates other new robust learning models than the proposed BDR models in \eqref{eq:BDR-opt} and \eqref{eq:bdr-method}. Possible replacements are Dirichlet-process mixture priors \cite[Chap.~5]{ghosal2017fundamentals}, tail-free process priors \cite[Sec.~3.6]{ghosal2017fundamentals}, among many others.

\appendices

\section{Appendices of Section \ref{sec:prelimilary}}\label{append:prelimilary}

\subsection{Notations}\label{append:notations}
Notations used in this paper are summarized in Table \ref{tab:notation}.

\begin{table}[!htbp]
	\centering
	\caption{Full notation List}
	\label{tab:notation}
	\begin{tabular}{ll}
		\toprule
		\textbf{Symbol} & \textbf{Interpretation} \\
		\midrule
		$\cal M(\Xi)$
		&
		\tabincell{l}{
			all distributions on $(\Xi, \cal B_{\Xi})$ where $\cal B_{\Xi}$ is the \\ Borel $\sigma$-algebra on $\Xi$}
		\\
		\midrule
		$\cal B_{\cal M(\Xi)}$
		&
		\tabincell{l}{
			Borel $\sigma$-algebra on $\cal M(\Xi)$
		} 
		\\
		\midrule
		$\Po$
		&
		\tabincell{l}{
			true population distribution
		}
		\\
		\midrule
		$\Pnh$
		&
		\tabincell{l}{
			empirical distribution supported on $n$ \\ i.i.d. samples
		}
		\\
		\midrule
		$\Ph$
		&
		\tabincell{l}{
			a prior estimate of $\Po$ based on prior knowledge
		}
		\\
		\midrule
		$\Pb$
		&
		\tabincell{l}{
			reference distribution working as a proper \\ estimate of $\Po$, which can be the empirical $\Pnh$ \\ or the prior $\Ph$, among many others
		}
		\\
		\midrule
		$\Po^n$
		&
		\tabincell{l}{
			$n$-fold product measure induced by $\Po$ \\ (i.e., joint distribution of $n$ i.i.d. samples)
		}
		\\
		\midrule
		$[n]$
		&
		\tabincell{l}{
			$[n] \defeq \{1,2,\ldots,n\}$, the running index set
		}
		\\
		\midrule
		$\Delta(\P, \Pnh)$
		&
		\tabincell{l}{
			statistical similarity measure between $\P$ and $\Pnh$; \\ $\Delta$ can be any possible divergences or statistical \\ distances
		}
		\\
		\midrule
		$B_{\epsilon}(\Pnh)$
		&
		\tabincell{l}{
			$\defeq \{\P \in \cal M(\Xi)| \Delta(\P, \Pnh) \le \epsilon\}$, closed \\ distributional ball with radius $\epsilon$ and center $\Pnh$
		}
		\\
		\midrule
		$N(\bm \mu, \bm \Sigma)$
		&
		\tabincell{l}{
			Gaussian distribution with mean $\bm \mu$ and \\ covariance $\bm \Sigma$
		}
		\\
		\midrule
		$\ascvg$
		&
		\tabincell{l}{
			converges almost surely
		}
		\\
		\midrule
		$\pcvg$
		&
		\tabincell{l}{
			converges in probability
		}
		\\
		\midrule
		$\dcvg$
		&
		\tabincell{l}{
			converges in distribution
		}
		\\
		\midrule
		$o_p(1)$
		&
		\tabincell{l}{
			if a sequence $a_n = o_p(1)$, then $a_n$ converges \\ to zero in probability
		}
		\\
		\midrule
		$d(\bm x, \bm y)$
		&
		\tabincell{l}{
			distance between two points $\bm x$ and $\bm y$
		}
		\\
		\midrule
		$d(\bm x, \cal X)$
		&
		\tabincell{l}{
			$\defeq \inf_{\bm y \in \cal X} \|\bm x - \bm y\|$, \\ distance between the point $\bm x$ and the set $\cal X$
		}
		\\
		\midrule
		$d(\cal X, \cal Y)$
		&
		\tabincell{l}{
			$\defeq \sup_{\bm x \in \cal X} d(\bm x, \cal Y)$, \\
            distance between the two sets $\cal X$ and $\cal Y$
		}
		\\
		\midrule
		$\E_{\P}[\cdot], \D_{\P}[\cdot]$
		&
		\tabincell{l}{
			the expectation operator and the covariance \\ operator, respectively, with respect to the \\ distribution $\P$
		}
		\\
		\midrule
		$\grad_{\bm x} h(\bm x_0, \rv \xi)$
		&
		\tabincell{l}{
			the gradient, i.e., Jacobian, of $h(\bm x, \rv \xi)$ with \\ respect to $\bm x$ evaluated at $\bm x_0$
		}
		\\
		\midrule
		$\grad^2_{\bm x} h(\bm x_0, \rv \xi)$
		&
		\tabincell{l}{
			the second-order gradient, i.e., Hessian, of \\ $h(\bm x, \rv \xi)$ with respect to $\bm x$ evaluated at $\bm x_0$
		}
		\\
		\midrule
		$\bm V^{-\top} \defeq [\bm V^{-1}]^{\top}$
		&
		\tabincell{l}{
			the transpose of the inverse of the matrix $\bm V$
		}
		\\
            \midrule
		$[\bm a,\bm b]$ and $[\bm a;\bm b]$
            & \tabincell{l}{ MATLAB notation for row and column \\ concatenation of $\bm a$ and $\bm b$, respectively}
            \\
		\bottomrule
	\end{tabular}
\end{table}

\subsection{Similarity Measures of Distributions and Distributional Balls}\label{subsec:distributional-balls}

\subsubsection{\captext{$\phi$}-Divergence}\label{subsec:phi-divergence}
Suppose $\P$ is absolutely continuous with respect to $\Pb$. 
Let $\phi:\R_{+} \to \{\R \cup +\infty\}$ denote a convex function that satisfies $\phi(1) = 0$ and $0\phi(0/0) = 0$.
The $\phi$-divergence (i.e., $f$-divergence) of $\P$ from $\Pb$, generated by $\phi$, is defined as
\begin{equation}\label{eq:phi-divergence}
	F_\phi(\P\|\Pb) = \int_{\Xi} \phi\left(\frac{\d \P}{\d \Pb}\right) \Pb(\d \rv \xi),
\end{equation}
where ${\d \P}/{\d \Pb}$ is the Radon--Nikodym derivative of $\P$ with respect to $\Pb$. When $\phi(t) \defeq t\ln t$ for all $t > 0$, the $\phi$-divergence specifies the well-known Kullback--Leibler divergence; cf. \cite[Table 2]{ben2013robust}.

A $\phi$-divergence distributional ball with radius $\epsilon \ge 0$ and center $\Pb$ is defined as
\[
B_{\epsilon, \phi} (\Pb) \defeq \{\P \in \cal M(\Xi) | F_\phi(\P \| \Pb) \le \epsilon\}.
\]
If $\epsilon = 0$, the ball $B_{\epsilon,\phi}(\Pb)$ reduces to the singleton that contains only $\Pb$. In some literature, the ball is also defined as $B_{\epsilon, \phi} (\Pb) \defeq \{\P \in \cal M(\Xi) | F_\phi(\Pb \| \P) \le \epsilon\}$ where $\P$ and $\Pb$ are swapped. The two versions are not equivalent because the $\phi$-divergence is not guaranteed to be symmetric in general.

\subsubsection{Wasserstein Distance}\label{subsec:wasserstein-distance}
The order-$p$ Wasserstein distance between the two distributions $\P$ and $\Pb$ is defined as
\begin{equation}\label{eq:wasserstein-distance}
\begin{array}{cl}
	W_p(\P, \Pb) &= {\left[\inf_{\pi \in \cal M (\Xi \times \Xi)} \E_{\pi} d^p(\rv \xi_1, \rv \xi_2)\right]}^{\frac{1}{p}} \\
    &= {\left[\inf_{\pi \in \cal M (\Xi \times \Xi)} \int_{\Xi \times \Xi} d^p(\rv \xi_1, \rv \xi_2) \pi(\d \rv \xi_1, \d \rv \xi_2) \right]}^{\frac{1}{p}},
\end{array}
\end{equation}
where $d$ is a distance on $\Xi$, $p \ge 1$, and $\pi$ is a joint distribution on $\Xi \times \Xi$ with marginals $\P$ and $\Pb$.

An order-$p$ Wasserstein distributional ball with radius $\epsilon$ and center $\Pb$ is defined as
\[
B_{\epsilon,p} (\Pb) \defeq \{\P \in \cal M(\Xi) | W_{p}(\P, \Pb) \le \epsilon\}.
\]
If $\epsilon = 0$, the ball $ B_{\epsilon,p}(\Pb)$ reduces to the singleton that contains only $\Pb$.

Wasserstein balls admit the following concentration properties. Suppose the true population distribution $\Po$ has a light tail: That is, there exist $\alpha > p \ge 1$ (but $p \ne k/2$) and finite $A>0$ such that $\E_{\Po}\left[\exp \left(\|\rv \xi\|^{\alpha}\right)\right] \leq A$ (recall that $k$ is the dimension of $\rv \xi$). Then, there exist constants $c_{1}, c_{2}>0$ such that 
\begin{equation}\label{eq:wasserstein-concentration}
	\Po^n \left[\Po \in {B}_{\epsilon_n, p}(\Pnh)\right] \geq 1-\eta
\end{equation}
holds, for any $\eta \in(0,1]$, when
\begin{equation}\label{eq:wasserstein-concentration-epsilon}
\epsilon_n \ge 
\begin{cases}
	\left(\frac{\log \left(c_{1} / \eta\right)}{c_{2} n}\right)^{\min \{1 / k, 1 / 2\}} & \text { if } n \geq \frac{\log \left(c_{1} / \eta\right)}{c_{2}}, \\
	\left(\frac{\log \left(c_{1} / \eta\right)}{c_{2} n}\right)^{1 / \alpha} & \text { if } n<\frac{\log \left(c_{1} / \eta\right)}{c_{2}}.
\end{cases}
\end{equation}
Note that $c_{1}$ and $c_{2}$ are determined by $\alpha$, $A$, and $k$. This result is attributed to \cite[Thm. 18]{kuhn2019wasserstein}. The difficulty of applying this result in practice is that the involved constants $\alpha$ and $A$ cannot be exactly obtained because the population distribution $\Po$ is unknown, and so are $c_1$ and $c_2$.

When the support set $\Xi$ is finite and bounded (i.e., $\Po$ is discrete), there exist concentration properties of $\Pnh$ with respect to the Wasserstein distance that do not depend on unknown constants; see, e.g., \cite[pp. 42]{chen2020distributionally}.

\subsection{Wasserstein DRO Models}\label{subsec:wasserstein-reformulation}
\subsubsection{Existence of The Solution of Wasserstein DRO Models}
Suppose $(\Xi, d)$ is a proper,\footnote{A metric space $(\Xi, d)$ is proper if for any $\epsilon > 0$ and $\rv \xi_0 \in \Xi$, the closed $\epsilon$-ball $B_{\epsilon}(\rv \xi_0) \defeq \{\rv \xi \in \Xi | d(\rv \xi, \rv \xi_0) \le \epsilon \}$, is compact.} complete, and separable metric space, $h(\bm x, \rv \xi)$ is upper semi-continuous in $\rv \xi$ on $\Xi$ and $\E_{\Pb} |h(\bm x, \rv \xi)| < \infty$ for every $\bm x$, and $\Pb$ has a finite $p$-th moment: That is, for every $\rv \xi_0 \in \Xi$, we have
$
\int_{\Xi} d^p(\rv \xi, \rv \xi_0) \Pb(\d \rv \xi) < \infty.
$ 
Then, for every $\bm x$, the optimal value of the Wasserstein DRO problem
\begin{equation}\label{eq:wasserstein-DRO}
 \max_{\P: W_p(\P, \Pb) \le \epsilon}  \int_{\Xi} h(\bm x, \rv \xi) \P(\d \xi)
\end{equation} 
is finite if and only if there exist $\rv \xi_0 \in \Xi$ and $c_1(\bm x) > 0$ such that 
\begin{equation}\label{eq:wasserstein-finite}
	h(\bm x, \rv \xi) \le c_1(\bm x) [1 + d^p(\rv \xi, \rv \xi_0)], ~~~ \forall \rv \xi \in \Xi.
\end{equation}
In addition, the optimal value is attainable (by one $\P^*$ such that $W_p(\P^*, \Pb) \le \epsilon$) if there exist
$\rv \xi_0 \in \Xi$, $c_1(\bm x) > 0$, and $c_2 \in (0, p)$ such that 
\begin{equation}\label{eq:wasserstein-attainable}
	h(\bm x, \rv \xi) \le c_1(\bm x) [1 + d^{c_2}(\rv \xi, \rv \xi_0)], ~~~ \forall \rv \xi \in \Xi.
\end{equation}
The results above can be seen in, e.g., \cite{yue2021linear,chen2020distributionally}. Note that \eqref{eq:wasserstein-finite} is in analogy to the Lipschitz continuity which limits the \quotemark{change rate} of a function. To clarify further, for example, by letting $p \defeq 1$ and $d \defeq \|\cdot\|$ (i.e., the metric $d$ is induced by a norm $\|\cdot\|$), we can see that \eqref{eq:wasserstein-finite} is in analogy to $h(\bm x, \rv \xi) \le h(\bm x, \rv \xi_0) + L(\bm x) \|\rv \xi - \rv \xi_0\|$, for every $\rv \xi,\rv \xi_0 \in \Xi$, where $L(\bm x) > 0$ is the Lipschitz constant. For this reason, in literature, e.g., \cite{chen2020distributionally,gao2022distributionally}, \eqref{eq:wasserstein-finite} is called the \quotemark{finite-growth-rate} condition for the function $h$.

In this paper, for practicality, we consistently assume that the condition \eqref{eq:wasserstein-attainable} is satisfied so that it is safe to replace the supremum with the maximum in the DRO model.

\subsubsection{Reformulation of Wasserstein DRO Models}
According to, e.g., \cite[Thm. 1]{blanchet2021statistical} and \cite[Thm. 1]{gao2022distributionally},\footnote{The finite growth-rate assumption for the function $h$ in \cite{gao2022distributionally} is equivalent to require \eqref{eq:wasserstein-finite}; see Lemma 2 therein.} the Wasserstein DRO problem \eqref{eq:wasserstein-DRO} is equivalent to its Lagrangian dual:\footnote{$\lambda$ is the dual variable for the constraint in \eqref{eq:wasserstein-DRO}.}
\begin{equation}\label{eq:wasserstein-DRO-dual}
\displaystyle \min_{\lambda \ge 0} \left\{ \lambda \epsilon^p +  \int_{\Xi} \max_{\rv \xi \in \Xi} \Big\{h(\bm x, \rv \xi) - \lambda \cdot d^p(\rv \xi, \rvb \xi) \Big\} \Pb(\d \rvb \xi) \right\}.
\end{equation}
If $\Pb = \sum^n_{i = 1} \mub_i \delta_{\rv \xi_i}$ is a discrete distribution, e.g., an empirical distribution, supported on $n$ points $\{\rv \xi_i\}_{i \in [n]}$, then \eqref{eq:wasserstein-DRO-dual} becomes
\begin{equation}\label{eq:DRO-dual-discrete}
	\displaystyle \min_{\lambda \ge 0} \left\{ \lambda \epsilon^p +  \sum^n_{i=1} \mub_i \max_{\rv \xi \in \Xi} \Big\{h(\bm x, \rv \xi) - \lambda \cdot d^p(\rv \xi, \rv \xi_i) \Big\} \right\}.
\end{equation}

\subsubsection{Support Set of Worst-Case Distributions}
If $\Pb$ is supported on $n$ points in $\Xi$, then the worst-case distribution solving \eqref{eq:wasserstein-DRO} is supported on at most $n+1$ points in $\Xi$; see \cite[Thm. 4]{yue2021linear}, \cite[Cor.~2]{gao2022distributionally}.

Special cases when $h$ is concave or piece-wise linear in $\rv \xi$ or when $\Pb \defeq \Pnh$ are discussed in, e.g., \cite{esfahani2018data,shafieezadeh2019regularization,kuhn2019wasserstein,zhao2018data}.

\subsection{Glivenko--Cantelli Class, Donsker Class, and Brownian Bridge}\label{append:statistical-concepts}
Consider a function class $\cal F \defeq \{f: \Xi \to \R\}$.
\begin{definition}[Glivenko--Cantelli Class]\label{def:GC}
	Suppose for every $f \in \cal F$, $\E_{\Po} f(\rv \xi)$ is defined\footnote{At least one of the positive part and the negative part of $f$ has finite integral.} and finite; that is, $f$ is $\Po$-integrable. The function class $\cal F$ is called $\Po$-Glivenko--Cantelli if 
	\begin{equation}\label{eq:ULLN}
		\sup_{f \in \cal F} |\E_{\Pnh} f(\rv \xi) - \E_{\Po} f(\rv \xi)| \ascvg 0.
	\end{equation}
	Intuitively, if $\cal F$ is a Glivenko--Cantelli class, then the uniform strong law of large numbers holds on $\cal F$.
	\stp
\end{definition}

\begin{definition}[Donsker Class]\label{def:Donsker}
	Consider an empirical process 
	\begin{equation}\label{eq:ep-at-f}
		\bb G_n(f) \defeq \sqrt{n} [\E_{\Pnh} f(\rv \xi) - \E_{\Po} f(\rv \xi)],~~~\forall f \in \cal F
	\end{equation}
	indexed by the function class $\cal F$. That is, $\{\bb G_n(f)| f \in \cal F\}$ in \eqref{eq:ep-at-f} is a stochastic process indexed by $\cal F$; the randomness comes from the (random) empirical measure $\Pnh$. Suppose for every $f \in \cal F$, $\D_{\Po} f(\rv \xi)$ is defined and finite; that is, $f$ is $\Po$-square-integrable. The function class $\cal F$ is called $\Po$-Donsker if the empirical (stochastic) process $\bb G_n$ converges in distribution to a Brownian bridge (stochastic) process:
	\begin{equation}\label{eq:UCLT}
		\bb G_n \dcvg \bb G_{\Po},
	\end{equation}
	where $\bb G_{\Po}$ is a zero-mean $\Po$-Brownian bridge on $\cal F$ with uniformly continuous sample paths with respect to the semi-metric $\sqrt{\D_{\Po}[f_1(\rv \xi) - f_2(\rv \xi)]}$ between $f_1 \in \cal F$ and $f_2 \in \cal F$; in addition, $\bb G_{\Po}(f)$ is tight for every $f \in \cal F$; i.e., $\sup_{f \in \cal F} |\bb G_{\Po}(f)| < \infty$ in $\Po$-probability.
	Intuitively, if $\cal F$ is a Donsker class, then the uniform central limit theorem\footnote{The uniform central limit theorem is also known as the functional central limit theorem as a random function(al) sequence (i.e., the empirical process) converges to a random function(al) (i.e., a Brownian bridge).} holds on $\cal F$.
	\stp
\end{definition}

A zero-mean $\Po$-\bfit{Brownian bridge} $\bb G_{\Po}$ on $\cal F$ is a Gaussian process on $\cal F$ satisfying the following two conditions:
\begin{enumerate}
	\item For every $f \in \cal F$, $\bb G_{\Po}(f)$ is a random variable with mean of zero and variance of $\D_{\Po}(f)$.
	\item For every integer $r$ and every possible collection of functions $\{f_1, f_2, \ldots, f_r\}$ taken from $\cal F$, the random vector 
	$
	[\bb G_{\Po}(f_1),\bb G_{\Po}(f_2),\ldots,\bb G_{\Po}(f_r)]^\top
	$
    follows a $r$-dimensional multivariate Gaussian distribution with covariance between $\bb G_{\Po}(f_{i})$ and $\bb G_{\Po}(f_{j})$ being defined as
	$
	\E_{\Po}f_{i} \cdot f_{j} - \E_{\Po}f_{i} \cdot \E_{\Po}f_{j},
	$ 
	for every $i,j \in [r]$.
\end{enumerate}
Since the values of the Gaussian process $\bb G_{\Po}$ at some functions $f \in \cal F$ are strictly zeros, without any randomness, the Gaussian process $\bb G_{\Po}$ is called a Brownian bridge because some values are tied, for example, when $f$ is $\Po$-almost everywhere constant.

\section{Examples Satisfying The Conditions in Theorem \ref{thm:asym-properties}}\label{append:condition-examples-in-theorem-asym}
The conditions C1)-C6) in Theorem \ref{thm:asym-properties} are not practically restrictive as they are standard for the DRO model \eqref{eq:dro-method} \cite{kuhn2019wasserstein,esfahani2018data,yue2021linear,gao2022finite} and the SAA model \eqref{eq:erm-method} \cite{van1996weak}, \cite[Chap.~19]{vdv1998asymptotic}, \cite[Chap.~5]{shapiro2009lectures}. The only new requirement is Condition C2); i.e., $\sqrt{n} \beta_n \to 0$, which is also mild. Some specific situations where the conditions C1)-C6) in Theorem \ref{thm:asym-properties} hold are given below.

Condition C1) holds if, for example, \eqref{eq:wasserstein-attainable} is satisfied;
 
Condition C2) holds if, for example, $\beta_n \defeq \frac{\alpha}{n + \alpha}$, for every $n$, where $\alpha \ge 0$ is a constant;\footnote{Recall from \eqref{eq:dro-saa-obj} that this rule is used in the Dirichlet process prior for a Bayesian non-parametric model.}

Condition C3) holds if, for example, one of the following is satisfied:
	\begin{enumerate}[a)]
		\item The function class $\cal H$ is finite and every element of $\cal H$ is $\Po$-integrable;
		\item The parameter space $\cal X$ is bounded, every element of $\cal H$ is $\Po$-integrable, and there exists a $\Po$-integrable function $m(\rv \xi)$ such that
		\begin{equation}\label{eq:H-Lip}
			|h(\bm x_1, \rv \xi) - h(\bm x_2, \rv \xi)| \le m(\rv \xi) \|\bm x_1 - \bm x_2\|,~~~\forall \bm x_1, \bm x_2 \in \cal X,
		\end{equation}
		is satisfied $\Po$-almost surely.
		\item The parameter space $\cal X$ is compact, every element of $\cal H$ is $\Po$-integrable, every element $\bm x \mapsto h(\bm x, \rv \xi)$ in $\cal H$ is continuous on $\cal X$ $\Po$-almost-surely, and there exists a $\Po$-integrable envelop $m(\rv \xi)$ such that
		\begin{equation}\label{eq:H-envelop}
			\sup_{\bm x \in \cal X} |h(\bm x, \rv \xi)| \le m(\rv \xi)
		\end{equation}
		is satisfied $\Po$-almost surely.
		\item Every element in $\cal H$ is a finite linear combination of other $\Po$-integrable functions; that is,
		\begin{equation}\label{eq:H-linear-combination}
			\cal H \defeq \left\{\left.\sum^l_{i=1} x_i f_i(\rv \xi) \right| 
            \begin{array}{l}
            \bm x \in \cal X \subseteq \R^l, ~
            \E_{\Po} f_i(\rv \xi) < \infty 
            \end{array}
            \right\}.
		\end{equation}
		This type of $\cal H$ is popular in machine learning, for example, when the hypothesis class $\cal H$ is a well-designed reproducing kernel Hilbert space.
        
		\item The function class $\cal H$ is a Vapnik--Chervonenkis (VC) class; that is, the VC index of $\cal H$ is finite. For example, the function class in \eqref{eq:H-linear-combination} is a VC class.
	\end{enumerate}
 
Condition C4) holds if, for example, one of the following is satisfied:
	\begin{enumerate}[a)]
		\item For any $\bm x \in \cal X'$, if the function $h(\bm x, \rv \xi)$ is continuous at $\bm x$, $\Po$-almost surely, and the function $h(\bm x_n, \rv \xi)$ is dominated by a $\Po$-integrable envelop function $m(\rv \xi)$ for every $n$, then $v(\bm x)$ is continuous on $\cal X'$. This is by the dominated convergence theorem.
		\item The fact that $\bm x_n \to \bm x$ implies $v(\bm x_n) \to v(\bm x)$, for every $\bm x_n, \bm x \in \cal X'$. This means that $v(\bm x)$ is continuous on $\cal X'$.
		\item The function $h(\bm x, \rv \xi)$ is convex in $\bm x$, $\Po$-almost surely, so that $v(\bm x)$ is convex and therefore continuous in the interior of $\cal X'$. Note that the convexity of $v(\bm x)$ on $\cal X'$ implies its continuity in the interior of $\cal X'$.
		\item The function $h(\bm x, \rv \xi)$ is strictly convex (resp. strongly convex) in $\bm x$, $\Po$-almost surely, so that $v(\bm x)$ is strictly convex (resp. strongly convex). This means that $v(\bm x)$ is has a unique global minimizer on $\cal X'$.
	\end{enumerate}
 
Condition C5) holds if, for example, one of the following is satisfied:
	\begin{enumerate}[a)]
		\item The function class $\cal H$ is finite and every element of $\cal H$ is $\Po$-square-integrable;
		\item The parameter space $\cal X$ is bounded, every element of $\cal H$ is $\Po$-square-integrable, and there exists a $\Po$-square-integrable function $m(\rv \xi)$ such that
		\[
		|h(\bm x_1, \rv \xi) - h(\bm x_2, \rv \xi)| \le m(\rv \xi) \|\bm x_1 - \bm x_2\|,~~~\forall \bm x_1, \bm x_2 \in \cal X,
		\]
		is satisfied $\Po$-almost surely.
		\item Every element in $\cal H$ is a finite linear combination of other $\Po$-square-integrable functions; that is,
		\begin{equation}\label{eq:H-linear-combination-2}
			\cal H \defeq \left\{\left.\sum^l_{i=1} x_i f_i(\rv \xi) \right| 
            \begin{array}{l}
            \bm x \in \cal X \subseteq \R^l, ~
            \E_{\Po} [f_i(\rv \xi)]^2 < \infty
            \end{array}
            \right\}.
		\end{equation}
        This type of $\cal H$ is popular in machine learning, for example, when the hypothesis class $\cal H$ is a well-designed reproducing kernel Hilbert space.
		\item The function class $\cal H$ is a Vapnik--Chervonenkis (VC) class; that is, the VC index of $\cal H$ is finite.
	\end{enumerate}
 
Condition C6) holds if, for example, one of the following is satisfied:
	\begin{enumerate}[a)]
		\item There exists a $\Po$-square-integrable function $m(\rv \xi)$ such that
		\[
		|h(\bm x_1, \rv \xi) - h(\bm x_2, \rv \xi)| \le m(\rv \xi) \|\bm x_1 - \bm x_2\|,~~~\forall \bm x_1, \bm x_2 \in \cal X,
		\]
		is satisfied $\Po$-almost surely.
		
		\item Every element in $\cal H$ is a finite linear combination of other $\Po$-square-integrable functions; that is, \eqref{eq:H-linear-combination-2}. 
        This is because 
        $
		\E_{\Po}[\sum^l_{i=1} (\hat x_{n,i} - x_{0,i}) f_i(\rv \xi)]^2 
         \le  \sum^l_{i=1} (\hat x_{n,i} - x_{0,i})^2 \cdot \sum^l_{i=1} \E_{\Po}[f_i(\rv \xi)]^2 
         \pcvg 0,
        $ 
		as $\bmh x_{n} \pcvg \bm x_{0}$.
	\end{enumerate}
 
Therefore, if we assume the pointwise $m(\rv \xi)-$Lipschitz continuity of the function $h(\bm x, \rv \xi)$ where $m(\rv \xi)$ is $\Po$-square-integrable, then Conditions C3-C6 in Theorem \ref{thm:asym-properties} are simultaneously satisfied. In addition, if $\cal H$ takes the form as in \eqref{eq:H-linear-combination-2}, then Conditions C3-C6 in Theorem \ref{thm:asym-properties} are simultaneously satisfied as well. This two situations are sufficient for most of practical machine learning hypothesis classes.

\section{Appdices of Section \ref{sec:experiment}}\label{append:experiments}

\subsection{BDR Support Vector Machine}\label{append:experiments-svm}

The BDR SVM classifier is derived in Appendix \ref{append:bdr-form-svm}. 
Additional experimental results of the BDR SVM on the MNIST dataset are shown in Appendix \ref{append:experiments-figures}. 
Experimental tests of the BDR SVM on the UCI datasets \cite{dua2019}, i.e., the Ionosphere dataset, the Breast Cancer dataset, and the Adult dataset are reported in Appendices \ref{append:svm-experiments-ionosphere}, \ref{append:svm-experiments-breast-cancer}, and \ref{append:svm-experiments-adult}, respectively. 
The tuning method of hyperparameters is the same as that used on the MNIST dataset; see Subsection \ref{sec:expt-svm} in the main body of the paper. The average computational times (averaged over 100 independent trials) for the experiments are provided in Table \ref{tab:cpu-time}, which shows that BDR and DRO are computationally comparable.

\begin{table}[!htbp]
\caption{Averaged CPU Times (Unit: Seconds)}
\label{tab:cpu-time}
\begin{center}
\begin{tabular}{lccc}
\hline
               & BDR  & DRO & SAA  \\
\hline
MNIST (4 vs 9)   & 68   & 66  & 7   \\
Ionosphere     & 4.0  & 3.9 & 0.3  \\
Breast Cancer  & 4.2  & 3.5 & 0.4  \\
Adult (a1a)    & 4.0  & 3.8 & 0.3  \\
\hline
\end{tabular}
\end{center}
\end{table}

\subsubsection{BDR Formulation of SVM}\label{append:bdr-form-svm}
The BDR formulation of the SVM classification problem can be solved with a linear program
\begin{equation}\label{eq:bdr-form-svm}
	\begin{array}{cll}
		\displaystyle \min_{\bm x,~\bm \lambda,~\bm s} & \displaystyle \beta_n \epsilon \lambda_0 + \frac{1}{n} \sum^n_{i = 1}  \lambda_i \\
		s.t. & \displaystyle 1 - Y_i \cdot \langle \bm x, \rvu I_i \rangle \leq \lambda_i, & \forall i \in [n], \\
		& \displaystyle 1 + Y_i \cdot \langle \bm x, \rvu I_i \rangle - \kappa\lambda_0 \leq \lambda_i, & \forall i \in [n], \\
		& 0 \le \lambda_i, &\forall i \in \{0\}\cup [n],\\
		& \sum_{j=1}^l s_j \leq \lambda_0,\\
		& x_j \leq s_j, -x_j \leq s_j, 0 \le s_j, & \forall j \in [l],\\
        & \bm x \in \R^l,~\bm \lambda \in \R^{n+1},~\bm s \in \R^l,
	\end{array} 
\end{equation}
where $n$ is the size of training samples and $\bm \lambda \defeq (\lambda_0,\lambda_1,\ldots,\lambda_n)$. The derivation process is trivial and therefore omitted here. Just note that the dual norm of the $\infty$-norm is the $1$-norm, and in \cite[Eq.~(19)]{shafieezadeh2019regularization} we have $\bm C = \bm 0$ and $\bm d = \bm 0$ (i.e., $\cal X \defeq \R^l$). In this special case, BDR amounts to DRO, where BDR just employs a $\beta_n$-shrunken radius $\beta_n \epsilon$ for the distributional uncertainty ball. However, this motivational relation no longer holds for complicated learning tasks such as BDR deep learning.

\subsubsection{Additional Experiments on The MNIST Dataset}\label{append:experiments-figures}

Experimental results of the average out-of-sample accuracy on the MNIST dataset for 3 vs 8 over 100 independent trials are shown in Fig. \ref{fig:fig2}, while for 1 vs 7 are in Fig. \ref{fig:fig3}. From the two figures, we can see that the conclusions are consistent with those given in the main body of the paper (i.e., Subsection \ref{sec:expt-svm}): For example, BDR is more robust than DRO to the choice of the radius $\epsilon$ of the distributional uncertainty ball.

\begin{figure}[!htbp]
	\centering
	\subfigure[Mean accuracy against $\epsilon$ \& $\kappa$ ]{
		\includegraphics[height=2.8cm]{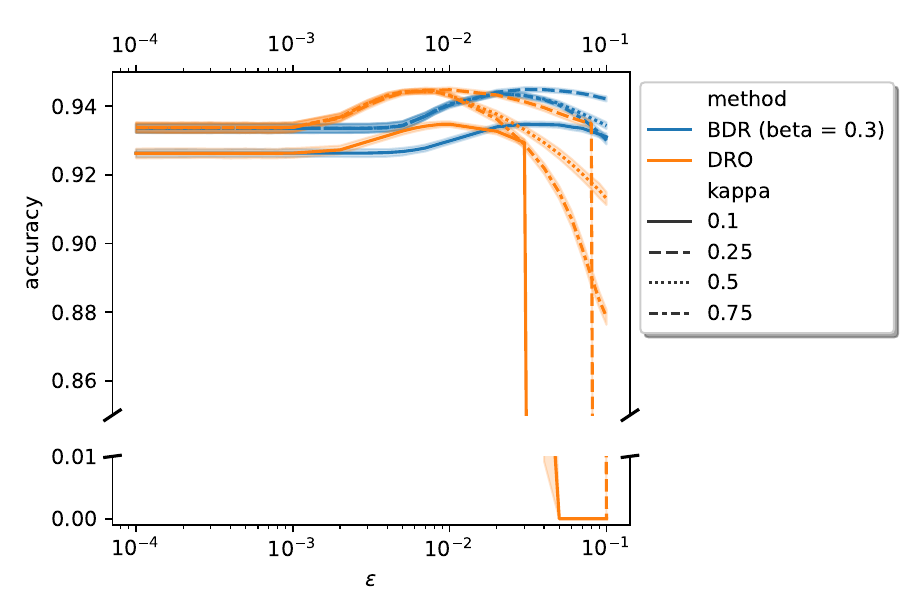}
	}\hspace{-1em}
	\subfigure[Box plot of accuracy]{
		\includegraphics[height=2.8cm]{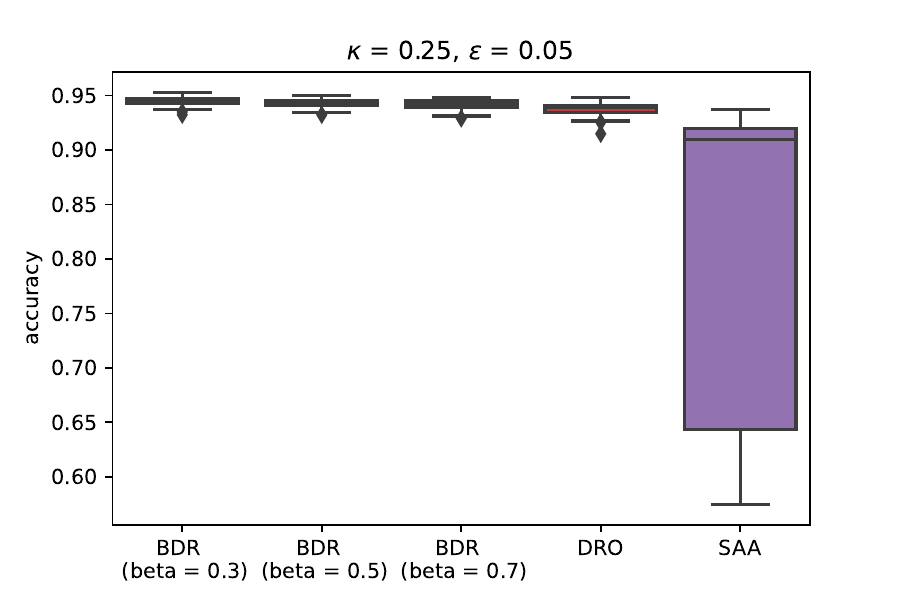}
	}
	\caption{Average out-of-sample accuracy on the MNIST dataset for 3 vs 8 over 100 independent trials.}
	\label{fig:fig2}
\end{figure}

\begin{figure}[!htbp]
	\centering
	\subfigure[Mean accuracy against $\epsilon$ \& $\kappa$  ]{
		\includegraphics[height=2.8cm]{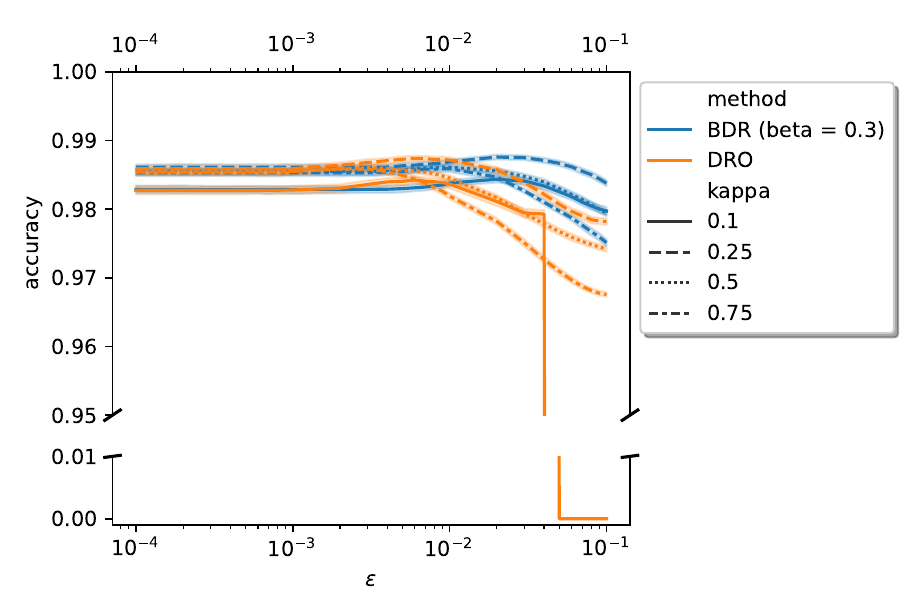}
	}\hspace{-1em}
	\subfigure[Box plot of accuracy]{
		\includegraphics[height=2.8cm]{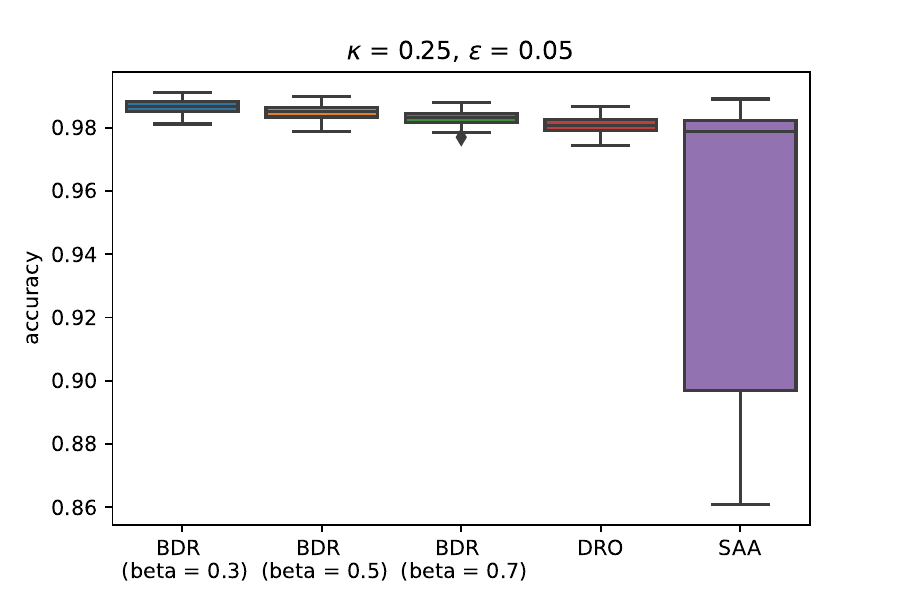}
	}
	\caption{Average out-of-sample accuracy on the MNIST dataset for 1 vs 7 over 100 independent trials.}
	\label{fig:fig3}
\end{figure}

\subsubsection{Experiments on The UCI Ionosphere Dataset}\label{append:svm-experiments-ionosphere}
The results on the UCI Ionosphere dataset are shown in Fig. \ref{fig:Ionosphere}.
\begin{figure}[!htbp]
	\centering
	\subfigure[Mean accuracy against $\epsilon$ \& $\kappa$  ]{
		\includegraphics[height=2.8cm]{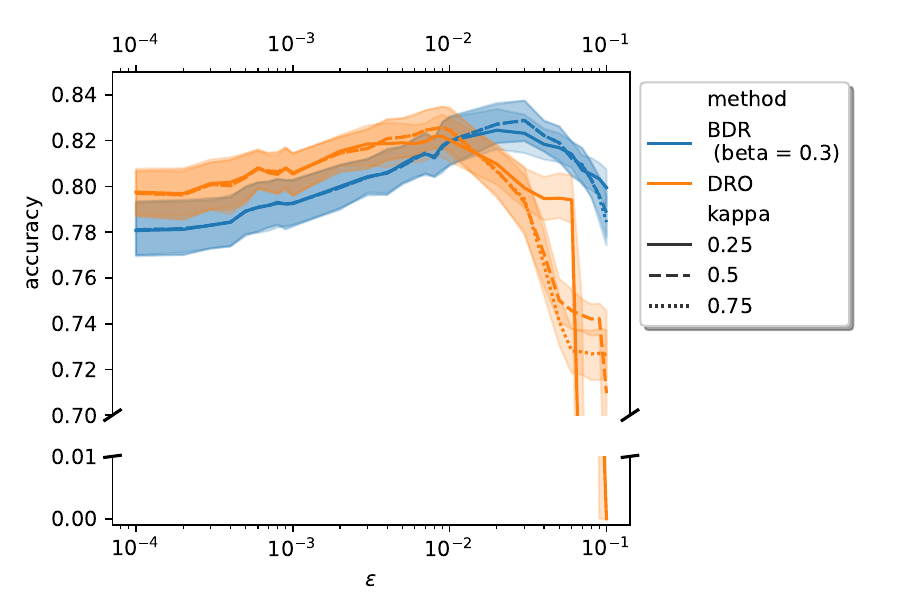}
	}\hspace{-1em}
	\subfigure[Box plot of accuracy]{
		\includegraphics[height=2.8cm]{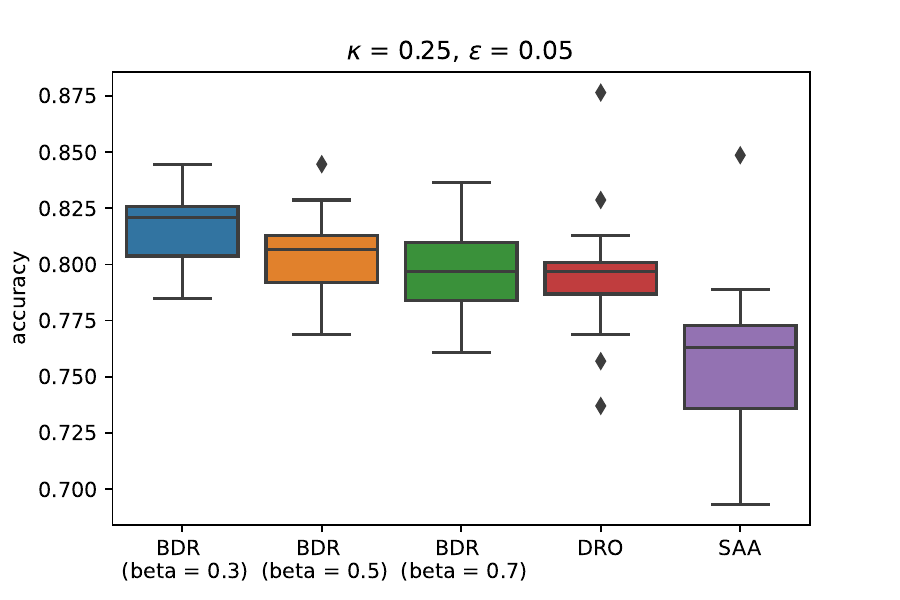}
	}
	\caption{Average out-of-sample accuracy on the UCI Ionosphere dataset over 100 independent trials.}
	\label{fig:Ionosphere}
\end{figure}

\subsubsection{Experiments on The UCI Breast Cancer Dataset}\label{append:svm-experiments-breast-cancer}
The results on the UCI Breast Cancer dataset are shown in Fig. \ref{fig:Breast-Cancer}.
\begin{figure}[!htbp]
	\centering
	\subfigure[Mean accuracy against $\epsilon$ \& $\kappa$  ]{
		\includegraphics[height=2.8cm]{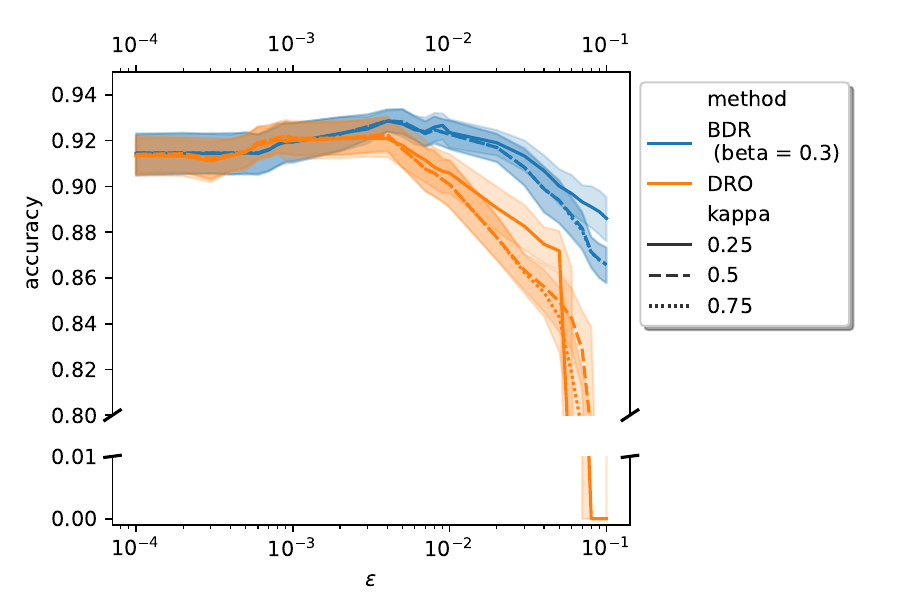}
	}\hspace{-1em}
	\subfigure[Box plot of accuracy]{
		\includegraphics[height=2.8cm]{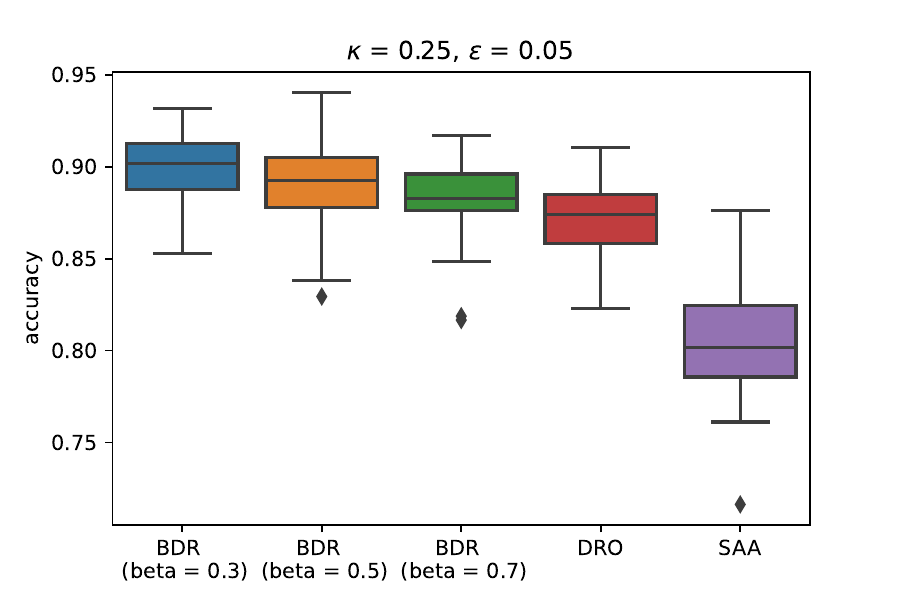}
	}
	\caption{Average out-of-sample accuracy on the UCI Breast Cancer dataset over 100 independent trials.}
	\label{fig:Breast-Cancer}
\end{figure}

\subsubsection{Experiments on The UCI Adult Dataset}\label{append:svm-experiments-adult}
The results on the UCI Adult dataset are shown in Figs. \ref{fig:adult-a1a}-\ref{fig:adult-a5a}.

\begin{figure}[!htbp]
	\centering
	\subfigure[Mean accuracy against $\epsilon$ \& $\kappa$  ]{
		\includegraphics[height=2.8cm]{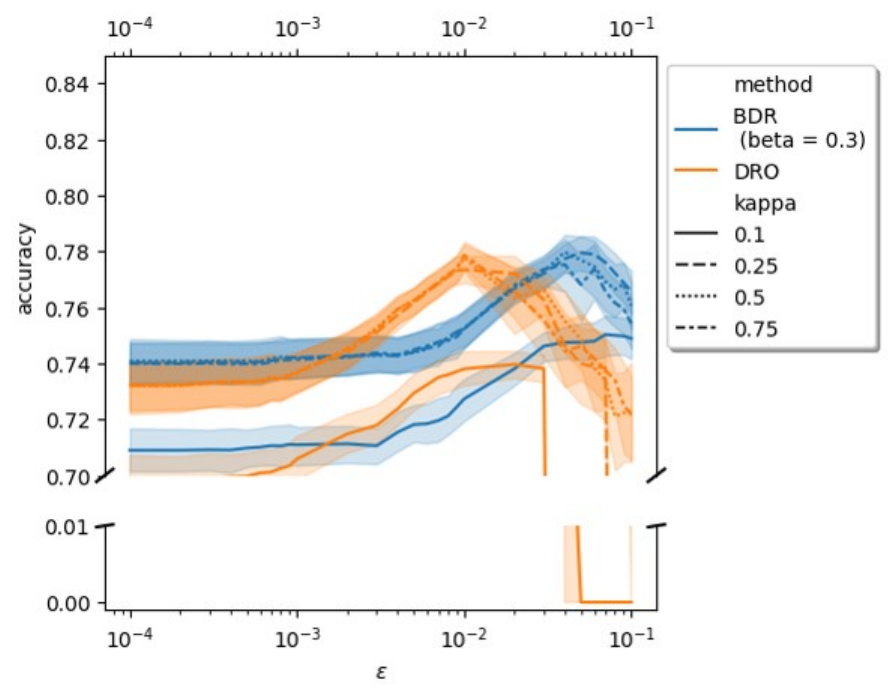}
	}
	\subfigure[Box plot of accuracy]{
		\includegraphics[height=2.8cm]{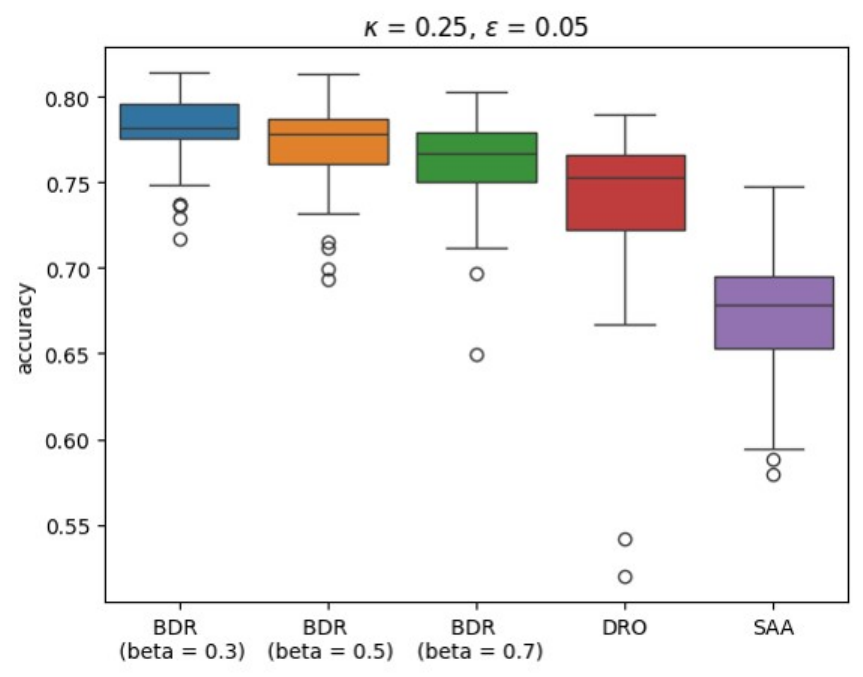}
	}
	\caption{Average out-of-sample accuracy on the UCI Adult dataset (a1a) over 100 independent trials.}
	\label{fig:adult-a1a}
\end{figure}

\begin{figure}[!htbp]
	\centering
	\subfigure[Mean accuracy against $\epsilon$ \& $\kappa$  ]{
		\includegraphics[height=2.8cm]{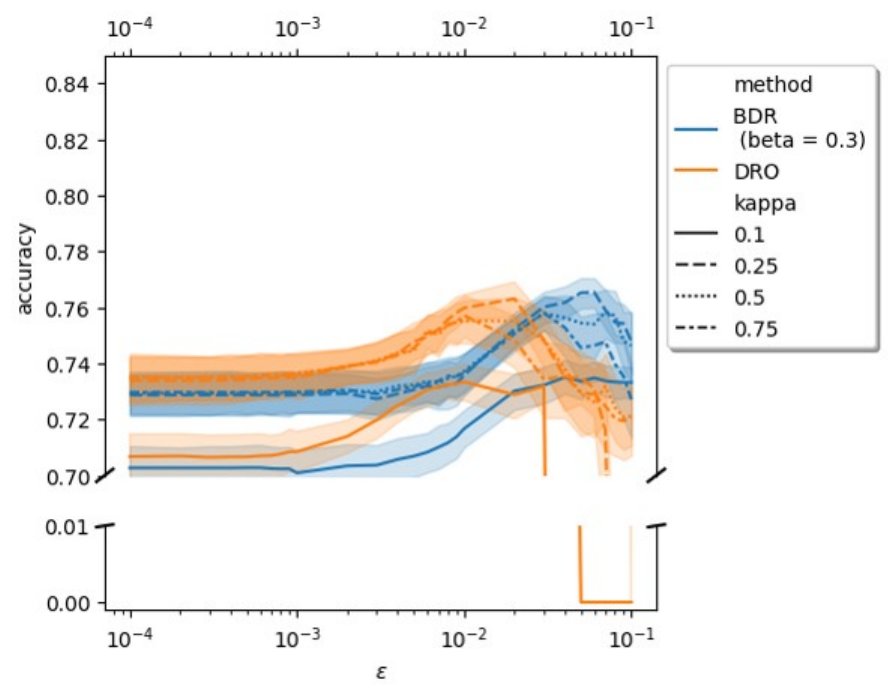}
	}
	\subfigure[Box plot of accuracy]{
		\includegraphics[height=2.8cm]{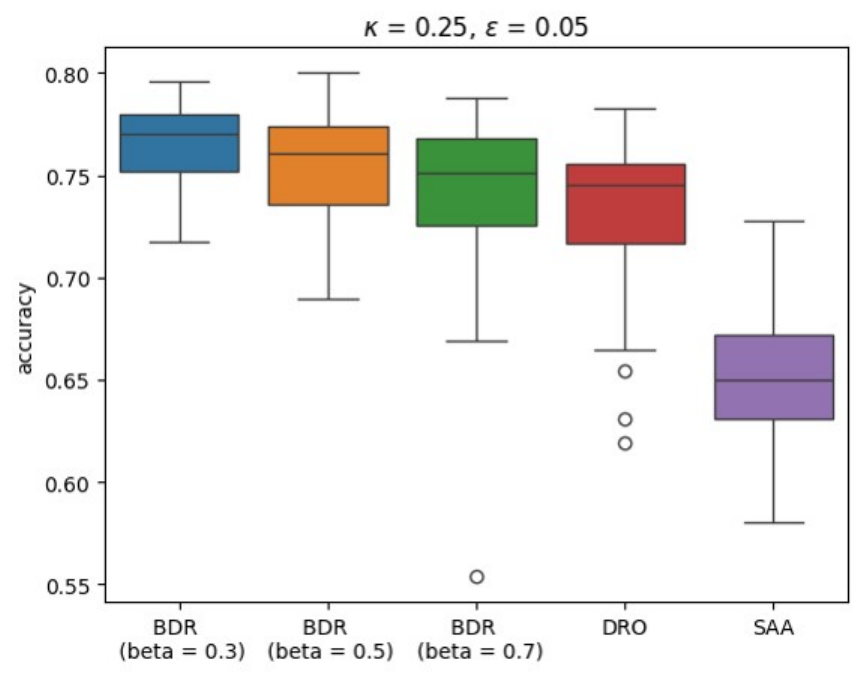}
	}
	\caption{Average out-of-sample accuracy on the UCI Adult dataset (a2a) over 100 independent trials.}
	\label{fig:adult-a2a}
\end{figure}

\begin{figure}[!htbp]
	\centering
	\subfigure[Mean accuracy against $\epsilon$ \& $\kappa$  ]{
		\includegraphics[height=2.8cm]{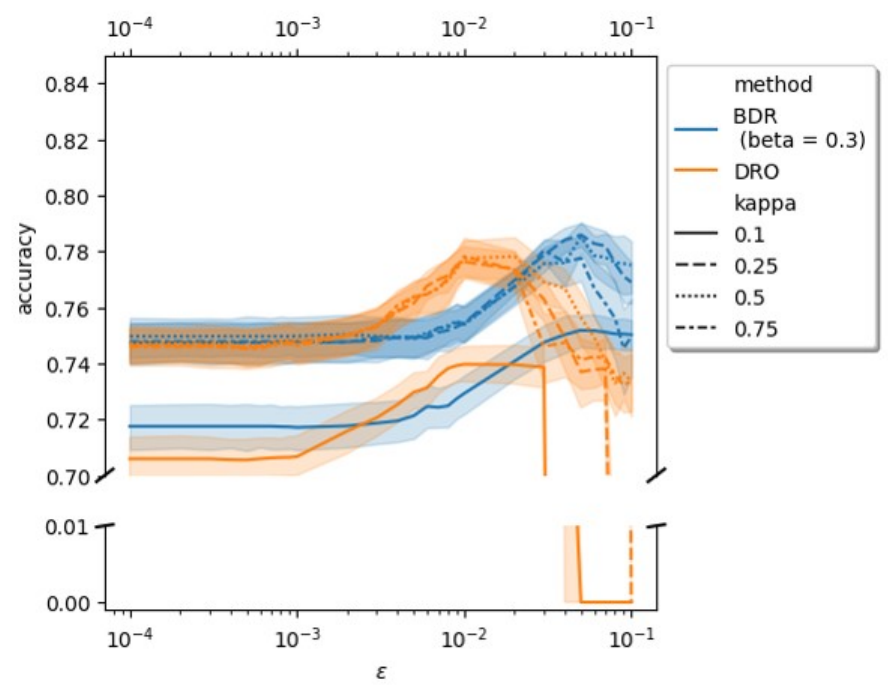}
	}
	\subfigure[Box plot of accuracy]{
		\includegraphics[height=2.8cm]{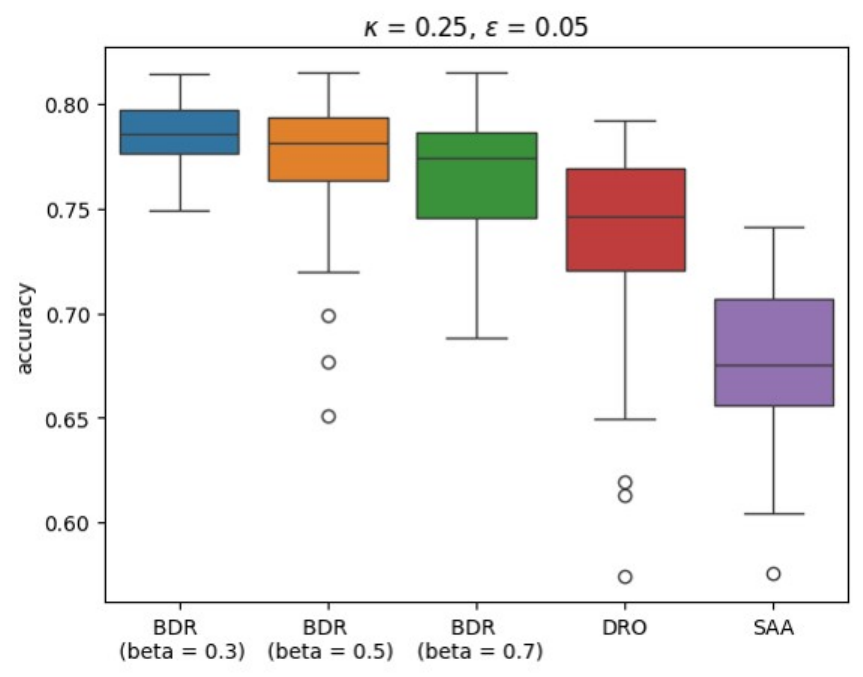}
	}
	\caption{Average out-of-sample accuracy on the UCI Adult dataset (a3a) over 100 independent trials.}
	\label{fig:adult-a3a}
\end{figure}

\begin{figure}[!htbp]
	\centering
	\subfigure[Mean accuracy against $\epsilon$ \& $\kappa$  ]{
		\includegraphics[height=2.8cm]{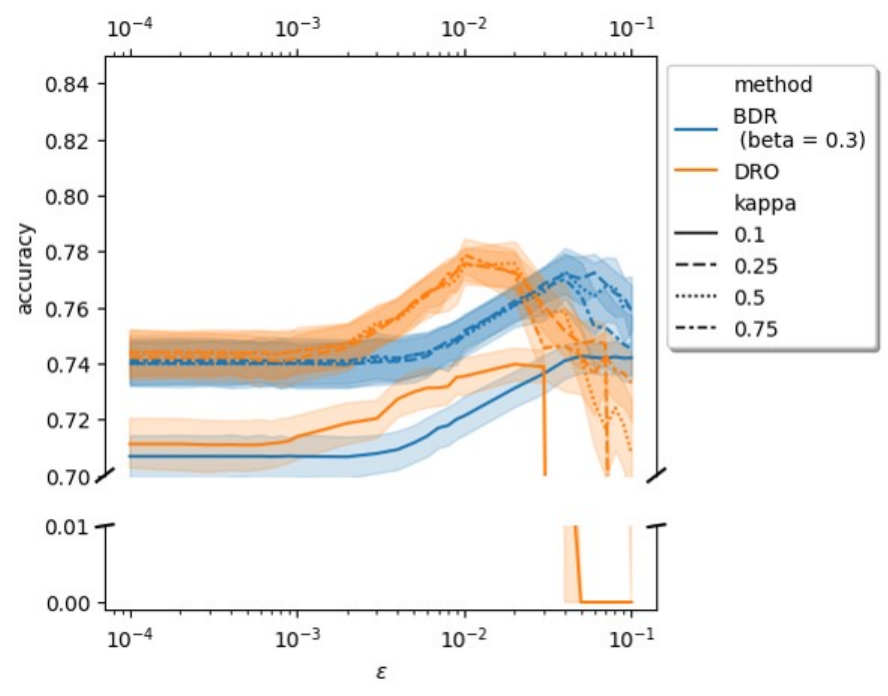}
	}
	\subfigure[Box plot of accuracy]{
		\includegraphics[height=2.8cm]{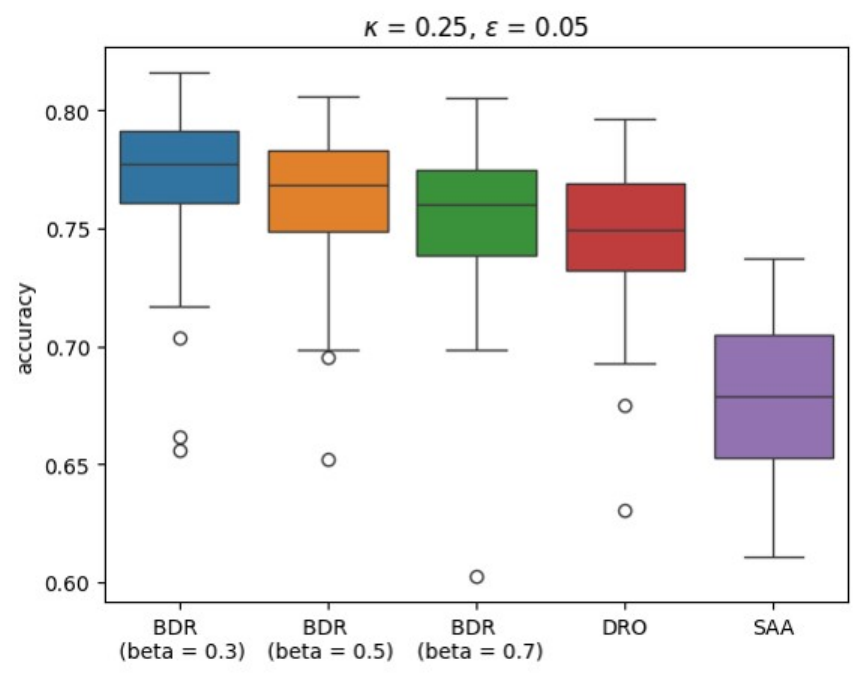}
	}
	\caption{Average out-of-sample accuracy on the UCI Adult dataset (a4a) over 100 independent trials.}
	\label{fig:adult-a4a}
\end{figure}

\begin{figure}[!htbp]
	\centering
	\subfigure[Mean accuracy against $\epsilon$ \& $\kappa$  ]{
		\includegraphics[height=2.8cm]{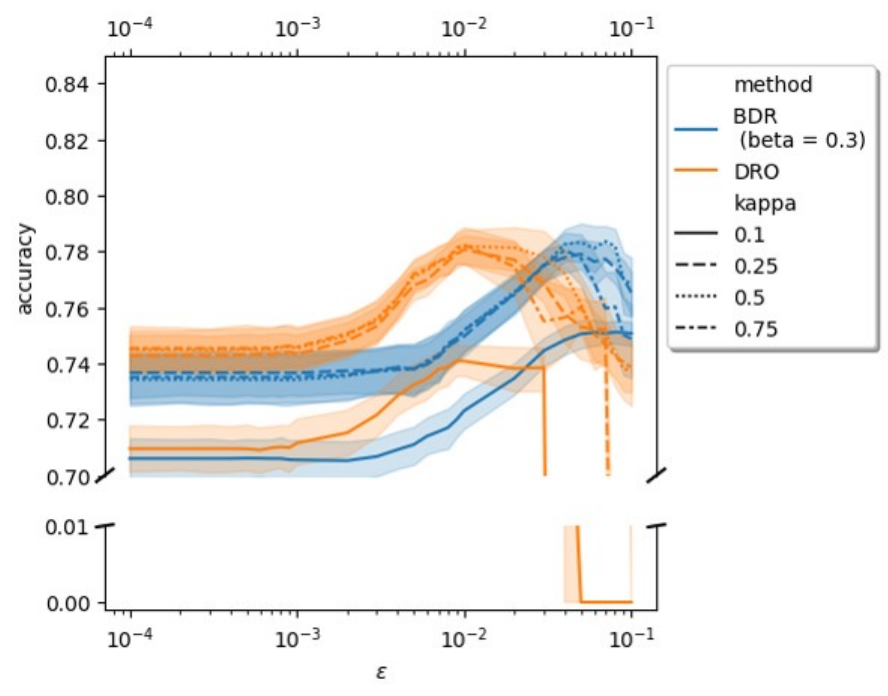}
	}
	\subfigure[Box plot of accuracy]{
		\includegraphics[height=2.8cm]{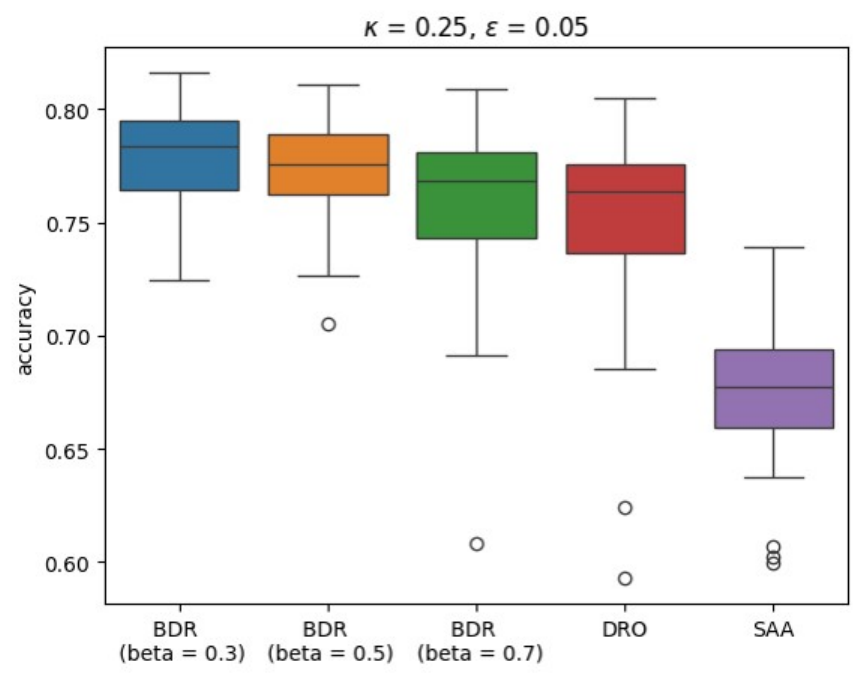}
	}
	\caption{Average out-of-sample accuracy on the UCI Adult dataset (a5a) over 100 independent trials.}
	\label{fig:adult-a5a}
\end{figure}

\subsection{Deep BDR Learning}\label{append:experiments-resnet}
\subsubsection{Dataset Overview}
We provide the numerical details of the utilized datasets in Table~\ref{tab:datasets_detail}.
\begin{table}[!htbp]
\centering
\caption{Summary of Datasets}\label{tab:datasets_detail}
\begin{tabular}{lccc}
\toprule
\textbf{Dataset Name} & \textbf{Train Data Size} & \textbf{Test Data Size} & \textbf{Categories} \\
\midrule\midrule
MNIST        & 60,000          & 10,000         & 10 \\
CIFAR-10     & 50,000          & 10,000         & 10 \\
CIFAR-100    & 50,000          & 10,000         & 100 \\
ModelNet40   & 9,843           & 2,468          & 40 \\
\bottomrule
\end{tabular}

\label{table:datasets}
\end{table}

\subsubsection{Training Details}
All experiments are executed using Python 3.9, PyTorch 1.2, on a NVIDIA TITAN V GPU, ensuring a stable computing environment for deep learning tasks. We also note that in practice we implement the mini-batch version of Algorithm~\ref{algo:grad-descent} which is just similar to the mini-batch SGD.

In 2D image classification, the WideResNet-28 model's training on CIFAR-10 and CIFAR-100 utilizes a batch size of 128 across 200 epochs. The learning rate is initially 0.1, adjusted down to 0.01 at epoch 100 and further to 0.001 at epoch 150. We employ SGD with momentum for optimization, setting weight decay at 0.0005. Furthermore, the training of model on MNIST uses Adam optimizer with learning rate 0.001 without decaying.
There is no extra data augmentation strategy except for the DRO-based adversarial sample construction.

In 3D point cloud classification, we sample 1,024 points of the 2048-point data as the input. PointNet training setup includes a batch size of 32, up to 250 epochs, and an initial learning rate of 0.001, adjusted by a decay mechanism. The Adam optimizer is used for training. The learning rate's decay step is set to 200,000, with a decay rate of 0.7. On the other hand, DGCNN training specifies a batch size of 32, 250 epochs, and a learning rate of 0.1 with SGD (momentum 0.9). It includes a cosine annealing for adaptive learning rate adjustments. There is no extra data augmentation strategy except for the DRO-based adversarial sample construction.

\subsubsection{Adversarial Training Details}
In our study, the PGD attack~\cite{madry2018towards} within a $2$-norm ball is implemented to generate adversarial samples based on DRO cost, of which the parameters are carefully chosen to ensure an effective yet subtle modification of data. 

In the image classification, the epsilon $\epsilon$, defining the maximum perturbation limit per pixel, is set to $0.03$, to maintain the visual similarity of the adversarial images to their originals. The step size $\alpha$, determining the granularity of each update towards the adversarial direction, is chosen as $0.008$. This fine-grained approach allows for precise control over the perturbation process. We iterate this process for $10$ iterations to achieve a balance between perturbation invisibility and the success rate of the attack. In the point cloud classification, we set $\epsilon$ to 0.05, $\alpha$ to 0.01, and the iteration number to 7.

\subsubsection{Search Complexity Analysis}\label{append:complexity} 
Here we provide the quantitative analysis of the time complexity. Let $t_{\text{SAA}}$, $t_{\text{DRO}}$, and $t_{\text{BDR}}$ denote the training time per epoch for SAA, DRO, and BDR method; $t_{\text{BDR}}$ can be divided into two phases: $\beta$-searching phase by cross-validation and BDR training phase, formally, 
\begin{equation} 
t_{\text{BDR}} = t_{\text{Search}} + t_{\text{Train}}. 
\end{equation} 
For a selected $\beta$, we have $t_{\text{Train}} = \beta t_{\text{DRO}} + (1 - \beta) t_{\text{SAA}}$ due to Algorithm \ref{algo:grad-descent}. Suppose the set of candidate $\beta$ is \{$\beta_1,\cdots, \beta_k$\}, we implement the training on each $\beta_i$ as 
\begin{equation} 
t_{\text{Search}} = r \sum_{i=1}^k t_{\text{Train with }\beta_i} = r \sum_{i=1}^k (\beta_i t_{\text{DRO}} + (1 - \beta_i) t_{\text{SAA}}) 
\end{equation} 
where $r\in (0, 1]$ is a factor standing for the effect of early stop for cross-validation. Thus, the total time when $\beta^*$ is selected as the optimal one is \begin{equation} 
\begin{array}{cl}
t_{\text{BDR}} &= r \sum_{i=1}^k t_{\text{Train with }\beta_i} + t_{\text{Train}} \\

&= r \sum_{i=1}^k (\beta_i t_{\text{DRO}} + (1 - \beta_i) t_{\text{SAA}}) + \\

& \quad \beta^* t_{\text{DRO}} + (1 - \beta^*) t_{\text{SAA}} 
\end{array}
\end{equation} 
if we consider the upper bound of total time when $r=1$, we have 
\begin{equation} 
t_{\text{BDR}} \leq (\max_{i} \beta_i + \sum_{i=1}^k \beta_i ) t_{\text{DRO}} + (k + 1 -\sum_{i=1}^k \beta_i- \min_{i} \beta_i) t_{\text{SAA}}. 
\end{equation} 
Considering the search set \{$0.5,0.1, 0.05, 0.01$\}, we have 
\begin{equation} t_{\text{BDR}} \leq 1.16 t_{\text{DRO}} + 3.83 t_{\text{SAA}} = 1.6 t_{\text{DRO}} 
\end{equation} 
The upper bound of $ t_{\text{BDR}}$ is $1.6 t_{\text{DRO}}$ when $t_{\text{DRO}}\backslash t_{\text{SAA}} = 9 $ in practice. However, since the usage of early stop (i.e., $r<1$) and $t_{\text{Train}}<\max_{i} \beta_i t_{\text{DRO}} + (1 - \min_{i} \beta_i) t_{\text{SAA}}$. In practice, we always get $t_{\text{BDR}}\approx t_{\text{DRO}}$, as shown in Table~\ref{tab:method_time}. Table~\ref{tab:method_time} provides the time used per epoch for DRO, SAA, and BDR learning for CIFAR-10 and 50\% data experiment. Our searching time is the equivalent time used per epoch for all $\beta$ validation training.

\begin{table}[!htbp]
\centering\caption{Averaged time used per epoch for various methods in CIFAR-10 and 50\% data experiment}\label{tab:method_time}
\begin{tabular}{ll}
\toprule
Method & Time (s) \\
\midrule
DRO & 888.0 $\pm$ 7.3 \\
SAA & 101.1 $\pm$ 1.5 \\ \midrule 
BDR ($\beta$=0.01) & 111.2 $\pm$ 5.6 \\
BDR ($\beta$=0.05) & 143.3 $\pm$ 12.5 \\
BDR ($\beta$=0.1) & 184.4 $\pm$ 16.5 \\
BDR ($\beta$=0.5) & 539.3 $\pm$ 29.3 \\ \midrule
BDR search time & 694.0 \\
\bottomrule
\end{tabular}
\end{table}

As the estimated $\beta^*$ is 0.05, the total time of the BDR method is 837.3s, which is less than the DRO method. The rationale behind this is that DRO optimization often requires significantly more time than SAA. Algorithm~\ref {algo:grad-descent}, by integrating the two, effectively reduces the overall time required for DRO optimization even though we have to conduct the searching by cross-validation.

\bibliographystyle{IEEEtran}
\bibliography{References}

\newpage

\setcounter{page}{1}

\begin{center}
    \bf Supplementary Materials
\end{center}

\section{Appendices of Section \ref{subsec:properties-BDR}}
\subsection{Proof of Theorem \ref{thm:asym-properties}}\label{append:proof-asym-properties}
\begin{proof}
	For every $\bm x \in \cal X'$ such that the SAA and DRO objectives are bounded in $\Pon$-probability, we have
    {\footnotesize
	\[
    \begin{array}{cl}
	v_{b,n}(\bm x) - v_{n}(\bm x) &= \beta_n \max_{\P \in B_{\epsilon_n}(\Pnh)} \E_{\P} h(\bm x, \rv \xi) + \\
    & \quad \quad (1-\beta_n) \E_{\Pnh} h(\bm x, \rv \xi) - \E_{\Pnh} h(\bm x, \rv \xi) \\
    & \pcvg 0,
    \end{array}
	\]}because $\beta_n \to 0$.
	Hence, by Slutsky's theorem, $v_{b,n}(\bm x)$ shares the same asymptotic properties with $v_n(\bm x)$, for every $\bm x \in \cal X'$. As a result, Statement S1) and S4) are immediate due to the conventional strong law of large numbers, i.e.,
    {\footnotesize
	\[
	v_{n}(\bm x) = \E_{\Pnh} h(\bm x, \rv \xi) \ascvg \E_{\Po}h(\bm x, \rv \xi) = v(\bm x),~~~\forall x \in \cal X',
	\]}and the conventional central limit theorem
    {\footnotesize
	\[
	\sqrt{n}[\E_{\Pnh} h(\bm x, \rv \xi) - \E_{\Po}h(\bm x, \rv \xi)] \dcvg N(0, \D_{\Po}h(\bm x, \rv \xi)), ~~~\forall x \in \cal X',
	\]}respectively.
	
	Suppose the DRO sub-problem is solved by $\Pnb$ such that
    {\footnotesize
	\[
	\E_{\Pnb} h(\bm x, \rv \xi) = \max_{\P \in B_{\epsilon_n}(\Pnh)} \E_{\P} h(\bm x, \rv \xi),
	\]}for $\bm x \in \cal X'$.
	Note that this assumption is reasonable due to Condition C1). We have
    {\footnotesize
	\[
	\begin{array}{l}
		\displaystyle \sup_{\bm x \in \cal X'}|v_{b,n}(\bm x) - v(\bm x)| \\
		\quad \quad = \displaystyle \sup_{\bm x \in \cal X'} \left| \E_{\beta_n \Pnb + (1 - \beta_n) \Pnh} h(\bm x, \rv \xi) - \E_{\Po} h(\bm x, \rv \xi)\right|  \\
		\quad \quad = \displaystyle \sup_{\bm x \in \cal X'}\left|\beta_n \Big[ \E_{\Pnb} h(\bm x, \rv \xi) - \E_{\Po} h(\bm x, \rv \xi) \Big] + \right. \\
        \quad \quad \quad \quad \left. (1 - \beta_n) \Big[ \E_{\Pnh} h(\bm x, \rv \xi) - \E_{\Po} h(\bm x, \rv \xi)\Big]\right|  \\
		\quad \quad \le \displaystyle \beta_n \sup_{\bm x \in \cal X'} \left| \E_{\Pnb} h(\bm x, \rv \xi) - \E_{\Po} h(\bm x, \rv \xi) \right| + \\
        \quad \quad \quad \quad (1 - \beta_n) \sup_{\bm x \in \cal X'} \left| \E_{\Pnh} h(\bm x, \rv \xi) - \E_{\Po} h(\bm x, \rv \xi) \right|  \\
		\quad \quad  \pcvg 0.
	\end{array}
	\]}The first term vanishes because $\beta_n$ approaches zero and $\sup_{\bm x \in \cal X'} \left| \E_{\Pnb} h(\bm x, \rv \xi) - \E_{\Po} h(\bm x, \rv \xi) \right|$ is finite on $\cal X'$, whereas the second term decays because $\cal H$ is $\Po$-Glivenko--Cantelli. As a result, $\min_{\bm x} v_{b, n} \pcvg \min_{\bm x} v(\bm x)$, as $n \to \infty$, because
    {\footnotesize
	\[
    \begin{array}{cl}
	|v_{b, n} (\bmh x_{b, n}) - v(\bm x_0) | &= \displaystyle |\min_{\bm x \in \cal X'} v_{b, n} (\bm x) - \min_{\bm x \in \cal X'} v(\bm x) | \\
    &\le \displaystyle \sup_{\bm x \in \cal X'} |v_{b, n} (\bm x) - v(\bm x) | \\
    &\pcvg 0.
    \end{array}
	\]}This is Statement S2).
	
	For every $\bmh x_{b,n} \in \hat{\cal X}_{b,n}$, we have
    {\footnotesize
	\[
	\begin{array}{l}
		\displaystyle |v(\bmh x_{b,n}) - \min_{\bm x \in \cal X'} v(\bm x)| \\
        \quad \le \displaystyle |v(\bmh x_{b,n}) - v_{b,n}(\bmh x_{b,n})| + \displaystyle |v_{b,n}(\bmh x_{b,n}) - \min_{\bm x \in \cal X'} v(\bm x)| \\
		\quad \le \displaystyle \sup_{\bm x \in \cal X'} |v_{b,n}(\bm x) - v(\bm x)| + \displaystyle |v_{b,n}(\bmh x_{b,n}) - \min_{\bm x \in \cal X'} v(\bm x)| \\
		\quad \pcvg 0.
	\end{array}
	\]}Therefore, due to Condition C4), there exists $\bm x_0 \in \cal X_0$ such that $\bmh x_{b,n} \pcvg \bm x_0$, which proves Statement S3). (One may use a contradiction, by assuming that the limit point of $\bmh x_{b,n}$ is not in $\cal X_0$, to verify this claim.)
	
	By Conditions C5) and C6), we have $\bb G_n h(\bmh x_n, \rv \xi) \dcvg \bb G_{\Po} h(\bm x_0, \rv \xi) \sim N(0, \D_{\Po} h(\bm x_0, \rv \xi))$; see \cite[Lemma 19.24]{vdv1998asymptotic}.
	On the one hand, we have
    {\footnotesize
	\[
	\begin{array}{l}
		\sqrt{n}[v_{b,n}(\bmh x_{b,n}) - v(\bm x_0)] \\
        \quad =  \E_{\sqrt{n}[\beta_n \Pnb + (1 - \beta_n) \Pnh]} h(\bmh x_{b,n}, \rv \xi) - \E_{\sqrt{n}\Po} h(\bm x_0, \rv \xi) \\
		\quad \le \E_{\sqrt{n}[\beta_n \Pnb + (1 - \beta_n) \Pnh]} h(\bm x_0, \rv \xi) - \E_{\sqrt{n}\Po} h(\bm x_0, \rv \xi) \\
		\quad = \sqrt{n}[\E_{\Pnh} h(\bm x_0, \rv \xi) - \E_{\Po}h(\bm x_0, \rv \xi)] + o_p(1) \\
		\quad \dcvg \bb G_{\Po} h(\bm x_0, \rv \xi),
	\end{array}
	\]}where the second equality is because $\sqrt{n} \beta_n \to 0$ and $o_p(1)$ in the second equality denotes the \quotemark{small-Oh} notation (i.e., $a_n = o_p(1)$ implies that the sequence $\{a_n\}$ converges in probability to zero as $n \to \infty$), and the convergence in distribution is due to the fact that $h(\bm x_0, \cdot)$ is in $\cal H$ and $\cal H$ is $\Po$-Donsker. By Slutsky's theorem, it implies that 
	\[
    \begin{array}{l}
	\E_{\sqrt{n}[\beta_n \Pnb + (1 - \beta_n) \Pnh]} h(\bm x_0, \rv \xi) - \E_{\sqrt{n}\Po} h(\bm x_0, \rv \xi) \\
    \quad \dcvg \bb G_{\Po} h(\bm x_0, \rv \xi).
    \end{array}
	\]
	On the other hand, we have 
    {\footnotesize
	\[
	\begin{array}{l}
		\sqrt{n}[v_{b,n}(\bmh x_{b,n}) - v(\bm x_0)] \\
        \quad = \E_{\sqrt{n}[\beta_n \Pnb + (1 - \beta_n) \Pnh]} h(\bmh x_{b,n}, \rv \xi) - \E_{\sqrt{n}\Po} h(\bm x_0, \rv \xi) \\
		\quad \ge \E_{\sqrt{n}[\beta_n \Pnb + (1 - \beta_n) \Pnh]} h(\bmh x_{b,n}, \rv \xi) - \E_{\sqrt{n}\Po} h(\bmh x_{b,n}, \rv \xi) \\
		\quad = \sqrt{n}[\E_{\Pnh} h(\bmh x_{b,n}, \rv \xi) - \E_{\Po}h(\bmh x_{b,n}, \rv \xi)] + o_p(1) \\
		\quad \dcvg \bb G_{\Po} h(\bmh x_{b,n}, \rv \xi) \\
		\quad = \bb G_{\Po} h(\bm x_0, \rv \xi) + o_p(1),
	\end{array}
	\]}where the second equality is because $\sqrt{n}\beta_n \to 0$, the convergence in distribution is due to the fact that $h(\bmh x_{b,n}, \cdot)$ is in $\cal H$ and $\cal H$ is $\Po$-Donsker, and the third equality is because the function $\bb G_{\Po}h(\cdot, \rv \xi)$ is (uniformly) continuous\footnote{Almost all sample paths $f \mapsto \bb G_{\Po}(f), \forall f \in \cal F$ of the $\Po$-Brownian bridge process $\bb G_{\Po}$ are uniformly continuous on the semi-metric space $(\cal F,d)$ where $d$ is a semi-metric on $\cal F$; see \cite[Lemma 18.15]{vdv1998asymptotic}.} so that the continuous mapping theorem applies. By Slutsky's theorem, it implies that 
    
    {\footnotesize
	\[
    \begin{array}{l}
	\E_{\sqrt{n}[\beta_n \Pnb + (1 - \beta_n) \Pnh]} h(\bmh x_{b,n}, \rv \xi) - \E_{\sqrt{n}\Po} h(\bmh x_{b,n}, \rv \xi) \\
    \quad \quad \dcvg \bb G_{\Po} h(\bm x_0, \rv \xi).
    \end{array}
	\]
	}Therefore, by the squeeze theorem, we have
    {\footnotesize
	\[
	\sqrt{n}[v_{b,n}(\bmh x_{b,n}) - v(\bm x_0)] \dcvg \bb G_{\Po} h(\bm x_0, \rv \xi) \sim N(0, \D_{\Po} h(\bm x_0, \rv \xi)),
	\]
    }because the cumulative distribution function of $N(0, \D_{\Po} h(\bm x_0, \rv \xi))$ is continuous everywhere on $\R$.
	This completes the proof.
\end{proof}

\subsection{Asymptotic Normality of the Optimal Solution}\label{append:asymptotic-normality-optimal-solution}
The asymptotic normality of the optimal solution is established below.

\begin{proposition}[{\bf Asymptotic Normality of Optimal Solution}]\label{prop:asy-normality-s}
	For every $\bmh x_{b,n} \in \hat{\cal X}_{b,n}$ and every $\bm x_0 \in \cal X_0$, if Conditions C1) and C2) in Theorem \ref{thm:asym-properties} hold, $\bmh x_{b,n} \pcvg \bm x_0$, the Jacobian $\grad_{\bm x} h(\bm x_0, \rv \xi)$ exists and is $\Po$-square-integrable such that
	\[
	|h(\bm x_1, \rv \xi) - h(\bm x_2, \rv \xi)| \le \grad_{\bm x} h(\bm x_0, \rv \xi) \|\bm x_1 - \bm x_2\|,~~~\forall \bm x_1, \bm x_2 \in \cal X',
	\]
	and the Hessian $\grad^2_{\bm x} h(\bm x_0, \rv \xi)$ exists and is nonsingular and $\Po$-integrable,
	then we have $\sqrt{n}(\bmh x_{b,n} - \bm x_0) \dcvg N(\bm 0, \bm V_{\bm x_0})$ as $n \to \infty$, where
	\[
    \begin{array}{l}
        \bm V_{\bm x_0} \defeq [\E_{\Po} \grad^2_{\bm x} h(\bm x_0, \rv \xi)]^{-1} \cdot \\
        \quad \E_{\Po} [ \grad_{\bm x} h(\bm x_0, \rv \xi) \grad^\top_{\bm x} h(\bm x_0, \rv \xi) ] \cdot [\E_{\Po} \grad^2_{\bm x} h(\bm x_0, \rv \xi)]^{-\top}.
    \end{array}
	 \tag*{$\square$}
	\]
\end{proposition}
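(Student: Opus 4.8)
The plan is to reduce the Bayesian distributionally robust estimator $\bmh x_{b,n}$ to the ordinary SAA $M$-estimator $\bmh x_n \in \hat{\cal X}_n$ and then invoke classical asymptotic $M$-estimation theory. Write $v_{b,n}(\bm x) = v_n(\bm x) + \beta_n r_n(\bm x)$ with $r_n(\bm x) \defeq v_{r,n}(\bm x) - v_n(\bm x) \ge 0$. The Jacobian-domination hypothesis forces each map $\bm x \mapsto h(\bm x, \rv \xi)$ to be globally $\|\grad_{\bm x} h(\bm x_0, \rv \xi)\|$-Lipschitz; hence $v_n$ is continuously differentiable with $\grad v_n(\bm x) = \E_{\Pnh}\grad_{\bm x} h(\bm x, \rv \xi)$, and, by Danskin's theorem together with Condition C1, $v_{r,n}$ and therefore $r_n$ is Lipschitz on a fixed compact neighborhood $K$ of $\bm x_0$ with a modulus $\ell_n$ that is bounded in $\Pon$-probability. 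Thus $\beta_n r_n$ is an $O_p(\beta_n)$-Lipschitz perturbation of $v_n$ on $K$.

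First I would establish $\sqrt{n}\,(\bmh x_n - \bm x_0) \dcvg N(\bm 0, \bm V_{\bm x_0})$ by applying the standard asymptotic normality theorem for $M$-estimators, e.g.\ \citet[Thm.~5.23]{vdv2000asymptotic}, to the criterion $\bm x \mapsto -h(\bm x, \rv \xi)$: the Jacobian domination supplies the stochastic Lipschitz envelope condition; the $\Po$-square-integrable Jacobian supplies differentiability in $L^2(\Po)$ with derivative $-\grad_{\bm x} h(\bm x_0, \rv \xi)$; the nonsingular, $\Po$-integrable Hessian supplies the second-order Taylor expansion of $-v$ at $\bm x_0$; and consistency $\bmh x_n \pcvg \bm x_0$ holds under the conditions of Theorem~\ref{thm:asym-properties}. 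Since $\bm x_0$ is an interior minimizer, $\E_{\Po}\grad_{\bm x} h(\bm x_0, \rv \xi) = \grad v(\bm x_0) = \bm 0$, so the representation $\sqrt{n}\,(\bmh x_n - \bm x_0) = -[\E_{\Po}\grad^2_{\bm x} h(\bm x_0, \rv \xi)]^{-1}\,\tfrac{1}{\sqrt n}\sum_{i=1}^n \grad_{\bm x} h(\bm x_0, \rv \xi_i) + o_p(1)$ holds, the multivariate CLT applies to the average (limiting covariance $\E_{\Po}[\grad_{\bm x} h(\bm x_0, \rv \xi)\grad^\top_{\bm x} h(\bm x_0, \rv \xi)]$), and symmetry of the Hessian ($[\,\cdot\,]^{-1} = [\,\cdot\,]^{-\top}$) identifies the asymptotic covariance with $\bm V_{\bm x_0}$. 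As a by-product, the same expansion yields that $v_n$ is locally strongly convex around $\bmh x_n$: $v_n(\bm x) - v_n(\bmh x_n) \ge \tfrac{c}{2}\|\bm x - \bmh x_n\|^2$ on $K$ with probability tending to one, for some fixed $c > 0$ (here $\grad^2 v(\bm x_0) \succeq \bm 0$ by the interior-minimizer property, and nonsingularity upgrades this to $\grad^2 v(\bm x_0) \succ \bm 0$).

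Next I would show $\sqrt{n}\,(\bmh x_{b,n} - \bmh x_n) \pcvg \bm 0$. Both $\bmh x_{b,n}$ (hypothesis) and $\bmh x_n$ (consistency) lie in $K$ eventually. Optimality of $\bmh x_{b,n}$ for $v_{b,n}$ gives $v_{b,n}(\bmh x_{b,n}) \le v_{b,n}(\bmh x_n)$, i.e.\ $v_n(\bmh x_{b,n}) - v_n(\bmh x_n) \le \beta_n[r_n(\bmh x_n) - r_n(\bmh x_{b,n})] \le \beta_n \ell_n\|\bmh x_{b,n} - \bmh x_n\|$; combining with the quadratic-growth bound from the previous paragraph gives $\|\bmh x_{b,n} - \bmh x_n\| \le 2\beta_n\ell_n/c = O_p(\beta_n)$, so $\sqrt{n}\,(\bmh x_{b,n} - \bmh x_n) = O_p(\sqrt{n}\,\beta_n) \pcvg \bm 0$ by Condition C2. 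Slutsky's theorem then gives $\sqrt{n}\,(\bmh x_{b,n} - \bm x_0) = \sqrt{n}\,(\bmh x_{b,n} - \bmh x_n) + \sqrt{n}\,(\bmh x_n - \bm x_0) \dcvg N(\bm 0, \bm V_{\bm x_0})$.

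The main obstacle is the non-smoothness of the DRO term $v_{r,n}$: there is no first-order optimality condition for $\bmh x_{b,n}$ to exploit, and a crude bound using only $\sup_K \beta_n r_n = O_p(\beta_n)$ would give $v_n(\bmh x_{b,n}) - v_n(\bmh x_n) = O_p(\beta_n)$, hence the too-weak rate $\|\bmh x_{b,n} - \bmh x_n\| = O_p(\sqrt{\beta_n}) = o_p(n^{-1/4})$. The fix is the Lipschitz-perturbation estimate above, which is valid only because the Lipschitz modulus $\ell_n$ of $v_{r,n} - v_n$ near $\bm x_0$ is $O_p(1)$; this is precisely where Condition C1 and the Jacobian-domination hypothesis do the work, since $\ell_n \le \sup_{\P \in B_{\epsilon_n}(\Pnh)}\E_{\P}\|\grad_{\bm x} h(\bm x_0, \rv \xi)\| + \E_{\Pnh}\|\grad_{\bm x} h(\bm x_0, \rv \xi)\|$, and the worst-case-expectation term need not stay bounded for an unbounded, rapidly growing $\|\grad_{\bm x} h(\bm x_0, \cdot)\|$ without such control --- one reason the hypotheses here are more restrictive than in Theorem~\ref{thm:asym-properties}. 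The remaining pieces are routine $M$-estimation bookkeeping.
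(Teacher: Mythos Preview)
Your approach is essentially what the paper intends: its entire proof is the one-line remark that the argument is ``routine in light of proofs of \citet[Thm.~5.23]{vdv2000asymptotic} and Theorem~\ref{thm:asym-properties}'', with the sole hint that a $\Po$-square-integrable function is bounded in $\Po$-probability. Your two-step decomposition --- asymptotic normality of the SAA minimizer $\bmh x_n$ via Theorem~5.23, then $\sqrt{n}(\bmh x_{b,n}-\bmh x_n)=O_p(\sqrt{n}\,\beta_n)=o_p(1)$ from a Lipschitz-perturbation/quadratic-growth bound under C2 --- is a natural and considerably more explicit execution of that plan, and you correctly isolate the one nonroutine point (boundedness in probability of the DRO Lipschitz modulus $\sup_{\P\in B_{\epsilon_n}(\Pnh)}\E_{\P}\|\grad_{\bm x}h(\bm x_0,\rv\xi)\|$), which the paper's sketch does not address either.
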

\begin{proof}
	The proof is routine in light of proofs of \cite[Thm.~5.23]{vdv1998asymptotic} and Theorem \ref{thm:asym-properties}, and thus, omitted. Just note that a $\Po$-square-integrable function is bounded in $\Po$-probability.
\end{proof}

\subsection{Proof of Theorem \captext{\ref{thm:gen-err}}}\label{append:proof-gen-err}
\begin{proof}
	For every given $\bm x$, if $v_n(\bm x) \ge v(\bm x)$, the first inequality holds for all $\beta_{n, \bm x} \in [0, 1]$ because $v_{r, n}(\bm x) \ge v(\bm x)$ and $v_{r, n}(\bm x) \ge v_n(\bm x)$; note that $\beta_{n, \bm x}$ depends on $\bm x$; if $v_n(\bm x) < v(\bm x)$, the first inequality holds for some $\beta_{n, \bm x} \in [0, 1]$. Therefore, for every $\bm x$, there exists $\beta_{n, \bm x} \in [0, 1]$ such that the inequality 
    {\footnotesize
    $$
    v(\bm x) \le \beta_{n, \bm x} v_{r,n}(\bm x) + (1 - \beta_{n, \bm x}) v_n(\bm x)
    $$
    }holds $\bm x$-point-wisely. Let $\beta^*_{n, \bm x}$ denote the smallest value of $\beta_{n, \bm x}$ that satisfies the above display.
    Because $v_n(\bm x) \le v_{r,n}(\bm x)$, by letting $\beta_n \ge \beta^*_n \defeq \max_{\bm x} \beta^*_{n, \bm x}$, the inequality 
    {\footnotesize $$
    v(\bm x) \le \beta_n v_{r,n}(\bm x) + (1 - \beta_n) v_n(\bm x)
    $$ 
    }holds uniformly for all $\bm x$; note that 
    {\footnotesize $$
    \beta^*_{n, \bm x} v_{r,n}(\bm x) + (1 - \beta^*_{n, \bm x}) v_n(\bm x) \le \beta_{n} v_{r,n}(\bm x) + (1 - \beta_{n}) v_n(\bm x).
    $$
    }Since
    {\footnotesize
    \[
        \beta^*_{n, \bm x} = \frac{v(\bm x) - v_n(\bm x)}{v_{r,n}(\bm x) - v_n(\bm x)},~~~\forall \bm x,
    \]
    }$\beta^*_n$ equals the largest value of the right-hand side of the above display. This completes the proof.
\end{proof}

\subsection{Proof of Theorem \ref{thm:unbiasedness}}\label{append:proof-unbiasedness}

\begin{proof}
	For the DRO problem, if $\Po \in B_{\epsilon_n}(\Pnh)$, as is the case in \eqref{eq:wasserstein-concentration}, we have
    {\footnotesize
	\[
    \begin{array}{cl}
    \min_{\bm x} \E_{\Po} h(\bm x, \rv \xi) 
     &\le \E_{\Po} h(\bmh x_{r, n}, \rv \xi) \\
     &\le \max_{\P \in B_{\epsilon_n}(\Pnh)} \E_{\P} h(\bmh x_{r, n}, \rv \xi) \\
     &= \min_{\bm x} \max_{\P \in B_{\epsilon_n}(\Pnh)} \E_{\P} h(\bm x, \rv \xi).
    \end{array}
	\]
	}The above display implies that
	$
	\min_{\bm x} \E_{\Po} h(\bm x, \rv \xi) \le  \E_{\Pon} \Big[\min_{\bm x} \max_{\P \in B_{\epsilon_n}(\Pnh)} \E_{\P} h(\bm x, \rv \xi)\Big].
	$
	Therefore, the DRO model $\min_{\bm x} \max_{\P \in B_{\epsilon_n}(\Pnh)} \E_{\P} h(\bm x, \rv \xi)$ is always a positively biased estimator of $\min_{\bm x} \E_{\Po} h(\bm x, \rv \xi)$, for every $n$ such that $\Po \in B_{\epsilon_n}(\Pnh)$. On the other hand,
	$
	\E_{\Pon} \left[\min_{\bm x} \E_{\Pnh} h(\bm x, \rv \xi)\right] \le \min_{\bm x} \E_{\Po} h(\bm x, \rv \xi), 
	$
	that is, the SAA model $\min_{\bm x} \E_{\Pnh} h(\bm x, \rv \xi)$ is always a negatively biased estimator of $\min_{\bm x} \E_{\Po} h(\bm x, \rv \xi)$, for every $n$.
	
	As for the BDR model, we have
	{\footnotesize\[
	\begin{array}{ll}
		\displaystyle \min_{\bm x} \Big[\beta_n \max_{\P \in B_{\epsilon_n}(\Pnh)} \E_{\P} h(\bm x, \rv \xi) + (1 - \beta_n) \E_{\Pnh} h(\bm x, \rv \xi)\Big]\\
		
		\quad \le \beta_n \displaystyle \max_{\P \in B_{\epsilon_n}(\Pnh)} \E_{\P} h(\bm x, \rv \xi) + (1 - \beta_n) \E_{\Pnh} h(\bm x, \rv \xi) \\
		
		\quad \le \beta_n \displaystyle \max_{\P \in B_{\epsilon_n}(\Pnh)} \E_{\P} h(\bm x, \rv \xi) + (1 - \beta_n) \displaystyle \max_{\P \in B_{\epsilon_n}(\Pnh)} \E_{\P} h(\bm x, \rv \xi) \\
		
		\quad = \displaystyle \max_{\P \in B_{\epsilon_n}(\Pnh)} \E_{\P} h(\bm x, \rv \xi),
	\end{array}
	\]
	}and therefore,
    
	{\footnotesize\[
    \begin{array}{l}
	\displaystyle \min_{\bm x} \Big[\beta_n \max_{\P \in B_{\epsilon_n}(\Pnh)} \E_{\P} h(\bm x, \rv \xi) + (1 - \beta_n) \E_{\Pnh} h(\bm x, \rv \xi)\Big]  \\
	\quad \le \displaystyle \min_{\bm x} \max_{\P \in B_{\epsilon_n}(\Pnh)} \E_{\P} h(\bm x, \rv \xi).
    \end{array}
	\]}The above implies that, for every $n$, the BDR model gives a smaller estimate than the DRO model. (Since the DRO model is always positively biased, this is a desired property of the BDR model.)
	Furthermore, this means that
    
	{\footnotesize\[
	\begin{array}{l}
		\E_{\Pon} \left[ \displaystyle \min_{\bm x} \Big[\beta_n \max_{\P \in B_{\epsilon_n}(\Pnh)} \E_{\P} h(\bm x, \rv \xi) + (1 - \beta_n) \E_{\Pnh} h(\bm x, \rv \xi)\Big] \right] \\
		\quad \le \displaystyle \E_{\Pon} \left[\displaystyle \min_{\bm x} \max_{\P \in B_{\epsilon_n}(\Pnh)} \E_{\P} h(\bm x, \rv \xi) \right],
	\end{array}
	\]
	}that is, the BDR model tends to have a smaller bias than the DRO model.
	In addition, 
    
    {\footnotesize\[
	\begin{array}{ll}
		\displaystyle \min_{\bm x} \Big[\beta_n \max_{\P \in B_{\epsilon_n}(\Pnh)} \E_{\P} h(\bm x, \rv \xi) + (1 - \beta_n) \E_{\Pnh} h(\bm x, \rv \xi)\Big]\\
		
		\quad \ge \displaystyle \min_{\bm x} \Big[\beta_n \E_{\Pnh} h(\bm x, \rv \xi) + (1 - \beta_n) \E_{\Pnh} h(\bm x, \rv \xi) \Big]\\
		
		\quad = \displaystyle \min_{\bm x} \E_{\Pnh} h(\bm x, \rv \xi).\\
	\end{array}
	\]
	}Hence, for every $n$, the BDR model gives a larger estimate than the SAA model. (Since the SAA model is always negatively biased, this is also a desired property of the BDR model.) Furthermore, this means that
    
	{\footnotesize\[
	\begin{array}{l}
		\E_{\Pon} \left[ \displaystyle \min_{\bm x} \Big[\beta_n \max_{\P \in B_{\epsilon_n}(\Pnh)} \E_{\P} h(\bm x, \rv \xi) + (1 - \beta_n) \E_{\Pnh} h(\bm x, \rv \xi)\Big] \right] \\
		\quad \ge \displaystyle \E_{\Pon} \left[\displaystyle \min_{\bm x} \E_{\Pnh} h(\bm x, \rv \xi) \right],
	\end{array}
	\]
	}that is, the BDR model tends to have a smaller bias than the SAA model.
	
	Since for every $\bm x$,
	{\footnotesize\[
    \begin{array}{cl}
    	\E_{\Pon} \Big[ \min_{\bm x} \E_{\Pnh} h(\bm x, \rv \xi) \Big] &\le 
    	\min_{\bm x}\E_{\Po} h(\bm x, \rv \xi) \\
        &\le 
    	\E_{\Pon} \Big[ \min_{\bm x} \max_{\P \in B_{\epsilon_n}(\Pnh)} \E_{\P} h(\bm x, \rv \xi)\Big],
    \end{array}
	\]
	}there exists $\overline{\beta}_{n} \in [0, 1]$ such that 
	{\footnotesize\[
	\begin{array}{cl}
		\displaystyle \min_{\bm x}\E_{\Po} h(\bm x, \rv \xi) &= \overline{\beta}_n \cdot \E_{\Pon} \displaystyle \Big[\min_{\bm x} \max_{\P \in B_{\epsilon_n}(\Pnh)} \E_{\P} h(\bm x, \rv \xi)\Big] + \\
        &\quad \quad \quad (1 - \overline{\beta}_n) \cdot \E_{\Pon} \Big[\displaystyle \min_{\bm x} \E_{\Pnh} h(\bm x, \rv \xi) \Big] \\
        
		&\le \E_{\Pon} \left[\displaystyle \min_{\bm x} \Big[\overline{\beta}_n \displaystyle \max_{\P \in B_{\epsilon_n}(\Pnh)} \E_{\P} h(\bm x, \rv \xi) + \right. \\ 
        &\quad \quad \quad \left.(1 - \overline{\beta}_n) \E_{\Pnh} h(\bm x, \rv \xi) \Big] \right].
	\end{array}
	\]
	}On the other hand, we have
	{\footnotesize\[
	\begin{array}{l}
		\E_{\Pon} \left[\displaystyle \min_{\bm x} \Big[0 \cdot \displaystyle \max_{\P \in B_{\epsilon_n}(\Pnh)} \E_{\P} h(\bm x, \rv \xi) + (1 - 0) \cdot \E_{\Pnh} h(\bm x, \rv \xi) \Big] \right] \\
		\quad = \E_{\Pon} \Big[ \min_{\bm x} \E_{\Pnh} h(\bm x, \rv \xi) \Big] \\
		\quad \le
		\displaystyle \min_{\bm x}\E_{\Po} h(\bm x, \rv \xi).
	\end{array}
	\]
	}Therefore, there exists $\beta_n \in [0, \overline{\beta}_n]$ such that 
	{\footnotesize\[
    \begin{array}{l}
	\displaystyle \min_{\bm x}\E_{\Po} h(\bm x, \rv \xi)
	= \\
    \quad \E_{\Pon} \left[\displaystyle \min_{\bm x} \Big[{\beta}_n \displaystyle \max_{\P \in B_{\epsilon_n}(\Pnh)} \E_{\P} h(\bm x, \rv \xi) + (1 - {\beta}_n) \E_{\Pnh} h(\bm x, \rv \xi) \Big] \right],
    \end{array}
	\]
	}that is, the BDR model is an unbiased estimator of $\min_{\bm x} \E_{\Po} h(\bm x, \rv \xi)$,
	because the function 
    {\footnotesize
	\[
	\beta \mapsto \E_{\Pon} \left[\displaystyle \min_{\bm x} \Big[{\beta} \cdot \displaystyle \max_{\P \in B_{\epsilon_n}(\Pnh)} \E_{\P} h(\bm x, \rv \xi) + (1 - {\beta}) \cdot \E_{\Pnh} h(\bm x, \rv \xi) \Big] \right]
	\]}is increasing and continuous in $\beta \in [0,1]$.
\end{proof}

\section{Appendices of Section \ref{subsec:solution-method}: Proof of Theorem \ref{thm:DRO-reformulation-wasserstein}}\label{append:DRO-reformulation-wasserstein}

Before we provide the formal proof of Theorem \ref{thm:DRO-reformulation-wasserstein} in Appendix \ref{append:DRO-reformulation-wasserstein-sub}, we prepare with preliminary results in Appendices \ref{subsec:monte-carlo}$\sim$\ref{append:finite-supportness}. The key is to reformulate the infinite-dimensional DRO sub-problem into a finite-dimensional optimization.

\subsection{Monte--Carlo Approximation}\label{subsec:monte-carlo}
In the literature, the DRO problem

{\footnotesize
\begin{equation}\label{eq:dro-method-dist-ball}
	\begin{array}{cc}
		\displaystyle \min_{\bm x \in \cal X} \max_{\P} & \E_{\P} h(\bm x, \rv \xi) \\
		s.t. & \Delta(\P, \Pb) \le \epsilon
	\end{array}    
\end{equation}
}can be reformulated to a non-linear finite-dimensional optimization. For details, see Appendix \ref{subsec:wasserstein-reformulation}.

In this subsection, we propose to use a novel Monte--Carlo-based method to solve \eqref{eq:dro-method-dist-ball}. Suppose 
$
	\P \approx \textstyle \sum^m_{j=1} \mu_j  \delta_{\rv \zeta_j},
$ 
where $\{\rv \zeta_j\}_{j\in[m]}$ are samples from $\P$,
$\delta_{\rv \zeta_j}$ is the Dirac measure at $\rv \zeta_j$, and the weights $\{\mu_j\}_{j \in [m]}$ can be determined by, e.g., importance sampling through using an appropriate proposal distribution (e.g., uniform distribution).\footnote{C. M. Bishop and N. M. Nasrabadi, Pattern Recognition and Machine Learning, pp. 532. Springer, 2006.} Likewise, we suppose that the set of observations $\{\rv \xi_i\}_{i\in[n]}$ are sampled from $\Pb$ and their weights are $\{\mub_i\}_{i\in[n]}$ and therefore
$
	\Pb \approx \textstyle \sum^n_{i=1} \mub_i  \delta_{\rv \xi_i}.
$ 
As a result, all integrals in \eqref{eq:dro-method-dist-ball}, i.e., $\E_{\P} h(\bm x, \rv \xi)$ and those involved in $\Delta$ if any,\footnote{Recall the case where $\Delta$ is the Wasserstein distance defined in \eqref{eq:wasserstein-distance}.} can be approximated by weighted sums; the approximations are exact in the weak convergence sense (i.e., sums converge to integrals) if $\min\{n,m\} \to \infty$ due to the law of large numbers. In practice, we may choose large enough values for $n$ and $m$, which however depends on specific problems. As a result, \eqref{eq:dro-method-dist-ball} transforms to

{\footnotesize
\begin{equation}\label{eq:dro-model-discrete}
	\begin{array}{cl}
		\displaystyle \min_{\bm x} \max_{\{\mu_j, \rv \zeta_j\}_{j\in[m]}} & \sum^m_{j = 1} \mu_j h(\bm x, \rv \zeta_j) \\
		s.t. & \Delta(\P, \Pb) \le \epsilon.
	\end{array}    
\end{equation}}When $\Delta$ is the Wasserstein distance, \eqref{eq:dro-model-discrete} transforms to

{\footnotesize
\begin{equation}\label{eq:dro-model-discrete-wasserstein}
	\begin{array}{cl}
		\displaystyle \min_{\bm x} \max_{
			\bm P, \bm \mu, \{\rv \zeta_j\}}  & \sum^m_{j = 1} \mu_j h(\bm x, \rv \zeta_j) \\
		\text{s.t.} & \sum^n_{i = 1} \sum^m_{j = 1} d^p(\xi_i, \zeta_j) \cdot P_{ij} \le \epsilon^p \\
		& \sum^n_{i = 1} P_{ij} = \mu_j, \quad\quad \forall j \in [m] \\
		& \sum^m_{j = 1} P_{ij} = \mub_i, \quad\quad \forall i \in [n] \\
		& P_{ij} \ge 0, \quad\quad\quad\quad\quad \forall i \in [n], \forall j \in [m],
	\end{array} 
\end{equation}
}where $\bm P \defeq \{P_{ij}\}, \forall i \in [n], \forall j \in [m]$ can be seen as a joint distribution whose marginals are $\bm \mu$ and $\bm \mub$, respectively. By eliminating $\bm \mu$, \eqref{eq:dro-model-discrete-wasserstein} is equivalent to

 {\footnotesize
	\begin{equation}\label{eq:dro-model-discrete-wasserstein-simplified}
		\begin{array}{cll}
			 \displaystyle \min_{\bm x} \displaystyle \max_{
				\bm P, \{\rv \zeta_j\}} &  \sum^m_{j = 1} \sum^n_{i = 1} h(\bm x, \rv \zeta_j) \cdot P_{ij}   \\
			s.t. &  \sum^m_{j = 1} \sum^n_{i = 1} d^p(\xi_i, \zeta_j) \cdot P_{ij} \le \epsilon^p \\
			&  \sum^m_{j = 1} P_{ij} = \mub_i, & \forall i \in [n] \\
			& P_{ij} \ge 0, & \forall i \in [n], \forall j \in [m].
		\end{array} 
	\end{equation}
}

The worst-case distribution solving \eqref{eq:dro-model-discrete-wasserstein-simplified} is given below.
\begin{theorem}\label{thm:finite-supportness}
For every given $\bm x$, the worst-case distribution $\P^*$ solving \eqref{eq:dro-model-discrete-wasserstein-simplified} is supported on at most $n+1$ points in $\Xi$, that is, there exist $\{\mu_j, \rv \zeta_j\}_{j \in [n+1]}$ such that $\P^* = \sum^{n+1}_{j = 1} \mu_j \delta_{\rv \zeta_j}$. Moreover, the discrete worst-case distribution $\P^*$ has the following structure
    
    {\footnotesize
	\begin{equation}\label{eq:worst-case-dist-structure}
		\P^* = \mub_{i_0} \cdot [ q \delta_{\rv \xi_{i_0, 1}} + (1-q) \delta_{\rv \xi_{i_0, 2}}] + \sum^{n}_{j = 1, j \ne i_0} \mub_j \delta_{\rv \zeta_j},
	\end{equation}
	}for one $i_0 \in [n]$, where $0 \le q \le 1$ and $\{\rv \zeta_j\}_{j \in [n+1]} = \{\rv \xi_{i_{0, 1}}\} \bigcup \{\rv \xi_{i_{0, 2}}\} \bigcup \{\rv \xi_i\}_{i \in [n]-i_0}$. To be specific, at most one weight $\mub_{i_0}$ of $\Pb$ is split into two weights of $\P^*$ (N.B.: $q$ is the splitting weight), and the other $n-1$ weights of $\Pb$ (i.e., $\{\mub_i\}_{i \in [n] - i_0}$) are directly inherited by $\P^*$.
\end{theorem}
\begin{proof}
	See Appendix \ref{append:finite-supportness}.
\end{proof}

Theorem \ref{thm:finite-supportness} implies that although $\P^*$ and $\Pb$ have slightly different support sets, $\P^*$ is almost determined by the discrete reference distribution $\Pb$.

\textit{Data-Driven Case}:
If $\Pb \defeq \Pnh = \frac{1}{n} \sum^n_{i=1} \delta_{\rv \xi_i}$ and $\mub_j = 1/n, \forall j \in [n]$, \eqref{eq:dro-model-discrete-wasserstein-simplified} gives
{\footnotesize
\begin{equation}\label{eq:dro-model-discrete-wasserstein-data-driven}
	\begin{array}{lll}
		\displaystyle \min_{\bm x} \max_{
			\bm P, \{\rv \zeta_j\}} & \displaystyle \sum^m_{j = 1} \sum^n_{i = 1} h(\bm x, \rv \zeta_j) \cdot P_{ij}   \\
		s.t. &  \sum^m_{j = 1} \sum^n_{i = 1} d^p(\xi_i, \zeta_j) \cdot P_{ij} \le \epsilon^p \\
		& \sum^m_{j = 1} P_{ij} = \frac{1}{n}, \quad\quad\quad\quad \forall i \in [n] \\
		& P_{ij} \ge 0, \quad\quad\quad \forall i \in [n], \forall j \in [m].
	\end{array} 
\end{equation}
}According to Theorem \ref{thm:finite-supportness}, when conducting the optimization \eqref{eq:dro-model-discrete-wasserstein-data-driven}, it is safe to let $m \defeq n+1$. Note that the solution method \eqref{eq:dro-model-discrete-wasserstein-data-driven} includes several existing duality-based methods as special cases, e.g., Corollary 3.3.1 in \cite{chen2020distributionally}, Section 2.2 in \cite{kuhn2019wasserstein}.

\subsection{Proof of Theorem \ref{thm:finite-supportness}}\label{append:finite-supportness}
\begin{proof}
In \eqref{eq:dro-model-discrete-wasserstein-simplified}, we have $n+1$ constraints, and therefore, at most $n+1$ components in $\bm P$ is non-zero. This further implies that, given $m \ge n+1$, at most $n+1$ components of $\bm \mu$ can be non-zero. In other words, the worst-case distribution solving \eqref{eq:dro-model-discrete-wasserstein-simplified} is supported on at most $n+1$ points for every $m \ge n + 1$. The structure of $\P^*$ is straightforward to be verified by contradiction: If there exist two weights of $\Pb$ to be split, then $\P^*$ needs to be supported on at least $n+2$ points, which contradicts the fact that $\P^*$ is supported on at most $n+1$ points.
\end{proof}

\subsection{Proof of Theorem \ref{thm:DRO-reformulation-wasserstein}}\label{append:DRO-reformulation-wasserstein-sub}
\begin{proof}
	For every $\bm x$, suppose $h(\bm x, \rv \xi)$ is continuous in $\rv \xi$ on $\Xi$. Then, for every $\{\mu_j\}_{j \in [m]}$ and $\{\rv \zeta_j\}_{j \in [m]}$, there exists $\mu'_j = 1/m, \forall j \in [m]$ and $\{\rv \xi'_j\}_{j \in [m]}$ such that
    
    {\footnotesize	
    \begin{equation}\label{eq:weighted-mean-representation}
		\sum^m_{j = 1} \mu_j h(\bm x, \rv \zeta_j) = \sum^m_{j = 1} \frac{1}{m} h(\bm x, \rv \xi'_j).
	\end{equation}
	}This is due to the intermediate value theorem of a continuous function. Hence, using the representation \eqref{eq:weighted-mean-representation} of the weighted sum in the objective of \eqref{eq:dro-model-discrete-wasserstein}, according to Theorem \ref{thm:finite-supportness}, we must have $m = n$ and $\mu_i = \mub_i = 1/n$, for every $i \in [n]$. As a result, \eqref{eq:dro-model-discrete-wasserstein-data-driven} reduces to \eqref{eq:dro-model-discrete-wasserstein-special}. Note that in the interior of $\Xi$, concavity implies continuity. This completes the proof.
\end{proof}

\begin{remark}
The proof process above recovers a well-known reformulation for \eqref{eq:dro-model-discrete-wasserstein-data-driven} in \cite[Cor. 3.3.1]{chen2020distributionally}, which requires the concavity of $h(\bm x, \cdot)$, for every $\bm x$. However, we can relax the concavity to the continuity. 
\stp
\end{remark}

\end{document}